%% file: main.tex
\documentclass{article}
\usepackage{iclr2027_conference,times}
\iclrfinalcopy
\usepackage{amsmath,amssymb,amsthm,mathtools}
\usepackage{booktabs,enumitem,microtype,flafter}
\usepackage[ruled,vlined]{algorithm2e}
\usepackage{hyperref,url}
\usepackage{graphicx,placeins,longtable,float}
\hypersetup{pdftitle={DataCupid: Stable Matching Repair for Training Data Acquisition with Delayed Feedback},
  pdfauthor={He Sun, Junyuan Mao, Wenhao Jiang, Mingjun Xiao, Shinan Liu}}
\newtheorem{theorem}{Theorem}
\newtheorem{lemma}{Lemma}
\newtheorem{proposition}{Proposition}
\newtheorem{corollary}{Corollary}
\newtheorem{assumption}{Assumption}
\newcommand{\E}{\mathbb E}
\newcommand{\Prb}{\mathbb P}
\newcommand{\ind}{\mathbf 1}
\newcommand{\cE}{\mathcal E}

\newcommand{\DA}{\operatorname{DA}}
\title{DataCupid: Stable Matching Repair for\\
Training Data Acquisition with\\
Delayed Feedback}
\author{%
He Sun\textsuperscript{1}\quad
Junyuan Mao\textsuperscript{2}\quad
Wenhao Jiang\textsuperscript{3}\quad
Mingjun Xiao\textsuperscript{4}\quad
Shinan Liu\textsuperscript{1}\\[0.6em]
\normalfont\textsuperscript{1}University of Hong Kong\\
\normalfont\textsuperscript{2}National University of Singapore\\
\normalfont\textsuperscript{3}Guangming Laboratory\\
\normalfont\textsuperscript{4}University of Science and Technology of China\\[0.4em]
\normalfont\small\parbox[t]{\dimexpr\textwidth-2\tabcolsep\relax}{\raggedright%
Email: \{\texttt{hesun@hku.hk} \& \texttt{he.sunny.sun@gmail.com}\}, \texttt{junyuanm@u.nus.edu},
\texttt{cswhjiang@gmail.com}, \texttt{xiaomj@ustc.edu.cn}, \texttt{shinan6@hku.hk}}
}
\begin{document}
\maketitle
\fancyhead{}
\begin{abstract}
Matching repair after utility changes can reuse valid observations, but new evaluations must respect matching capacity and may return out of order. We study announced invalidations with bounded, possibly reward-dependent reporting delays and pending evaluations that do not occupy future matching capacity. Residual-load repair (RLR) combines sound outcome certificates with batched matching schedules for unequal sample deficits. Its complete dyadic quota schedule admits a sharp issue-round bound with coefficient $c_b<1.607$ relative to the terminal residual endpoint load, together with separate delay and evidence-lifetime bounds. RLR-P monitors complete issue-order prefixes without changing the sampling plan before acceptance. With a deterministic pure oracle, a single-repair coupling from the same drained initial state guarantees no increase in certification time, issued observations, or time to a reusable checkpoint. We also derive an information-load lower bound. On fixed priority-star markets with unknown Bernoulli gaps, Priority-Search attains the history-adjusted first-order constant for expected certification time as $\delta\downarrow0$, under hypothesis-independent linear history scaling and sublogarithmic reporting delays. Paired simulations and measured-feedback replay quantify the trade-offs: in the replay, RLR-P has lower mean capped update-certification time than a monitored width-based baseline, at higher oracle CPU cost.
\end{abstract}
\section{Introduction}

A matching platform must learn uncertain utilities while respecting the capacity constraints that govern its allocations. Provider priorities couple requesters' choices, so independently identifying each requester's favorite provider need not yield a stable matching \citep{liu2020competing,basu2021decentralized}. With known preferences, requester-proposing deferred acceptance (DA) computes the requester-optimal stable matching \citep{gale1962college}. After a utility update, however, rerunning DA does not resolve uncertainty about the new preferences. We study repair after announced invalidations, where observations outside the notified set remain valid. The goal is to use this retained evidence and new evaluations to certify the updated matching.


\paragraph{Delayed evaluation in a data market.}
One motivating setting is a data marketplace, where buyers compete for access to sellers' datasets \citep{agarwal2019marketplace}. Related crowdsensing markets require learning unknown buyer preferences while maintaining stable trading relationships \citep{sun2023privacy,sun2024crowdsensing}. Evaluating an offer can require training or fine-tuning a model from a fixed checkpoint and measuring held-out performance \citep{ghorbani2019datashapley}, so feedback becomes available only after evaluation completes. A dataset revision can invalidate evaluations of its previous version while leaving those of unchanged offers reusable. We model capacity-limited access grants followed by asynchronous evaluations that do not occupy later access slots. Buyer tasks and starting checkpoints remain fixed while their observations are retained.


\paragraph{Why the location of missing evidence matters.}
Suppose an update leaves $b$ observations to collect on each of $N$ pairs. With disjoint endpoints, all $Nb$ observations can be issued in $b$ rounds; if the pairs share one requester, they require at least $Nb$ rounds. The sample total is identical, but the feasible parallelism differs. This motivates the \emph{residual endpoint load}: the largest total sample deficit incident to any requester or provider. Delays introduce a further challenge. Waiting after every matching can repeatedly incur reporting delay, even for evaluations planned together. Yet using only the results that arrive first can bias the evidence when delays depend on rewards. Efficient repair therefore requires both a feasible sampling schedule and a valid way to recognize when sufficient evidence has returned.


We use \emph{outcome certificates} to make this stopping decision. A sound certificate confirms that every utility matrix within the current confidence bounds yields the same requester-optimal matching. Existing partial-preference, PCOS, and pervasive-matching methods can certify outcomes without resolving every preference comparison \citep{rastegari2014reasoning,athanasopoulos2025pcos,athanasopoulos2026probably}. Building on this interface, we ask: \emph{how do retained observations, matching capacity, and delayed reporting jointly determine the time needed to certify an updated allocation?} The certificate specifies when the evidence is sufficient; the repair protocol organizes how it is collected and incorporated.

First, we develop \emph{residual-load repair} (RLR), which packs unequal sample deficits into feasible matchings using doubling count targets and a single feedback barrier per phase. For a complete dyadic quota schedule ending at target $b$, we prove a sharp issue-round bound of $c_b$ times its terminal residual endpoint load, where $c_b<1.607$; reporting delay contributes a separate term for each nonempty phase. An evidence-lifetime analysis bounds cumulative new samples across updates, crediting observations for as long as they remain valid.


Second, we introduce \emph{prefix-monitored RLR} (RLR-P), which checks the certificate whenever a complete prefix of issued observations becomes available. Such prefixes support valid confidence bounds even when reporting delays depend on rewards. Before certification, the monitor leaves RLR's sampling plan unchanged. With a deterministic, pure oracle, we couple both methods for a single repair from the same initial state with no pending evaluations. RLR-P certifies no later, issues no more observations, and finishes receiving all issued feedback no later than RLR. It serves the certified matching while completing outstanding evaluations and retaining their results for later updates.


Third, we connect repair time to necessary statistical information. A change-of-measure lower bound credits historical evidence and imposes the same endpoint capacities on new observations. We establish achievability on fixed \emph{priority-star} markets, where each requester has a known private fallback and competes for a shared provider within its group. For unknown Bernoulli utilities with nonzero gaps to the fallback, \emph{Priority-Search} tests candidates in public priority order. For hypothesis-independent history sizes scaling linearly with $\log(1/\delta)$ and sublogarithmic reporting delays, it attains the history-adjusted information-load constant for expected certification time as $\delta\downarrow0$. Paired simulations and measured-feedback replay examine how these mechanisms trade earlier certification against additional oracle computation.

\section{Related work}
Stable matching and competing bandits provide the allocation objective and established exploration schemes \citep{gale1962college,roth1992two,liu2020competing,basu2021decentralized,kong2024many,kong2025indifference}. Pervasive inference and partial-preference methods identify stable outcomes without complete rankings \citep{rastegari2014reasoning,aziz2020uncertain,athanasopoulos2025pcos,athanasopoulos2026probably,basu2025superstable}; we use such logic as stopping components. Bipartite edge coloring is classical and also appears in PCOS. Our scheduling result concerns residual multiplicities, delayed phase completion, and their reuse across evidence lifetimes.

Crowdsensing data trading provides a closely related application of matching bandits. \citet{sun2023privacy} combine differential privacy, stable matching, and competing bandits to learn unknown buyer preferences while protecting sellers' data qualities. Their journal extension adds a decentralized mechanism to mitigate competing matching requests \citep{sun2024crowdsensing}. These works address privacy, trading stability, and matching conflicts. The present work studies how retained evidence and matching capacity determine repair time after announced invalidations with delayed feedback.

Combinatorial pure exploration optimizes statistical sampling allocations \citep{chen2017nearly,jourdan2021combinatorial}, while batch complexity measures how adaptivity rounds interact with sample efficiency \citep{tuynman2025batch}. Offline-data best-arm identification explicitly credits historical information \citep{agrawal2023offline}, and biased histories require additional safeguards \citep{yang2025biased}. We assume announced invalidations and exact mean preservation on retained edges. The new element is the matching-capacity load imposed on residual information. Priority-Search builds on threshold identification and ordered good-arm search \citep{locatelli2016thresholding,kano2019good}; cascading and satisficing bandits use related ordered stopping ideas under different feedback and objectives \citep{kveton2015cascading,russo2022satisficing}.

Delayed learning and asynchronous thresholding have established guarantees \citep{joulani2013delayed,lancewicki2021unrestricted,zhong2017asynchronous}. RLR-P differs operationally because return order may depend on reward: only complete issue-order prefixes enter confidence intervals, and the canonical planning state changes only at the original barrier. Nonstationary matching and computational preference repair study related changes \citep{muthirayan2023competing,ghosh2022decentralized,bredereck2019adapting,gangam2026robust}. We focus on repair after exogenous change notifications, with sufficient time between updates for each nonfinal repair to complete.

\section{Problem formulation}\label{sec:model}
We consider $N$ requesters and $K$ providers, where $1\le N\le K$. Each participant can be matched to at most one counterpart per round.
The feasible graph $E$ and each provider's strict priority order
$\succ_j$ are fixed and public. Every edge is acceptable to its provider; requester acceptability is unknown. Write $F_i=\{j:(i,j)\in\cE\}$, $M(i)=0$ for an unmatched requester, and $M^{-1}(j)=0$ for an unmatched provider. The outside option has utility zero. If $\cE$ is empty, all losses are zero.

\begin{assumption}[Exogenous rewards]\label{ass:reward}
The mean sequence $\mu_{i,j,t}\in[-1,1]$ is fixed before learning. An accepted edge $e=(i,j)$ generates $Y_{e,t}\in[-1,1]$ with centered noise conditionally $\sigma$-sub-Gaussian given the full latent history before round $t$. Dependence across edges within a round is permitted. Set $\mu_{i,0,t}=0$.
\end{assumption}
\begin{assumption}[Reporting delay]\label{ass:delay}
Feedback generated at $s$ arrives at the end of $s+d_{e,s}$, with integer $0\leq d_{e,s}\leq d_e<\infty$. Decisions at $t$ use only arrivals with $s+d_{e,s}<t$. Delays may depend on rewards. Acceptance is immediate and supplies no additional reward information; pending evaluation does not occupy future matching capacity. Write $d=\max_e d_e$ for the baseline analysis.
\end{assumption}
\begin{assumption}[Cooperative protocol]\label{ass:protocol}
Public IDs, priorities, graph, and a common clock are available. Proposal messages and interval checks take place between reward-generating rounds. Participants cooperate with scheduled exploration, including temporarily suspending an incumbent so that one designated edge can be sampled. No strategic-compliance or communication-free guarantee is claimed.
\end{assumption}
\begin{assumption}[Announced invalidations]\label{ass:announced}
Means are constant in epochs whose boundaries are fixed before learning. At the start of epoch $h$, a public set $\mathcal C_h\subseteq\cE$ contains every edge whose mean changed; false-positive invalidations are allowed and valid observations outside $\mathcal C_h$ are retained. An evidence lifetime is the interval between notifications involving the same edge. The multi-update guarantee assumes each nonfinal repair is certified and drained before the next notification; a sufficient dwell condition is stated with the repair-time bound. An earlier interruption follows the declared fallback and is not a completed-repair guarantee. 
\end{assumption}
Let $M_h^\star=\DA(\mu_h)$ be requester-proposing DA with a fixed provider-ID tie rule and the outside option preferred at utility zero. This is the requester-optimal outcome for the chosen refinement. 
\paragraph{Performance measures.}
We measure service quality during repair through positive-part
dynamic stable regret and stability violations.
\begin{equation}
 \mathcal R_T=\sum_{t=1}^T\sum_i[\mu_{i,M_t^\star(i),t}-\mu_{i,M_t(i),t}]_+,
 \qquad R_T=\E\mathcal R_T .\label{eq:regret}
\end{equation}
For any feasible $M$, let $B_\varepsilon(M,t)$ contain pairs $(i,j)\in\cE$ with $j\ne M(i)$, $i\succ_j M^{-1}(j)$, and $\mu_{i,j,t}>\mu_{i,M(i),t}+\varepsilon$. Let $I_\varepsilon(M,t)=\{i:\mu_{i,M(i),t}<-\varepsilon\}$ and $U_\varepsilon=|B_\varepsilon|+|I_\varepsilon|$. Then $S_T^\varepsilon=\E\sum_tU_\varepsilon(M_t,t)$ measures stability violations, including individual rationality. Provider rationality is built into the graph. Always $R_T\leq2NT$ and $S_T^\varepsilon\leq N(K+1)T$. Appendix~\ref{sec:certificate} recalls the distinction between small deviation incentives and optimal stable regret.

\section{Certificates and valid evidence}\label{sec:certificates}
For a nonempty interval box $\mathcal B=\prod_{e\in E}[\ell_e,u_e]\subseteq[-1,1]^E$, a \emph{sound} oracle returns $\mathrm{unresolved}$ or a matching equal to $\mathrm{DA}(\nu)$ for every $\nu\in\mathcal B$, under the fixed graph, priorities, and tie refinement. It supplies a stopping decision; the repair protocol selects evaluations. A \emph{resolution radius} $\rho_h>0$ guarantees acceptance of every box containing $\mu_h$ with maximum width at most $\rho_h$. It is needed for the repair-time bound and is not a policy input.

Forced-P combines a canonical trace check with mutual-first-choice contraction. Hybrid adds exhaustive possible-weak-stability testing: a unique surviving matching certifies the refined DA outcome. When the true weakly stable matching is unique, its exclusion margin gives $\rho_h=\beta/3$ (Theorem~\ref{thm:exclusion}). Enumeration is exponential and restricted to small markets. PSM-P uses the published pervasive oracle on certified-positive complete markets. Definitions and proofs appear in Appendices~\ref{sec:outcome},~\ref{app:modular},~\ref{app:exclusion}, and ~\ref{sec:trace}.

Within each validity lifetime, intervals start at $[-1,1]$ or a valid prior and use complete issue-order prefixes (\ref{sec:load}). The empirical Bernstein alternative additionally assumes an unchanged i.i.d. bounded stream within each lifetime (Eq.~\eqref{eq:eb-main}).

\input{sections/load_theory}

\paragraph{General information lower bound.}
For independent Bernoulli edge rewards and different-outcome alternatives $\lambda$, let $D_0(\lambda)$ be the KL divergence of the initial information. Compared markets must have a common conditional reporting channel given issued rewards and history; historical pending feedback must be absent or have an uninformative common conditional channel as specified in Theorem~\ref{thm:warm-lower}. Then necessary expected new counts $x_e$ satisfy
\[
 \sum_e x_e\operatorname{kl}(\mu_e,\lambda_e)
 \geq[\operatorname{kl}(1-\delta,\delta)-D_0(\lambda)]_+.
\]
Minimizing $\mathsf L(x)$ over these constraints gives Eq.~\eqref{eq:history-load}, a necessary elapsed-time bound. The priority-star theorem makes the resulting constant achievable for its fixed structured class with unknown gaps. For general markets, choosing a statistically efficient demand vector, rather than realizing a given vector, remains open. Delay may add costs beyond this information-load lower bound.

\input{sections/load_experiments_main}
\section{Limitations}\label{sec:limitations}
RLR meets prescribed precision but may oversample outcome-irrelevant edges and is not generally statistically optimal. T-Monitor assumes a deterministic pure oracle and covers one repair, giving no multi-update or CPU-time dominance. T-Star requires a fixed priority-star graph, independent Bernoulli streams, hypothesis-independent histories, a common report channel, and sublogarithmic delay. Announced invalidations, nonoccupying jobs, and sufficient dwell delimit the repair theory. The measured replay is a finite empirical distribution, and the supported PSM-P stopping oracle is not full E-PSM.

\section{Conclusion}
Endpoint congestion and report barriers determine how residual evidence becomes repair time. RLR has the sharp $c_b<1.607$ execution coefficient for its complete dyadic schedule; RLR-P safely acts on complete prefixes and exports checkpoints after drainage. Priority-Search attains the information constant on a structured unknown-gap class. General matching instance optimality remains open.

\bibliographystyle{iclr2027_conference}
\bibliography{iclr2027_conference}
\clearpage
\appendix
\input{sections/load_appendix}

\input{sections/load_reference}
\input{sections/outcome_appendix}
\input{sections/exclusion_appendix}
\input{sections/load_experiment_details}
\input{sections/pre_load_experiments}
\input{sections/trace_reference}

\input{sections/trace_proofs}
\input{sections/stability_reference}

\input{sections/new_experiment_details}
\end{document}

%% file: sections/load_theory.tex
\section{Prefix-monitored residual repair}\label{sec:load}
\paragraph{Residual demands and feasible execution.} 
For a nonnegative edge-demand vector $x$, define its \emph{endpoint load} by
\begin{equation}
 \mathsf L(x)=\max\left\{\max_i\sum_jx_{ij},\;\max_j\sum_ix_{ij}\right\}.
 \label{eq:residual-load}
\end{equation}
Thus $b$ missing observations on each of $N$ disjoint edges cost $b$ issue rounds, while the same deficits on $N$ edges incident to one requester cost $Nb$ rounds.

\begin{lemma}[Exact quota realization]\label{lem:quota}
For integer total-count targets $z_e$, let $x_e=[z_e-m_e]_+$, where $m_e$ is the valid completed count. Exactly $\mathsf L(x)$ rounds are necessary and sufficient to issue these observations as feasible matchings. 
If every issued observation reports within $d$ rounds of its issuance, all prescribed feedback is available within at most $d$ further rounds after the final issue.
\end{lemma}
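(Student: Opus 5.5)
The plan is to reduce the statement to König's edge-coloring theorem for bipartite multigraphs. Form the bipartite multigraph $G_x$ on requesters and providers in which edge $e=(i,j)\in\cE$ appears with multiplicity $x_e=[z_e-m_e]_+$. Issuing one observation on $e$ in a round means placing one copy of $e$ in that round's matching. A schedule that issues all prescribed observations in $R$ rounds, each round a feasible matching, is therefore a partition of the edge copies of $G_x$ into $R$ matchings. Such a partition is exactly a proper edge coloring of $G_x$ with $R$ colors. By definition~\eqref{eq:residual-load}, $\mathsf L(x)$ is precisely the maximum degree $\Delta(G_x)$.

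\textbf{Necessity.} Let $v$ be an endpoint attaining the maximum in $\mathsf L(x)$. In each round, $v$ participates in at most one matched pair, so at most one copy incident to $v$ is issued. The $\mathsf L(x)$ copies at $v$ therefore need at least $\mathsf L(x)$ rounds. Completions arriving during the schedule do not reduce the demand, because $x$ is already net of the valid completed counts $m_e$. Pending evaluations also do not occupy capacity (Assumption~\ref{ass:delay}), so no extra rounds are forced either.

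\textbf{Sufficiency.} Apply König's theorem: every bipartite multigraph has chromatic index equal to its maximum degree. I would include the standard alternating-path argument to keep the proof self-contained. Color the copies one at a time using $\Delta$ colors. Suppose copy $(i,j)$ is uncolored. Since each endpoint has fewer than $\Delta$ colored copies, some color $\alpha$ is free at $i$ and some color $\beta$ is free at $j$. If $\alpha=\beta$, use that color directly. Otherwise, take the maximal $\alpha/\beta$-alternating path starting at $j$. Bipartiteness ensures this path cannot end at $i$, since parity would force the final edge at $i$ to be colored $\alpha$, contradicting that $\alpha$ is free there. Swap $\alpha$ and $\beta$ along the path, and then color $(i,j)$ with $\alpha$. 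Color class $c$ becomes the matching issued in round $c$. Edges with $x_e=0$, or an empty $\cE$, give $\mathsf L(x)=0$ and zero rounds, consistent with the statement. Feasibility within $\cE$ holds because every copy is an edge of $\cE$, and providers accept all edges of $\cE$. The cooperative suspension in Assumption~\ref{ass:protocol} allows these designated matchings to be executed.

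\textbf{Feedback timing.} If the final issue happens in round $t_{\mathrm f}$, every issued observation was generated at some $s\le t_{\mathrm f}$. Its report arrives by the end of round $s+d_{e,s}\le t_{\mathrm f}+d$. Hence all prescribed feedback, and in particular every complete issue-order prefix, is available after at most $d$ further rounds.

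The only nontrivial step is sufficiency, that is, the alternating-path recoloring. The point to check carefully is that multiplicities are handled as genuine parallel edges, so that the degree bound is exactly $\mathsf L(x)$ and not the number of distinct edges.
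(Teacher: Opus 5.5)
Your proof is correct and is the paper's argument: expand each edge into $x_e$ parallel copies so the maximum degree is $\mathsf L(x)$, use K\H{o}nig's line-coloring theorem for sufficiency, the one-assignment-per-endpoint-per-round count for necessity, and the delay bound $d$ after the final issue for feedback timing. The only difference is that you re-prove K\H{o}nig's theorem via the alternating-path recoloring, which the paper simply cites.
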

This is the bipartite multigraph edge-coloring identity. Our implementation finds a compressed decomposition without materializing each parallel edge; the coloring fact itself is classical and is also used for unit sweeps in PCOS \citep{athanasopoulos2025pcos}.

\paragraph{Evidence records and repair clocks.}
Within lifetime $g$ of edge $e$, $n^{\mathrm{iss}}_{e,g}$ counts all learning observations designated before their rewards or delays are seen; $n^{\mathrm{arr}}_{e,g}$ counts returned observations; and $n^{\mathrm{pre}}_{e,g}$ is the largest fully returned prefix in issue order. The canonical count $n^{\mathrm{can}}_{e,g}$ is committed at an RLR phase barrier. Thus $n^{\mathrm{can}}_{e,g} \le n^{\mathrm{pre}}_{e,g} \le n^{\mathrm{arr}}_{e,g} \le n^{\mathrm{iss}}_{e,g}$. For example, returns $2,3,4$ remain buffered while return $1$ is missing. All issued observations are charged, including those still pending. We write $n^0_{e,h}$ for the valid count inherited by repair $h$, and reserve $n^0_{e,g}$ for the initial count credited once in the lifetime-level accounting.

The certification time $\tau^{\mathrm{cert}}$ is the number of reward rounds until the policy first receives an accepting oracle answer; it is zero when an initial check accepts. Readiness $\tau^{\mathrm{ready}}$ additionally requires all observations issued by that repair to have returned and their valid prefixes to have been committed. Both times lie in $\{0,1,2,\ldots\}\cup\{\infty\}$. Under an epoch budget $H_h$, we use $\bar\tau_h^{\mathrm{cert}}=\min\{\tau_h^{\mathrm{cert}},H_h\}$ and the analogous readiness cap, together with separate completion indicators. The certified matching is served while any outstanding evaluations finish.

Fix a confidence-record horizon $T_{\mathrm{conf}}$ covering the lifetime indices and within-lifetime prefixes used by the policy. Set
\[
\Lambda_{\mathrm{conf}}
=
\log\frac{2\max\{1,|E|\}T_{\mathrm{conf}}^2}{\delta},
\qquad
r(n)=\sigma\sqrt{\frac{2\Lambda_{\mathrm{conf}}}{n}},
\quad n\ge1.
\]
If $\bar Y_{e,g,n}$ is the mean of the first $n$ issued learning observations in lifetime $(e,g)$, the corresponding raw interval is $[\bar Y_{e,g,n}-r(n),\bar Y_{e,g,n}+r(n)]\cap[-1,1]$. Each evidence view intersects the valid prior and the intervals committed to its own archive within that lifetime.
\begin{theorem}[Complete-prefix validity (T-Prefix)]\label{thm:prefix}
Suppose every lifetime index and within-lifetime prefix used in the analysis is at most $T_{\rm conf}$. If designation as a learning observation is predictable before its reward and delay are observed, then, with probability at least $1-\delta$, every interval formed from every issue-ordered complete prefix contains its current mean. This holds for adaptive sampling and reward-dependent bounded delays, and remains true after intersecting intervals within a lifetime.
\end{theorem}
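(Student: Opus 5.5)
The plan is to prove T-Prefix with a union bound over edges, lifetimes and prefix lengths, applying sub-Gaussian concentration to the issue-ordered reward sequence for each fixed $(e,g)$. The key observation is that delays affect only \emph{when} a prefix becomes visible, not \emph{which} rewards make up the $n$-th issue-ordered prefix. Fix an edge $e$ and a lifetime index $g\le T_{\rm conf}$. Let $\sigma_k$ be the round in which the $k$-th learning observation of $(e,g)$ is issued; it is a stopping time, since designation is predictable. Write $X_k=Y_{e,\sigma_k}-\mu_{e,\sigma_k}$. Within a lifetime the mean is constant (Assumption~\ref{ass:announced}), so $\mu_{e,\sigma_k}$ is simply the current mean $\mu_{e,g}$. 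By Assumption~\ref{ass:reward}, $X_k$ is conditionally $\sigma$-sub-Gaussian given the full latent history before round $\sigma_k$. This conditioning includes all earlier rewards, their delays and all earlier policy decisions. Consequently $(X_k)_k$ is a martingale difference sequence with respect to the filtration $\mathcal G_k$ generated by the latent history up to the issuance of the $k$-th observation, and $\E[e^{\lambda X_k}\mid\mathcal G_{k-1}]\le e^{\lambda^2\sigma^2/2}$.

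Next comes concentration for each fixed $n\le T_{\rm conf}$. The exponential supermartingale $\exp(\lambda\sum_{k\le n}X_k-n\lambda^2\sigma^2/2)$ and Markov's inequality give
\[
\Prb\bigl(|\bar Y_{e,g,n}-\mu_{e,g}|>r(n)\bigr)\le 2\exp\!\bigl(-n r(n)^2/(2\sigma^2)\bigr)=2e^{-\Lambda_{\rm conf}}=\frac{\delta}{\max\{1,|E|\}T_{\rm conf}^2}.
\]
Dependence across edges within a round does not matter, because each bound concerns a single edge stream. If the lifetime $(e,g)$ has fewer than $n$ issued observations, the event is vacuous; I will handle this formally by extending the stream with independent dummy $\sigma$-sub-Gaussian noise. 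A union bound over $e\in E$, $g\le T_{\rm conf}$ and $n\le T_{\rm conf}$ then caps the failure probability at $|E|\cdot T_{\rm conf}\cdot T_{\rm conf}\cdot\delta/(|E|T_{\rm conf}^2)=\delta$. On the complement, every issue-ordered prefix mean is within $r(n)$ of the mean, and since $\mu\in[-1,1]$ the clipped interval also contains it.

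The main obstacle is justifying that reward-dependent delays introduce no selection bias. The interval actually formed by the policy at time $t$ uses the prefix of length $n^{\rm pre}_{e,g}(t)$, which is random and depends on delays and hence on rewards. Because the good event holds \emph{simultaneously for all} $n\le T_{\rm conf}$, the random choice of $n$ is irrelevant: whatever $n$ is used, the interval built from the first $n$ issued rewards is covered. The contents of that prefix are fixed by issue order, which is predictable, and not by arrival order. Arrival-order averaging would be invalid here; the buffering of returns $2,3,4$ while return $1$ is missing is exactly what makes the argument work. Adaptive sampling is absorbed in the same way, through the stopping-time construction of $\sigma_k$.

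Finally, intersections remain valid. Each archived interval within a lifetime is one of the covered prefix intervals, and the valid prior contains the mean by assumption (or is $[-1,1]$). An intersection of sets that each contain $\mu_{e,g}$ contains it too, so on the same good event every evidence view is valid. This completes the plan.
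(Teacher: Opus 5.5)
Your proposal is correct and follows essentially the same route as the paper. Both arguments use per edge--lifetime sub-Gaussian supermartingale concentration on each issue-ordered prefix, a union bound over the $|E|T_{\rm conf}^2$ edge--lifetime--prefix triples, substitution of the random complete-prefix length into the already simultaneous event, and preservation of coverage under intersection. The differences are cosmetic: you index by issue count with predictable stopping times and pad short streams with dummy noise, while the paper uses a round-indexed supermartingale stopped at the earliest of the $n$th issue, the lifetime's end, and the record limit.
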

The simultaneous event covers every prefix length, so it also covers the random length available at a reporting time. We denote this event by $\mathcal G$. Appendix~\ref{app:load} proves the result and gives an anytime alternative; the empirical Bernstein extension uses the additional i.i.d. assumption stated in Section~\ref{sec:certificates}.



\begin{algorithm}[t]
\caption{Prefix-monitored residual-load repair (RLR-P)}\label{alg:rlr-p}
\KwIn{drained valid evidence, incumbent $M^{\rm old}$, invalidations, pure oracle}
Reset only invalidated lifetimes; retain valid canonical and monitor views\;
\For{$z=1,2,4,\ldots$ while the canonical oracle is unresolved}{
 Plan once from the canonical view: $x_e\leftarrow[z-n_e^{\rm can}]_+$ and a fixed exact decomposition\;
 \For{each planned matching block $A_t$}{
  Serve $\operatorname{ins}(M^{\rm old},A_t)$ and register every issue\;
  Cache arrivals; after \emph{each} complete-prefix advance, query the monitor box\;
 \If{certified}{cancel unissued blocks; serve the certified matching; \textbf{break}}
 }
 Drain issued jobs; commit complete prefixes at the canonical barrier\;
 \If{the monitor certified}{return its matching and the drained checkpoint}
 Query the oracle on the original canonical box; stop if it certifies\;
}
\end{algorithm}


\paragraph{The RLR sampling plan.}
RLR starts each repair by invalidating only the notified edge lifetimes. It then uses total-count targets $z=1,2,4,\ldots$. At each target, the canonical record determines the deficits $x_e=[z-n_e^{\mathrm{can}}]_+$. Lemma~1 gives an exact matching decomposition, whose order is fixed before the phase starts. RLR issues the planned evaluations, waits for the remaining reports, commits the completed phase to the canonical archive, and checks the outcome certificate.

\paragraph{Monitoring complete prefixes.}
RLR-P keeps this plan and adds a separate monitor archive. Whenever an issue-order prefix becomes complete, it updates this archive and queries the same deterministic, stateless oracle. Raw observations may be shared, but the interval-intersection archives remain separate. Before acceptance, monitor queries do not modify the canonical targets, block order, or barrier updates. At every original phase barrier, RLR-P retains the oracle check on the canonical box. The monitor continues to run while an uncertified phase is waiting for its remaining reports.

During exploration, the service allocation is $\mathrm{ins}(M^{\mathrm{old}},A_t)$: remove incumbent pairs incident to the exploratory matching $A_t$, then insert $A_t$. A compressed block is executed one reward round at a time, with all issues in a round registered before its returns are processed. At the first acceptance, RLR-P freezes the output, cancels only future unissued rounds, and serves the certified matching. Already issued evaluations finish, and their valid prefixes are committed before the checkpoint is reused. Algorithm~\ref{alg:rlr-p} summarizes this execution; Appendix~\ref{app:a3} gives its event-level conditions.

\begin{theorem}[Single-repair monitoring and causal coupling (T-Monitor)]\label{thm:monitor}
Start RLR and RLR-P from the same drained evidence, incumbent, canonical archive, and scheduler state. Couple their environments by a common conditional kernel whenever latent histories and current actions agree; report dates are fixed at issue. Then, pathwise for this repair,
\begin{equation}
 \tau^{\rm cert}_{\rm P}\le\tau^{\rm cert}_{\rm RLR},\qquad
 P^{\rm iss}_{\rm P}\le P^{\rm iss}_{\rm RLR},
 \label{eq:monitor-dominance}
\end{equation}
and RLR-P's issue sequence is a prefix of RLR's. The inequalities also hold for the capped quantities. With report times fixed at issue, draining gives $\tau^{\rm ready}_{\rm P}\le\tau^{\rm ready}_{\rm RLR}$. On the T-Prefix event every RLR-P output is correct.
\end{theorem}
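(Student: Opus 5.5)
The proof is a synchronous coupling run round by round. The key invariant is that, up to RLR-P's first acceptance, the two executions take identical actions. We then use this to compare stopping times and issue counts, and finally invoke T-Prefix for correctness.

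\emph{Step 1 (identical trajectories before acceptance).} Let $\sigma$ be the first reward round after which RLR-P's monitor or canonical check accepts; set $\sigma=\infty$ if no such round exists. We show by induction on $t\le\sigma$ that the two executions agree on the following objects at the start of round $t$:
\begin{itemize}
\item the canonical archives, targets $z$, and planned decompositions;
\item the issued blocks $A_t$ and service allocations $\operatorname{ins}(M^{\rm old},A_t)$;
\item the latent histories.
\end{itemize}
At $t=1$ this is the hypothesis, since both start from the same drained state. For the inductive step, three facts do the work:
\begin{itemize}
\item Before acceptance, monitor queries never modify canonical targets, block order, or barrier commits. So RLR-P's planning state is a function of the canonical record alone, exactly as in RLR.
\item The exact decomposition of Lemma~\ref{lem:quota} is fixed deterministically from that record. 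Hence the actions in round $t$ coincide.
\item Given equal histories and actions, the common conditional kernel produces equal rewards and, since report dates are fixed at issue, equal arrival dates.
\end{itemize}
Barrier waits are therefore of equal length, and the canonical commits coincide. The oracle is deterministic and pure, so the canonical checks at each barrier return identical answers in the two runs.

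\emph{Step 2 (dominance).} Any round at which RLR accepts is a barrier canonical check. RLR-P performs the same check on the same box, so it accepts there at the latest. Hence $\sigma\le\tau^{\rm cert}_{\rm RLR}$, which gives $\tau^{\rm cert}_{\rm P}\le\tau^{\rm cert}_{\rm RLR}$.

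RLR-P issues exactly RLR's issues up to $\sigma$ and then cancels only unissued rounds. Its issue sequence is therefore a prefix of RLR's, and $P^{\rm iss}_{\rm P}\le P^{\rm iss}_{\rm RLR}$ follows. Capping at $H_h$ is monotone, so the capped inequalities hold too.

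For readiness, RLR-P's outstanding jobs are a subset of RLR's issued jobs, and they carry identical fixed report dates. RLR's readiness requires all its own issued jobs to return, and RLR-P's drain completes when the last of its subset returns. So RLR-P's last arrival is no later than RLR's, and each run commits its prefixes immediately on draining. This yields $\tau^{\rm ready}_{\rm P}\le\tau^{\rm ready}_{\rm RLR}$.

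\emph{Step 3 (correctness).} Both the monitor and canonical archives intersect the valid prior with intervals built from complete issue-order prefixes. The designation of an observation as a learning observation is fixed by the plan before its reward is seen, so it is predictable. On $\mathcal G$, Theorem~\ref{thm:prefix} then puts $\mu_h$ in every box that is queried. Soundness of the oracle gives output $\DA(\nu)$ for all $\nu$ in the box, in particular $\DA(\mu_h)=M_h^\star$.

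The main obstacle is Step 1. We must check carefully that no monitor-side event leaks into the planning state before acceptance. Concretely:
\begin{itemize}
\item Shared raw observations feed both archives but are committed canonically only at barriers.
\item Within-round processing, where all issues are registered before returns, is identical in the two runs.
\item The kernel coupling is applied only while histories agree, which the induction guarantees up to $\sigma$.
\end{itemize}
I would formalize this through the event-level conditions of Appendix~\ref{app:a3}.
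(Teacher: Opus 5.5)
Your proposal is correct and follows essentially the same argument as the paper. Both proofs use:
\begin{itemize}
\item an induction over issue and arrival events, showing identical canonical states and issued prefixes until RLR-P's first acceptance;
\item determinism and statelessness of the oracle, so the retained barrier checks agree without any monotonicity of the acceptance set;
\item cancellation of only the unissued suffix, which gives the prefix and count bounds;
\item issue-time report dates for readiness;
\item T-Prefix plus oracle soundness for correctness.
\end{itemize}
One detail in the readiness step should be made explicit. Readiness is the later of certification and the last return, and RLR's readiness coincides with its barrier certification. Your drain comparison should therefore be combined with $\tau^{\rm cert}_{\rm P}\le\tau^{\rm cert}_{\rm RLR}$.
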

The comparison holds for one repair from common initial evidence. Later repairs may begin with different retained histories, and the additional oracle calls are measured separately from reward-round costs in Section~\ref{sec:loadexp}.

\paragraph{Load, waiting, and lifetime accounting.}
For $b=2^J$, write $c_b=b^{-1}+\sum_{k=1}^J(2^k-1)^{-1}$ and
$c_\star=\sum_{k\ge1}(2^k-1)^{-1}<1.607$. Let
$\Delta_E=\mathsf L(\mathbf1_E)$ denote the maximum feasible-graph degree.
Let $\rho_h>0$ resolve the oracle in epoch $h$ and define
\begin{equation}
 q_h=\max\{1,\lceil8\sigma^2\Lambda_{\rm conf}/\rho_h^2\rceil\},\quad
 b_h=2^{\lceil\log_2q_h\rceil},\quad B_h=1+\log_2b_h,
 \qquad D_{e,h}=[b_h-n^0_{e,h}]_+ .
 \label{eq:load-threshold}
\end{equation}
The bound applies only while the epoch remains unchanged and the confidence
record envelope covers every lifetime and prefix through $b_h$; an external
notification or horizon interruption yields a capped/fallback record rather
than a completed-repair guarantee.
\begin{theorem}[Residual-load and lifetime bounds]\label{thm:load}
On the T-Prefix event, an RLR-P repair starting drained certifies and becomes ready by
\begin{equation}
 \tau_h^{\rm cert}\le\tau_h^{\rm ready}
 \le \min\{c_{b_h}\mathsf L(D_h),\,b_h\Delta_E\}+dB_h. \label{eq:residual-rounds}
\end{equation}
For a nonempty phase with $Q_s$ actual issue rounds, let observation $(e,k)$ be issued in relative round $u_{e,k}$ with realized delay $D_{e,k}$, let $l_e=\max_k u_{e,k}$, and let $A_s$ contain the edges actually issued. Its exact post-issue wait and useful bounds are
\begin{equation}
 W_s=\left[\max_{e,k}(u_{e,k}+D_{e,k})-Q_s\right]_+
 \le\max_{e\in A_s}[d_e-(Q_s-l_e)]_+
 \le\max_{e\in A_s}d_e. \label{eq:fine-wait}
\end{equation}
If each nonfinal repair drains before the next notification and valid observations are retained, let $b^{\max}_{e,g}$ be the largest $b_h$ over repairs intersecting lifetime $g$ of edge $e$. With $n^0_{e,g}$ denoting the valid count initially credited to that lifetime, the new observations satisfy
\begin{equation}
 P_{e,g}^{\rm new}\le[b^{\max}_{e,g}-n^0_{e,g}]_+,
 \qquad P^{\rm new}\le\sum_{e,g}[b^{\max}_{e,g}-n^0_{e,g}]_+,
 \label{eq:load-lifetime}
\end{equation}
\end{theorem}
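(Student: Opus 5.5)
We prove the three displays separately, each on the T-Prefix event $\mathcal G$ of Theorem~\ref{thm:prefix}. For \eqref{eq:residual-rounds}, first observe that RLR-P never certifies later than RLR. This is Theorem~\ref{thm:monitor}, and the same coupling gives $\tau^{\rm ready}_{\rm P}\le\tau^{\rm ready}_{\rm RLR}$, so it suffices to bound RLR. Once the canonical count on every edge reaches $b_h\ge q_h$, each raw interval has width at most $2r(b_h)\le 2\sigma\sqrt{2\Lambda_{\rm conf}/q_h}\le\rho_h$. Intersection only shrinks intervals, and on $\mathcal G$ each interval contains $\mu_h$. The box then has maximum width at most $\rho_h$ and contains $\mu_h$, so the resolution radius forces acceptance at the barrier of the phase with target $z=b_h$ at the latest.

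Issue rounds are counted phase by phase. Phase $k$ has target $z=2^k$ for $k=0,\dots,J$, where $b_h=2^J$. By Lemma~\ref{lem:quota}, phase $k$ takes exactly $\mathsf L(x^{(k)})$ rounds, with $x^{(k)}_e=[2^k-\max(n^0_e,2^{k-1})]_+$ for $k\ge1$ and $x^{(0)}_e=[1-n^0_e]_+$. The key combinatorial step is the following termwise comparison. Write $D_e=[b_h-n^0_e]_+$. On every edge that phase $k$ touches we have $n^0_e<2^k$, hence $D_e> b_h-2^k$. Also $x^{(k)}_e\le\min\{2^{k-1},D_e\}$, and this can be sharpened to $x^{(k)}_e\le D_e\cdot 2^{k-1}/(b_h-2^{k-1})$.

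The ratio $x^{(k)}_e/D_e$ is maximized at $n^0_e=2^{k-1}$. Its value there is $2^{k-1}/(2^J-2^{k-1})=1/(2^{J-k+1}-1)$. Summing over endpoints gives $\mathsf L(x^{(k)})\le \mathsf L(D)/(2^{J-k+1}-1)$. Summing over $k=1,\dots,J$ gives $\sum_{m=1}^J(2^m-1)^{-1}$. Phase $0$ contributes at most $\mathsf L(D)/b_h$, since $x^{(0)}_e\le D_e/b_h$ whenever $n^0_e=0$. Adding the two yields $c_{b_h}\mathsf L(D)$. The alternative bound $b_h\Delta_E$ follows because the cumulative issues per edge never exceed $b_h$, and a cumulative degree argument caps the total rounds at $b_h\Delta_E$.

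Waiting is handled by Lemma~\ref{lem:quota}. Each of the at most $B_h=J+1$ nonempty phases adds at most $d$ rounds after its final issue, which gives the $dB_h$ term. Readiness at least certification is by definition.

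For \eqref{eq:fine-wait}, take a phase with $Q_s$ issue rounds. Observation $(e,k)$ reports at relative time $u_{e,k}+D_{e,k}\le l_e+d_e$. Subtracting $Q_s$ and taking the positive part gives the middle bound. The last bound uses $l_e\le Q_s$.

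For \eqref{eq:load-lifetime}, fix a lifetime $g$ of edge $e$. Every repair intersecting $g$ starts drained, and the retained canonical count is monotone within the lifetime. Each repair raises the count only up to its own final target, which is at most $b^{\max}_{e,g}$. RLR-P's issues form a prefix of RLR's, so an edge whose count already meets a target is never reissued, and pending jobs finish before reuse. Hence the total count never exceeds $\max\{n^0_{e,g},b^{\max}_{e,g}\}$. New observations are this total minus $n^0_{e,g}$, and summing over $(e,g)$ gives the global bound.

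The main obstacle is the sharp $c_b$ step. The per-edge ratio bound must be applied uniformly across all edges sharing an endpoint, and the dyadic phase-$0$ term must be treated correctly. We would check the worst case $n^0_e=2^{k-1}$ per phase and verify that maximizing the ratio edge by edge gives a valid, not merely heuristic, endpoint-load inequality.
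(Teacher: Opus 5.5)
\paragraph{Assessment.} Your dyadic computation, the wait bounds, and the lifetime telescope follow the paper's proof. Your coefficient $2^{k-1}/(b_h-2^{k-1})$ is the paper's $\alpha_z=z/(2b-z)$ at $z=2^k$. The worry in your last paragraph is settled by monotonicity and positive homogeneity of $\mathsf L$, which turn the coordinatewise bound $x^{(k)}\le\alpha_{2^k}D$ directly into $\mathsf L(x^{(k)})\le\alpha_{2^k}\mathsf L(D)$.

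\paragraph{The gap: the envelope $b_h\Delta_E$.} This step fails as written. A cap of $b_h$ on each edge's cumulative issues only gives $\mathsf L\bigl(\sum_kx^{(k)}\bigr)\le b_h\Delta_E$. The issue time, however, is $\sum_k\mathsf L(x^{(k)})$, and subadditivity runs the wrong way. The difference is the phase-bottleneck overhead $\Omega\ge0$ of Proposition~\ref{prop:r5-bottleneck}. Cumulative caps alone cannot work: on a feasible graph of two disjoint edges ($\Delta_E=1$), a plan that gives each edge $b$ observations, but in different phases, respects the cap yet takes $2b$ rounds.

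The fix is the per-phase cap you already stated, $x^{(k)}_e\le z_k-z_{k-1}$ with $z_k=2^k$ and $z_{-1}=0$. It gives $\mathsf L(x^{(k)})\le(z_k-z_{k-1})\Delta_E$. Summing over $k=0,\dots,J$ then telescopes to $b_h\Delta_E$, which is the paper's argument.

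\paragraph{A smaller point: the reduction to RLR.} Routing the first display through Theorem~\ref{thm:monitor} forces you to bound $\tau_{\rm RLR}$. That bound needs RLR's canonical box at the $b_h$ barrier to contain $\mu_h$. Once RLR-P has certified, the coupled runs take different actions. The T-Prefix event of the RLR-P run does not control RLR's later prefixes, so the link from $\tau_{\rm RLR}$ to the bound is not available on that event.

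Argue directly, as the paper does:
\begin{itemize}
\item RLR-P executes a prefix of the fixed dyadic plan, and that plan reaches the $b_h$ barrier within $\sum_k\mathsf L(x^{(k)})+dB_h$ rounds whatever the rewards.
\item If the monitor stops early, the stop plus its drain ends within the allowance of the current phase.
\item Otherwise RLR-P coincides with RLR through that barrier. There its own canonical box contains $\mu_h$ and has width at most $\rho_h$, so it is accepted.
\end{itemize}
With this substitution and the corrected envelope, your proof goes through.
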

The first term in Eq.~\ref{eq:residual-rounds} exploits the inherited evidence, whereas $b_h\Delta_E$ gives a history-independent issuing bound. Both share the phase-delay term $dB_h$. The coefficient $c_b$ is sharp for complete execution of the prescribed dyadic plan; a successful certificate may stop that plan earlier. Equation~\ref{eq:fine-wait} charges waiting only to issued edges and accounts for delay hidden by later issues. Equation~\ref{eq:load-lifetime} credits valid observations across updates, bounding new sampling by the largest target reached during an evidence lifetime. The same bounds apply to RLR, and $b_h\Delta_E+dB_h$ is a sufficient history-independent dwell condition.

\paragraph{Service cost during repair.}
Let $P_h$ count issued exploratory observations and let $a_h$ count requesters whose incumbent assignment differs from the current target. On $\mathcal G$, service switches to the target at certification, including during the final drain. Inserting exploration changes only assignments incident to the exploratory matching. The resulting pathwise bounds are
\begin{align}
 \mathcal R_T&\le\sum_h\{2a_h\bar\tau_h^{\rm cert}+4P_h\},\label{eq:load-regret-cert}\\
 \mathcal S_T^0&\le(K+2N+1)\sum_h\{a_h\bar\tau_h^{\rm cert}+2P_h\}.\label{eq:load-stability-cert}
\end{align}
Taking expectations adds $2NT\delta$ and $N(K+1)T\delta$, respectively. These are good-event path inequalities followed by failure charges; 
Taking expectations and applying the universal loss bounds on $\mathcal G^c$ adds at most $2NT\delta$ and $N(K+1)T\delta$, respectively. Appendix~\ref{Sec:A1-5} gives the insertion and failure-event arguments.

\subsection{A structured unknown-gap characterization}\label{sec:priority-star}
The preceding results schedule prescribed quotas but do not choose an information-optimal vector. We can close that gap on a fixed finite \emph{priority-star} class. There are disjoint requester groups $I_j$, one unit-capacity shared provider $j$ per group, and for each requester $i\in I_j$ a private provider offering known utility $\theta\in(0,1)$. Candidate edge $(i,j)$ has unknown i.i.d. Bernoulli mean $p_i\ne\theta$; each shared provider has a public strict priority order. Multiple candidates may exceed $\theta$.

Let $a_j$ be the first priority position with $p_i>\theta$, or $\infty$ if none, and let $\mathcal B_j$ contain the prefix through $a_j$, or all of $I_j$ if $a_j=\infty$. The unique stable allocation assigns $j$ to $a_j$ when it exists and assigns every other requester its private fallback. Thus every $i\in\mathcal B_j$ is outcome-pivotal under a one-coordinate alternative across $\theta$.

Suppose the initial data are independent current-distribution prefixes of deterministic, hypothesis-independent sizes $m_i(\delta)$, with no pending reports and $m_i(\delta)/\ell\to c_i$, where $\ell=\log(1/\delta)$. More precisely, the candidate reward streams are independent across edges and issue indices (and independent of the deterministic history); hypotheses share the same conditional reporting channel given issued rewards and the past, pending work does not consume later access capacity, and $d(\delta)=o(\ell)$. Put $r_i=(\operatorname{kl}(p_i,\theta)^{-1}-c_i)_+\ind\{i\in\mathcal B_j\}$ on candidate edge $(i,j)$.

\begin{theorem}[Priority-star information load (T-Star)]\label{thm:priority-star}
For every fixed priority-star graph and every fixed vector $p$ with $p_i\ne\theta$, any procedure uniformly $\delta$-correct over the whole class satisfies
\begin{equation}
 \liminf_{\delta\downarrow0}\frac{\E_p\tau_\delta}{\ell}
 \ge L_\star(p,c):=\mathsf L(r)=\max_j\sum_{i\in\mathcal B_j}
 \left(\operatorname{kl}(p_i,\theta)^{-1}-c_i\right)_+ .
 \label{eq:priority-star-limit}
\end{equation}
Here $\tau_\delta=\tau_\delta^{\rm cert}$ counts new reward rounds. The mean-blind Priority-Search procedure described below is uniformly $\delta$-correct and, on every such fixed instance,
$\lim_{\delta\downarrow0}\E_p\tau_\delta/\ell=L_\star(p,c)$.
If $L_\star=0$, this means $\E_p\tau_\delta=o(\ell)$.
\end{theorem}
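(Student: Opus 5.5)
The proof has two parts: a change-of-measure lower bound specialized to the star structure, and a matching upper bound for Priority-Search.

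\textbf{Lower bound.} For each pivotal $i\in\mathcal B_j$, I will build a one-coordinate alternative $\lambda^{(i)}$. It moves $p_i$ to $\theta\mp\epsilon$ on the other side of $\theta$ and leaves every other coordinate unchanged.

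\begin{enumerate}
\item \emph{The alternative changes the outcome.} If $i<a_j$, then $p_i<\theta$; raising it above $\theta$ moves $a_j$ to $i$. If $i=a_j$, lowering $p_i$ pushes $a_j$ later, possibly to $\infty$. If $a_j=\infty$, raising any $p_i$ creates an assignment. In every case the stable allocation differs.
\item \emph{Apply the history-adjusted lower bound.} I invoke the general information lower bound (Theorem~\ref{thm:warm-lower}). The common reporting channel and the absence of pending reports make the delay kernels cancel in the likelihood ratio. With $x_i$ the expected number of new samples on $(i,j)$, this gives
\[
x_i\operatorname{kl}(p_i,\lambda_i)\ge \operatorname{kl}(1-\delta,\delta)-m_i(\delta)\operatorname{kl}(p_i,\lambda_i).
\]
\item \emph{Take limits.} Divide by $\ell$, let $\delta\downarrow0$, then $\epsilon\downarrow0$. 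This yields $\liminf x_i/\ell\ge r_i$, the negative part being truncated at zero.
\item \emph{Convert samples to time.} Every candidate edge of group $j$ uses the unit-capacity provider $j$, so $\tau_\delta\ge\sum_{i\in I_j}N_i$ pathwise. Taking expectations and the maximum over $j$ gives $\liminf \E\tau_\delta/\ell\ge \mathsf L(r)$. Only the provider side of $\mathsf L$ matters, since each requester has a single candidate edge.
\end{enumerate}

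\textbf{Upper bound.} Priority-Search runs independently in each group, and groups proceed in parallel because their endpoints are disjoint. In priority order, it samples the current candidate $i$ with an anytime threshold test: keep the history prefix, add new samples, and stop when a KL (Chernoff) confidence bound at level roughly $\log(|E|/\delta)$ plus lower-order terms certifies that $p_i$ lies on one side of $\theta$. If the test certifies $p_i<\theta$, move to the next candidate. If it certifies $p_i>\theta$, stop the group and assign $j$ to $i$.

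\begin{enumerate}
\item \emph{Correctness.} A union bound over at most $|E|$ anytime tests gives uniform $\delta$-correctness. Complete issue-order prefixes (T-Prefix) keep this valid under reward-dependent delays.
\item \emph{Waiting cost.} Sampling stays one round per sample. Waiting for each decision's reports adds at most $d(\delta)$ per candidate, so a total of $|I_j|d(\delta)=o(\ell)$.
\item \emph{Sample cost.} Candidates beyond $a_j$ are never sampled. For $i\in\mathcal B_j$, I need the number of new samples to satisfy
\[
\E N_i\le \ell\,(\operatorname{kl}(p_i,\theta)^{-1}-c_i)_++o(\ell).
\]
If $c_i\operatorname{kl}(p_i,\theta)>1$, the history alone decides with only $o(\ell)$ new samples.
\item \emph{Conclusion.} Summing over $\mathcal B_j$ and taking the maximum over $j$ gives the claimed limit, and the $o(\ell)$ statement when $L_\star=0$.
\end{enumerate}

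\textbf{Main obstacle.} The hard step is the expectation bound on $N_i$ for pivotal candidates, including a low-probability wrong early stop. That event has probability at most $\delta$ and costs at most a bounded exploration overhead. I would handle it with the standard KL-LUCB tail argument. The number of samples needed to cross the threshold concentrates around $\ell/\operatorname{kl}$ with sub-linear deviations. Subtracting the deterministic history count $m_i\sim c_i\ell$, which lies on the same concentration scale, leaves a tail sum that is $o(\ell)$. The bound $p_i\ne\theta$ is fixed, which keeps the constants finite.
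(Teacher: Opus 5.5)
Your lower bound is the paper's argument: a one-coordinate alternative across $\theta$ for each $i\in\mathcal B_j$, history KL $m_i(\delta)\operatorname{kl}(p_i,\lambda_i)$ under a common channel, then shared-provider capacity $\sum_{i\in I_j}N_i\le\tau_\delta$. Your Priority-Search, with per-candidate stage cost at most $Z_i=(T_{i,\delta}-m_i)_+ + d(\delta)$, is also the paper's design.

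\textbf{Conclusion of the upper bound.} Groups run in parallel, so $\tau_\delta$ is a \emph{maximum} over groups of random completion times $G_j$, and $\E_p\max_jG_j\ge\max_j\E_pG_j$. Per-candidate bounds $\E N_i\le\ell r_i+o(\ell)$ only give $\E_p\tau_\delta\le\sum_j\E_pG_j$. That is the total $\sum_j\sum_{i\in\mathcal B_j}r_i$, not the endpoint load $\mathsf L(r)$. So ``summing over $\mathcal B_j$ and taking the maximum over $j$'' is not justified by expectations alone.

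You need each normalized stage time to converge in $L^1$ to a \emph{deterministic} constant. Couple all $\delta$ on one potential stream. The strong law gives $T_{i,\delta}/\ell\to\operatorname{kl}(p_i,\theta)^{-1}$ almost surely, since no fixed prefix crosses the diverging boundary. Uniform integrability follows from the tail $\Prb(T_{i,\delta}>n)\le2e^{-2a^2n}$ for $n\ge(1+\eta)\ell/\operatorname{kl}(p_i,\theta)$, which yields $\E T_{i,\delta}^2=O(\ell^2)$. Because the positive part is $1$-Lipschitz, $(T_{i,\delta}-m_i)_+/\ell\to r_i$ in $L^1$. Lipschitz continuity of finite sums and maxima then gives $\ell^{-1}\E_p\max_j\sum_{i\in\mathcal B_j}Z_i\to\mathsf L(r)$. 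Your remark about ``sub-linear deviations'' points this way, but it must be used in this $L^1$ form, not only to bound each mean.

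\textbf{The error path.} The cost of a wrong decision is not a ``bounded exploration overhead.'' If $a_j$ is wrongly declared negative, the group moves on to candidates after $a_j$, each costing up to order $\ell$ samples. Off the wrong-crossing event $\mathcal E_\delta$, each group visits exactly $\mathcal B_j$, and a visited candidate's stage is at most $Z_i$ whichever way it crosses. This gives the pathwise bound $\tau_\delta\le\max_j\sum_{i\in\mathcal B_j}Z_i+\ind\{\mathcal E_\delta\}\sum_{i\notin\cup_j\mathcal B_j}Z_i$. Control the second term by Cauchy--Schwarz with the same second-moment bound: $\E[\ind\{\mathcal E_\delta\}Z_i]\le\sqrt{\delta}\,(\E Z_i^2)^{1/2}=O(\sqrt{\delta}\,\ell)=o(\ell)$. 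With these two repairs your route coincides with the paper's proof.
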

\begin{corollary}[Priority-star readiness]
Under the assumptions of T-Star, stopping new evaluations after certification
and draining all issued jobs gives the same first-order limit
$L_\star(p,c)$ because $d(\delta)=o(\log(1/\delta))$.
\end{corollary}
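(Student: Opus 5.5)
The corollary follows from T-Star by sandwiching the readiness time between the certification time and the certification time plus one drain. For the lower bound, readiness cannot occur before certification, since $\tau^{\rm cert}\le\tau^{\rm ready}$ by definition. Hence
\[
\liminf_{\delta\downarrow0}\frac{\E_p\tau^{\rm ready}_\delta}{\ell}\ge\liminf_{\delta\downarrow0}\frac{\E_p\tau^{\rm cert}_\delta}{\ell}\ge L_\star(p,c),
\]
where the last inequality is the lower bound in Eq.~\eqref{eq:priority-star-limit}. This bound holds for every uniformly $\delta$-correct procedure, so it applies to the drained variant of any procedure as well.

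For the upper bound we use Priority-Search and stop issuing new evaluations at the certification round. Every observation issued by then was issued no later than $\tau^{\rm cert}_\delta$. By Assumption~\ref{ass:delay} with maximal delay $d(\delta)$, it reports by the end of round $\tau^{\rm cert}_\delta+d(\delta)$. This is the final-issue statement of Lemma~\ref{lem:quota} applied with the last issue at certification. Committing complete prefixes happens between reward rounds (Assumption~\ref{ass:protocol}) and costs no additional reward rounds. Therefore, pathwise,
\[
\tau^{\rm ready}_\delta\le\tau^{\rm cert}_\delta+d(\delta).
\]
If $\tau^{\rm cert}_\delta=\infty$, both sides are infinite. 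This does not affect the limit, because T-Star already gives that $\E_p\tau^{\rm cert}_\delta$ is finite and asymptotically $L_\star\ell$.

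Taking expectations yields $\E_p\tau^{\rm ready}_\delta\le\E_p\tau^{\rm cert}_\delta+d(\delta)$. Dividing by $\ell$ and using $d(\delta)=o(\ell)$ together with the T-Star limit gives $\limsup_{\delta\downarrow0}\E_p\tau^{\rm ready}_\delta/\ell\le L_\star(p,c)$. Combined with the lower bound, the limit equals $L_\star(p,c)$. When $L_\star=0$ the same argument gives $\E_p\tau^{\rm ready}_\delta=o(\ell)$.

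Correctness is unaffected by draining, because the output is frozen at certification. The only point needing care is that draining issues no new observations, so the delay bound applies to a fixed, finite set of jobs whose last issue time is $\tau^{\rm cert}_\delta$. Since pending jobs do not occupy capacity, no other scheduling effect enters. If the history contains pending reports, the T-Star hypothesis excludes them, so every job to be drained was issued within the repair.
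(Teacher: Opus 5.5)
Your proposal is correct and follows essentially the same route as the paper. The paper also sandwiches $\tau^{\rm cert}_\delta\le\tau^{\rm ready}_\delta\le\tau^{\rm cert}_\delta+d(\delta)$, noting in the Priority-Search execution that a final drain adds at most $d=o(\ell)$ rounds and is not part of certification time, so both the T-Star lower bound and the T-Star limit transfer directly; your remarks on the infinite-certification case, between-round commits, and the absence of historical pending reports fill in details the paper leaves implicit.
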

Priority-Search tests candidates in public priority order, stopping a group at its first positive decision and running different groups in parallel. For $M=\sum_j|I_j|$, it scans every complete prefix and freezes a sign at the first $n$ satisfying
\begin{equation}
 n\operatorname{kl}(\widehat p_{i,n},\theta)
 \ge \log\frac{2Mn^2}{\delta}. \label{eq:priority-glr}
\end{equation}
It uses neither gaps nor the unknown relevant prefixes. The statistical ingredients are related to thresholding and good-arm identification \citep{locatelli2016thresholding,kano2019good}; ordered stopping also appears under different bandit objectives \citep{kveton2015cascading,russo2022satisficing}. The contribution here is the resulting history-adjusted matching-capacity constant on this restricted class. It is separate from RLR-P and gives no general matching instance-optimality claim. The full proof, including the error-path expectation and delay accounting, is in Appendix~\ref{app:priority-star}. The earlier public-gap, single-positive fallback benchmark is retained in Appendix~\ref{app:known-gap}.

%% file: sections/load_experiments_main.tex
\section{Experiments}\label{sec:loadexp}
We evaluate the claims at the level at which they are made. The scheduling diagnostics use prescribed residual quotas; stochastic comparisons use paired reward streams and retain every capped trajectory. RLR-P and Width-P share the same prefix-monitor wrapper as their canonical counterparts. All four policies use the same confidence level ($\delta=.01$), horizon, market instance, and evaluator oracle within a cell. Certification time, readiness, issued evaluations, regret, and oracle CPU are reported separately. The implementation uses an exact complete-input cache for the pure oracle: logical queries are retained, and only byte-equivalent full boxes with identical immutable configuration may reuse an answer. Cache hits cannot alter the issue or service sequence; they are a computation accounting optimization rather than a statistical or wall-clock guarantee.
\vspace{-0.2in}
\begin{figure}[!htbp]
\centering
\includegraphics[width=0.88\linewidth]{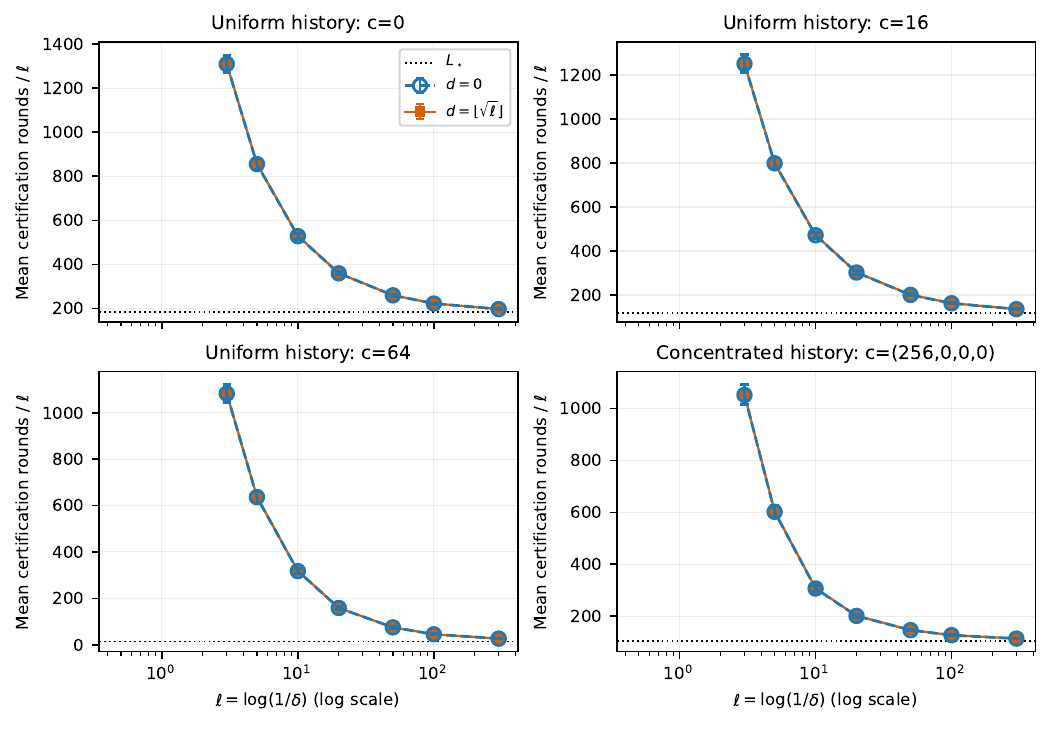}
\caption{Unknown-gap Priority-Search on the fixed priority-star class. The horizontal axis is $\ell=\log(1/\delta)$ on a logarithmic scale; the ordinate is mean new certification rounds divided by $\ell$. Dashed lines show the asymptotic information-load constant $L_\star(p,c)$, not a finite-confidence lower bound. Blue open circles with dashed lines use $d=0$; orange squares with solid lines use the audited $d=\lfloor\sqrt{\ell}\rfloor$ regime. Error bars are percentile 95\% intervals from 10,000 bootstrap resamples of the 100 per-configuration seeds. The curves are numerically distinct but close at the displayed scale; paired differences are audited in Appendix~\ref{app:priority-star}.}
\label{fig:priority-star}
\vspace{-0.2in}
\end{figure}

\paragraph{Prescribed-load and prefix checks.}
The frozen-quota suite contains 900 instances over five market sizes, six demand families, and 30 seeds per cell. Exact decomposition reaches the endpoint-load lower bound on every instance and satisfies the phase wait identity. A greedy decomposition ties the exact issue time on 895/900 instances; its largest observed ratio is 1.0922. Thus the theorem's endpoint-load statement is an exact resource identity, while a broad CPU or regret advantage for exact packing is not supported. These are evaluator-controlled scheduling tests, not identification trials.

\paragraph{Unknown-gap structured characterization.}
This new statistical experiment is separate from the earlier public-gap likelihood benchmark. On the fixed priority-star class, candidate gaps are unknown to the policy, several candidates may exceed the known fallback threshold, and historical count vectors are fixed across alternatives. The released run contains 5,600 trials over the prescribed history, delay, and distribution conditions, with no observed wrong output. At $\ell=300$ and zero history, normalized certification time is 197.487 versus the first-order constant 183.719, revealing remaining finite-confidence overhead. With history calibrated to remove the first-order load, empirical time remains positive due to testing and finite-sample overhead. These observations illustrate the pointwise asymptotic theorem but do not establish nominal error probabilities or general matching instance optimality.

\input{results/load/e5_merged_table}

\paragraph{Native certificate comparison with monitoring.}
The completed E5 suite covers 120 paired markets and 1,680 branches over $N\in\{3,5,8,16\}$, 30 seeds, and budgets $30{,}000$ and $100{,}000$, with no implementation failures and every capped path retained. Table~\ref{tab:e5-merged} reports raw completion counts and capped means. For $N=3$, RLR-P, Width-P, and PCOS-A4 complete respectively 18, 16, and 13 paths at the smaller budget, and 23, 22, and 20 at the larger budget. These finite-budget results do not establish universal dominance or computational speedup.
\vspace{-0.1in}
\input{results/load/final_quality_table}
\input{results/load/final_quality_macros}

\paragraph{Independent histories and measured feedback.}
The frozen ten-split replay is complete (30 tasks, 960 epoch rows, 480 trajectories, no implementation failures). Table~\ref{tab:r5-quality} retains all 30 selected paired paths, including censored updates. After averaging policy seeds within split, RLR-P minus Width-P has update-certification difference $\TableTwoUpdateCertDiff$ rounds (95\% CI $[\TableTwoUpdateCertDiffLo,\TableTwoUpdateCertDiffHi]$), update-regret difference $+\TableTwoUpdateRegretDiff$ (CI $[\TableTwoUpdateRegretDiffLo,\TableTwoUpdateRegretDiffHi]$), and cumulative oracle-CPU difference $+\TableTwoTotalCpuDiff$ seconds (CI $[\TableTwoTotalCpuDiffLo,\TableTwoTotalCpuDiffHi]$). These finite-budget comparisons do not imply deployment speedup or uncensored stopping-time dominance.

\paragraph{Interpretation.}
Exact packing realizes the geometric lower bound, monitoring can reduce one repair's certification cost under the fixed planner, and Priority-Search matches its constant only on the priority-star family. The phase excess $\Omega=\min_v\sum_s(q_s-a_{v,s})$ is a bottleneck change; Appendix~\ref{app:load} separates it from the shadow suffix and drain. The evidence establishes neither RLR-P dominance nor general matching instance optimality.

%% file: results/load/e5_merged_table.tex
\begin{table}[H]
\centering\scriptsize
\caption{E5 same-instance finite-budget comparison. Each cell gives correct certificates/30 and mean capped rounds. All cells retain censored paths; PCOS-A4 is native and PSM-P is separate.}
\label{tab:e5-merged}
\begin{tabular}{llrrrrr}
\toprule
$N$&Budget&RLR-P&Width-P&PCOS-A4&RLR&Width\\\midrule
3&30k&18/30; 18095&16/30; 20005&13/30; 24107&17/30; 19374&13/30; 20935\\
3&100k&23/30; 38130&22/30; 41841&20/30; 55573&23/30; 42815&22/30; 44280\\
5&30k&6/30; 27152&4/30; 28585&1/30; 29917&4/30; 27877&3/30; 28705\\
5&100k&7/30; 82101&7/30; 86256&2/30; 95918&7/30; 84005&7/30; 87581\\
8&30k&0/30; 30000&0/30; 30000&0/30; 30000&0/30; 30000&0/30; 30000\\
8&100k&0/30; 100000&0/30; 100000&0/30; 100000&0/30; 100000&0/30; 100000\\
16&30k&0/30; 30000&0/30; 30000&0/30; 30000&0/30; 30000&0/30; 30000\\
16&100k&0/30; 100000&0/30; 100000&0/30; 100000&0/30; 100000&0/30; 100000\\
\bottomrule
\end{tabular}
\end{table}

%% file: results/load/final_quality_table.tex
\begin{table}[H]
\centering\small
\caption{Quality-shift replay under a fixed two-epoch budget (30 paired trajectories). ``Init'' is epoch~0, ``update'' starts at notification, and ``total'' is the per-trajectory sum before averaging. Censored paths remain; CPU is cumulative oracle decision time in seconds.}
\label{tab:r5-quality}
\begin{tabular}{lrrrrrr}
\toprule
\multicolumn{7}{c}{Certification and readiness time (rounds)}\\
Method&Init cert&Init ready&Update cert&Update ready&Total cert&Total ready\\\midrule
RLR&115238&115238&112978&112978&228216&228216\\
RLR-P&90799&90803&105471&105475&196271&196278\\
Width&99305&99305&114025&114025&213330&213330\\
Width-P&92760&92764&109909&109912&202669&202676\\
\bottomrule
\end{tabular}

\vspace{2pt}
\begin{tabular}{lrrr}
\toprule
\multicolumn{4}{c}{Update/total regret and cumulative oracle CPU}\\
Method&Update regret&Total regret&Total CPU (s)\\\midrule
RLR&16296&42844&0.004\\
RLR-P&16443&36266&35.739\\
Width&15923&38212&0.008\\
Width-P&15631&36679&31.254\\
\bottomrule
\end{tabular}
\end{table}

%% file: results/load/final_quality_macros.tex
\newcommand{\TableTwoUpdateCertDiff}{-4,437}
\newcommand{\TableTwoUpdateCertDiffLo}{-8,856}
\newcommand{\TableTwoUpdateCertDiffHi}{-148}
\newcommand{\TableTwoUpdateRegretDiff}{811}
\newcommand{\TableTwoUpdateRegretDiffLo}{-1,023}
\newcommand{\TableTwoUpdateRegretDiffHi}{3,111}
\newcommand{\TableTwoTotalCpuDiff}{4.48}
\newcommand{\TableTwoTotalCpuDiffLo}{1.35}
\newcommand{\TableTwoTotalCpuDiffHi}{7.49}

%% file: sections/load_appendix.tex
\section{Proofs for prefix-monitored residual repair}\label{app:load}

\subsection{Proof of Lemma~\ref{lem:quota}}
Replace each edge $e$ by $x_e$ parallel copies. The resulting bipartite multigraph has maximum degree $\mathsf L(x)$. K\H{o}nig's line-coloring theorem partitions its edges into $\mathsf L(x)$ matchings, which give a feasible issue schedule of that length. Conversely, an endpoint incident to $\sum_{e\ni v}x_e$ requested observations can serve at most one per round, proving necessity. If the last issue is in round $\mathsf L(x)$ and all delays are at most $d$, all prescribed returns are available by round $\mathsf L(x)+d$. Under a common deterministic delay, neither lower bound can be shortened.

\subsection{Proof of T-Prefix}
Index learning observations by issue order within their true evidence lifetime. Fix an edge $e$ and lifetime $g$. If $X_t$ indicates that round $t$ issues an observation assigned to this edge and lifetime, predictability and Assumption~\ref{ass:reward} make
\[
 Z_s(\lambda)=\exp\!\left\{\lambda\sum_{t\le s}X_t(Y_{e,t}-\mu_{e,g})
 -\frac{\lambda^2\sigma^2}{2}\sum_{t\le s}X_t\right\}
\]
a nonnegative supermartingale in the latent filtration. Stop at the earlier of the $n$th such issue, the lifetime's end, and the global record limit. The usual two-sided Chernoff argument gives
\[
 \Prb\!\left(n\text{ observations exist and }
 |\bar Y_{e,g,n}-\mu_{e,g}|>
 \sigma\sqrt{2\Lambda_{\rm conf}/n}\right)
 \le 2e^{-\Lambda_{\rm conf}}.
\]
There are at most $|\cE|T_{\rm conf}^2$ edge--lifetime--prefix triples, so their union has probability at most $\delta$. On this single event every existing issue-ordered prefix is valid simultaneously. The random complete-prefix count at a report time may depend on the rewards through the delay channel, but substituting it into an already simultaneous event preserves coverage. By contrast, the set of observations that happened to arrive need not be a prefix and is not covered. Intersecting valid intervals cannot remove the true mean.

If a fixed record bound is unavailable and lifetime indices are exogenous, one may instead use
\[
 \Lambda_{g,n}=\log\frac{\pi^4\max\{1,|\cE|\}g^2n^2}{18\delta},
 \qquad r_g(n)=\sigma\sqrt{2\Lambda_{g,n}/n}.
\]
Indeed, summing $2e^{-\Lambda_{g,n}}$ over edges, lifetimes, and prefixes gives exactly at most $\delta$. This anytime alternative does not automatically cover reward-adaptive reset times. It is also distinct from the empirical Bernstein construction, which assumes i.i.d. bounded observations within a lifetime.

\subsection{Proof of T-Monitor} \label{app:a3}
Use the common environment-indexed rewards and issue-time report dates stated in Theorem~\ref{thm:monitor}; statistical invalidation changes a lifetime label but never replays the environment stream. Until the first successful monitor call, the monitor cannot modify the canonical evidence, phase target, decomposition, or block order. Inducting over issue and arrival events therefore shows that RLR-P and RLR have identical issued prefixes and identical canonical states at every completed barrier reached by both.

The extra oracle calls cannot change a later answer because the oracle is a deterministic, stateless function of its input box. If no monitor call succeeds first, both algorithms consequently make the same original call on the same canonical box at each barrier, including when the oracle's acceptance set is nonmonotone under interval intersection. If a monitor call succeeds first, RLR-P cancels only the unissued suffix of the current plan. Its issue sequence is then a prefix of RLR's, and its certification time and issued count are no larger. The same event-prefix argument applies at a finite cap.

Every RLR-P job has the same report time as its RLR counterpart and RLR-P adds no job after certification. Draining its subset therefore finishes no later than RLR's eventual certified and drained run. Finally, on the event of Theorem~\ref{thm:prefix}, both monitor and canonical boxes contain the true mean vector. Soundness of the oracle makes every accepted matching correct. This proof fails for a stateful oracle whose extra calls mutate later behavior, for shared random bits consumed in call order, or for a reporting rule that retroactively changes already-issued jobs in response to later service.

\subsection{Dyadic load, waiting, and lifetime proofs}
For $b=2^J$, let $z_s=2^s$, $z_{-1}=0$, and
\(x_s(e)=[z_s-m_e]_+-[z_{s-1}-m_e]_+\), with $D_e=[b-m_e]_+$. For $s=0$, $x_0(e)\le D_e/b$. For $z_s\ge2$, putting $\alpha_z=z/(2b-z)$, a three-case check on $m_e\ge z$, $m_e\le z/2$, and $z/2<m_e<z$ gives $x_s(e)\le\alpha_zD_e$. Hence
\begin{equation}
 \sum_{s=0}^J\mathsf L(x_s)\le\left(\frac1b+\sum_{k=1}^J\frac1{2^k-1}\right)\mathsf L(D)=c_b\mathsf L(D).
 \label{eq:dyadic-load-app}
\end{equation}
Since $\sum_sx_s=D$, subadditivity gives the reverse inequality $\mathsf L(D)\le\sum_s\mathsf L(x_s)$. The infinite tail is bounded by $2^{-10}/(1-2^{-11})$ after ten terms, yielding $c_b<c_\star<1.607$. Tightness follows from disjoint requester-centered stars: use histories $m=0,z/2$ and choose the number of leaves proportional to $1/(b-m)$ so every phase reaches its coordinate bound. For $b=4$, three stars with histories $0,1,2$ give ratio $19/12$.

For completeness, the same schedule also admits a history-independent envelope.
For every initial count vector $m$ and every phase,
\[
 0\le x_s(e)=[z_s-m_e]_+-[z_{s-1}-m_e]_+
 \le z_s-z_{s-1}.
\]
Monotonicity and positive homogeneity of the endpoint-load functional
$\mathsf L$ give, with $\Delta_E=\mathsf L(\mathbf 1_E)$,
\[
 \mathsf L(x_s)\le
 (z_s-z_{s-1})\mathsf L(\mathbf 1_E)
 =(z_s-z_{s-1})\Delta_E.
\]
Summing over $s=0,\ldots,J$ telescopes to
\[
 \sum_s\mathsf L(x_s)\le b\Delta_E.
\]
Combining this envelope with Eq.~\eqref{eq:dyadic-load-app} yields
\[
 \sum_s\mathsf L(x_s)
 \le \min\{c_b\mathsf L(D),\,b\Delta_E\}.
\]
Adding at most $dB_h$ report-drain rounds gives
Eq.~\eqref{eq:residual-rounds}. Consequently,
$b_h\Delta_E+dB_h$ is a sufficient history-independent dwell condition,
whereas $c_{b_h}\mathsf L(D_h)+dB_h$ is the history-sensitive repair bound.

Fix a repair with initial count $m_e=n^0_{e,h}$ and terminal sufficient target $b=b_h$. At target $z=1$, the increment is $x_1(e)=[1-m_e]_+$. At a later dyadic target $z$, it is
\[
 x_z(e)=[z-m_e]_+-[z/2-m_e]_+.
\]
At $q_h$ observations, every interval has width at most $\rho_h$; hence the canonical oracle accepts no later than $b_h$. RLR follows the full schedule above. RLR-P either does the same or deletes an unissued suffix, so its total issue rounds obey the sharp bound in Eq.~\eqref{eq:dyadic-load-app}. At most $B_h$ nonempty phases are reached and each completed or partially issued final phase drains within $d$ additional rounds. Together with the two envelopes above, this proves Eq.~\eqref{eq:residual-rounds} and its stated dwell conditions.

For the sharper wait calculation, measure issue rounds from one phase's start. The last return occurs at relative time $\max_{e,k}(u_{e,k}+D_{e,k})$, while issuing ends at $Q_s$. This proves the equality in Eq.~\eqref{eq:fine-wait}. Since $u_{e,k}\le l_e$ and $D_{e,k}\le d_e$, subtracting $Q_s$ proves its first inequality; the second is immediate. For an early-stop phase, $Q_s$, $A_s$, and the maximum range only over observations actually issued. Thus a slow edge with zero residual demand contributes no delay term, and issue rounds can hide part of the delay. No claim is made that the chosen block ordering minimizes this expression.

Now fix one edge lifetime $g$. At a drained phase boundary, every issued observation in that lifetime is part of the retained complete record. A phase with target $z$ issues at most the deficit from that record to $z$, and an early stop issues only a prefix of that deficit. Across repairs, lower targets cause no new issue and higher targets telescope from the retained count. Therefore total new issues are at most $[b^{\max}_{e,g}-n^0_{e,g}]_+$. A final truncated phase still issues only part of a target no larger than $b^{\max}_{e,g}$, so the same issue-count bound holds even if its last reports are unavailable at the horizon. Summing proves Eq.~\eqref{eq:load-lifetime}. Discarding pending observations at every early stop would invalidate this telescope and can repay up to a delay-sized suffix after every update.

\subsection{Service allocation and capped-loss proof}\label{Sec:A1-5}
If the exploratory matching $A_t$ has $m_t$ edges, insertion into $M^{\rm old}$ changes the assignments of at most $2m_t$ additional requesters. Since the incumbent differs from $M_h^\star$ on $a_h$ requesters, at most $a_h+2m_t$ assignments differ from the target before certification. Utilities in $[-1,1]$ give a positive-part regret charge at most $2a_h+4m_t$. A nonexploratory pre-certificate round has $m_t=0$; after certification, including during drain, the algorithm serves $M_h^\star$ on the confidence event. Summing through the capped certificate time and using $\sum_tm_t=P_h$ gives Eq.~\eqref{eq:load-regret-cert}.

Only requesters whose assignments differ can create new individual-rationality violations. Any blocking pair that was absent from $M_h^\star$ must involve one of those requesters or a provider whose old or new occupant changed; the latter provider set has size at most twice the changed-requester set. This gives at most $(K+2N+1)(a_h+2m_t)$ violations and proves Eq.~\eqref{eq:load-stability-cert}. On the complement of the simultaneous confidence event, use the universal caps $2NT$ and $N(K+1)T$, then take expectations. If a benchmark starts from an arbitrary incumbent rather than the preceding correct epoch outcome, $a_h$ must instead count its actual assignment differences from $M_h^\star$.

\begin{proposition}[Exact phase-bottleneck overhead]\label{prop:r5-bottleneck}
For phase demands $x_1,\ldots,x_S$, let $D=\sum_sx_s$, $a_{v,s}=\sum_{e\ni v}x_s(e)$ and $q_s=\mathsf L(x_s)$. Then
\[
\sum_sq_s=\mathsf L(D)+\Omega,\qquad
\Omega=\min_v\sum_s(q_s-a_{v,s})\ge0.
\]
Equality holds exactly when one endpoint maximizes every phase. For the complete dyadic schedule, $\Omega\le(c_b-1)\mathsf L(D)$.
\end{proposition}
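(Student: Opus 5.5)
The identity is a bookkeeping statement about vertex loads, so I will argue directly from the definition of $\mathsf L$ in Eq.~\eqref{eq:residual-load}. For each endpoint $v$ (requester or provider), put $A_v=\sum_{e\ni v}D(e)=\sum_s a_{v,s}$; then $\mathsf L(D)=\max_v A_v$. Since $q_s=\mathsf L(x_s)=\max_u a_{u,s}\ge a_{v,s}$ for every $v$ and $s$, the quantity
\[
\sum_s q_s-A_v=\sum_s(q_s-a_{v,s})
\]
is nonnegative for every $v$. Taking the minimum over $v$ is the same as taking the maximum of $A_v$:
\[
\Omega=\min_v\sum_s(q_s-a_{v,s})=\sum_sq_s-\max_vA_v=\sum_sq_s-\mathsf L(D).
\]
This gives the identity, and $\Omega\ge0$ follows because it is a minimum of nonnegative terms. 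The same fact is just the subadditivity already used in Eq.~\eqref{eq:dyadic-load-app}.

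For the equality case I will read ``equality'' as $\Omega=0$, that is, $\sum_sq_s=\mathsf L(D)$. Every summand $q_s-a_{v,s}$ is nonnegative, so $\Omega=0$ holds exactly when some single $v$ has $a_{v,s}=q_s$ for all $s$. That means one endpoint attains the phase maximum in every phase, simultaneously. For the converse, if $v$ maximizes every phase then $\sum_s q_s=A_v\le\mathsf L(D)$. Combined with $\Omega\ge0$, this forces equality, and then $v$ is also a maximizer of $A$.

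For the dyadic schedule, I substitute $x_s(e)=[z_s-m_e]_+-[z_{s-1}-m_e]_+$. Eq.~\eqref{eq:dyadic-load-app} already gives $\sum_s\mathsf L(x_s)\le c_b\mathsf L(D)$, and subtracting $\mathsf L(D)$ gives $\Omega\le(c_b-1)\mathsf L(D)$.

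The only care point is notation. The statement indexes phases $1,\dots,S$, while the dyadic proof indexes them $0,\dots,J$. I will note that the identity does not depend on the indexing and uses only the nonnegativity of each $x_s$, which holds because $[z-m]_+$ is nondecreasing in $z$. I expect no real obstacle here. The main risk is being explicit that the minimizing $v$ and the load-maximizing $v$ coincide, which the displayed equality makes immediate.
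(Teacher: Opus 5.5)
Your proposal is correct and is essentially the paper's proof. For each endpoint you rewrite $\sum_s(q_s-a_{v,s})$ as $\sum_sq_s-\sum_sa_{v,s}$ and minimize over $v$ to obtain $\sum_sq_s-\mathsf L(D)$; you then read off nonnegativity and the equality case term by term from $q_s\ge a_{v,s}$, and subtract $\mathsf L(D)$ from the dyadic bound in Eq.~\eqref{eq:dyadic-load-app}.
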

\begin{proof}
For each $v$, summing $q_s-a_{v,s}$ and taking the minimum gives
$\sum_sq_s-\max_v\sum_sa_{v,s}$. Nonnegativity and the equality condition
follow term by term; the final bound is the sharp dyadic inequality minus
$\mathsf L(D)$.
\end{proof}

\begin{proposition}[Identifiable monitoring suffix]\label{prop:r5-suffix}
Under the single-repair coupling, let a nonintervening shadow monitor first
accept at $\tau_m$, put $t_* =\min\{\tau_m,\tau_R^{\rm cert}\}$ and
$u=\min\{t_*,B\}$. If $A_t$ is the canonical learning matching (empty after
certification), then $P_R(B)-P_P(B)=\sum_{t=u+1}^B|A_t|$. If the monitored
output is correct, the regret difference is the corresponding sum of the
canonical suffix losses and is nonnegative.
\end{proposition}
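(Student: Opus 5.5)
The statement to prove is the last one displayed, Proposition~\ref{prop:r5-suffix} (identifiable monitoring suffix). The plan is to reduce everything to the single-repair coupling of Theorem~\ref{thm:monitor}.

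\textbf{Step 1 (shared trajectory).} Under the common kernel and issue-time report dates, the shadow monitor never intervenes. So RLR-P and RLR, together with the shadow-monitored RLR, share the same canonical learning matchings $A_t$ until RLR-P first accepts. I would argue by induction over issue and arrival events, as in the proof of T-Monitor, that RLR-P's acceptance time equals $\tau_m$ whenever $\tau_m\le\tau_R^{\rm cert}$. This uses that the monitor archive is built from the same complete prefixes, and that the oracle is deterministic and stateless. If instead the canonical barrier check accepts first, both methods stop at $\tau_R^{\rm cert}$. Hence RLR-P stops issuing at $t_*$, while RLR continues issuing $A_t$ until $\tau_R^{\rm cert}$. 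Since $A_t=\emptyset$ after certification by convention, RLR's issued count through budget $B$ is $\sum_{t\le B}|A_t|$. RLR-P's count is $\sum_{t\le \min\{t_*,B\}}|A_t|$. Subtracting gives $P_R(B)-P_P(B)=\sum_{t=u+1}^B|A_t|$. The cap $u=\min\{t_*,B\}$ handles the case where the budget ends first, making the sum empty.

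\textbf{Step 2 (regret difference).} Through round $u$ both methods serve $\operatorname{ins}(M^{\rm old},A_t)$ identically, so their per-round losses cancel. After $u$, RLR-P serves its certified matching. If that output is correct, it equals $M_h^\star$ and incurs zero positive-part loss. RLR instead serves $\operatorname{ins}(M^{\rm old},A_t)$ until $\tau_R^{\rm cert}$ and $M_h^\star$ thereafter. In the latter period it has zero loss, at least on the event where its own certificate is correct. Therefore the difference is $\sum_{t=u+1}^{B}\sum_i[\mu_{i,M_h^\star(i)}-\mu_{i,\operatorname{ins}(M^{\rm old},A_t)(i)}]_+$, restricted to the canonical suffix. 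Each summand is a sum of positive parts, which gives nonnegativity.

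\textbf{Main obstacle.} The delicate point is identifying RLR-P's stopping time with the shadow monitor's first acceptance $\tau_m$ inside the coupled path. The monitor query must be timed at each complete-prefix advance within a round, after all issues of that round are registered. The paper's cancellation semantics must also be respected: cancellation affects only future unissued rounds. I would also be careful about post-certification rounds of RLR that fall within RLR-P's drain period. There, correctness of RLR's own output is needed for its losses to vanish. Otherwise I would state the difference as the canonical suffix losses without further simplification.
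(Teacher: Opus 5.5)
Your proposal is correct and follows the paper's argument: the coupled paths coincide through the first acceptance round $u$, RLR-P deletes only the subsequent unissued rounds and, when its output is correct, serves the target with zero loss, and subtracting the two finite path sums gives both identities. One small point: RLR's own correctness is not needed. Its post-$u$ losses are by definition the canonical suffix losses and are nonnegative as sums of positive parts, so your displayed $\operatorname{ins}(M^{\rm old},A_t)$ formula should be read as truncated at $\tau_R^{\rm cert}$, with RLR's certified-output losses filling the remaining rounds.
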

\begin{proof}
The paths agree through the first acceptance round; only subsequent unissued
rounds are deleted. After acceptance the monitored policy issues no learning
jobs and serves the true target on the stated event. Subtracting the two
finite path sums proves both identities.
\end{proof}

\begin{proposition}[History reduces a fixed load]\label{prop:r5-history}
For a prescribed residual $D$, let valid added history remove
$a_e=\min\{D_e,h_e\}$ and put $D'=D-a$. If
$\ell_v=\sum_{e\ni v}D_e$, $s_v=\mathsf L(D)-\ell_v$, and
$h_v=\sum_{e\ni v}a_e$, then
\[
\mathsf L(D)-\mathsf L(D')=\min_v(s_v+h_v).
\]
The load decreases strictly exactly when every endpoint maximizing
$\mathsf L(D)$ loses positive residual demand.
\end{proposition}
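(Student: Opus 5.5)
The identity follows from the linearity of each endpoint's load in the demand vector, so I would prove it in two steps: compute $\mathsf L(D')$ endpoint by endpoint, then rewrite the difference. Throughout, $v$ ranges over all requesters and providers, which is exactly the index set of the maximum in Eq.~\eqref{eq:residual-load}. Adding endpoints with no incident edges changes nothing, since they contribute load $0$ and the maximum is still attained at $0$ when $D=0$.

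\emph{Step 1 (per-endpoint loads).} Since $0\le a_e=\min\{D_e,h_e\}\le D_e$, the vector $D'=D-a$ is nonnegative, so $\mathsf L(D')$ is defined. For each endpoint $v$, the incident sum is linear:
\[
\sum_{e\ni v}D'_e=\sum_{e\ni v}D_e-\sum_{e\ni v}a_e=\ell_v-h_v .
\]
Taking the maximum over all endpoints gives $\mathsf L(D')=\max_v(\ell_v-h_v)$.

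\emph{Step 2 (the identity).} Because $\mathsf L(D)$ is a constant, subtracting a maximum turns it into a minimum:
\[
\mathsf L(D)-\mathsf L(D')=\min_v\bigl(\mathsf L(D)-\ell_v+h_v\bigr)=\min_v(s_v+h_v).
\]
This is the claimed formula.

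\emph{Step 3 (strict decrease).} This is the only step needing care, because it concerns when the minimum is positive. Every slack satisfies $s_v\ge0$, with $s_v=0$ exactly when $v$ attains $\mathsf L(D)$. Every loss satisfies $h_v\ge0$, with $h_v>0$ exactly when some incident edge has $a_e>0$, that is, when $v$ loses positive residual demand.

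Since there are finitely many endpoints, the minimum is positive iff every term $s_v+h_v$ is positive. Terms at non-maximizing endpoints are already positive because $s_v>0$. Terms at maximizing endpoints reduce to $h_v$. Hence $\mathsf L(D')<\mathsf L(D)$ exactly when every maximizer of $\mathsf L(D)$ has $h_v>0$, which is the stated condition.

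The degenerate cases $E=\emptyset$ and $D=0$ are consistent with this. Then every $\ell_v=0$, every endpoint is a maximizer, every $h_v=0$, and the difference is $0$.
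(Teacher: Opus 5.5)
Your proof is correct and matches the paper's argument. You obtain $\mathsf L(D')=\max_v(\ell_v-h_v)$ from linearity of the incident sums, subtract it from the constant $\mathsf L(D)$ to get $\min_v(s_v+h_v)$, and read off the strictness criterion from $s_v=0$ exactly at the maximizers and $s_v>0$ elsewhere; your checks that $D'\ge0$ and of the degenerate case $D=0$ are details the paper leaves implicit.
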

\begin{proof}
Since $\mathsf L(D')=\max_v(\ell_v-h_v)$, subtracting from $\mathsf L(D)$ gives
the identity. The strictness criterion follows because $s_v=0$ precisely at
the original maximizers and all other slacks are positive.
\end{proof}

\begin{proposition}[Exact-query cache equivalence]\label{prop:r5-memo}
Let a deterministic pure oracle be $O(\xi)$ with complete input $\xi$. Reusing
an immutable answer only for exactly equal $\xi$, while retaining every logical
query, preserves oracle responses, actions, certification, readiness, and loss.
The statement does not allow width-, count-, rounded-, or skipped-box keys.
\end{proposition}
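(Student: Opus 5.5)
The statement to prove is the last one shown, Proposition~\ref{prop:r5-memo} (exact-query cache equivalence). The argument is an induction over the event sequence of one run. I will compare two executions driven by the same coupled environment: the \emph{uncached} run, which evaluates $O(\xi)$ at every logical query, and the \emph{cached} run, which returns a stored answer $O(\xi')$ whenever a prior query had complete input $\xi'=\xi$.

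The key invariant is that, up to the $k$th logical query, both runs have identical histories. By histories I mean issued actions, registered issues, report arrivals, complete-prefix advances, archives, and the sequence of logical queries with their inputs and answers. The proof proceeds in three steps.

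\textbf{Base case.} Both runs start from the same drained state. Under the coupling of Theorem~\ref{thm:monitor}, the environment kernel depends only on the latent history and the current action. So any agreement of actions propagates to identical rewards and issue-time report dates.

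\textbf{Inductive step.} Suppose the histories agree before the $k$th query. Then the query input $\xi_k$ is the same in both runs, because it is a function of the common archives and of the immutable configuration. The uncached run returns $O(\xi_k)$. The cached run either computes $O(\xi_k)$ directly, or finds a stored entry with key exactly $\xi_k$, whose value was computed earlier as $O(\xi_k)$. Since $O$ is deterministic and stateless, these two values coincide. Two facts make this work. First, the oracle has no internal state, so cache hits and misses cannot change any later answer (the same reasoning as the purity step in the proof of T-Monitor). Second, every logical query is retained, so the policy's control flow sees the same sequence of answers, including acceptance events. The policy's next actions (block issue, cancellation, service of $\operatorname{ins}(M^{\rm old},A_t)$ or of the certified matching, drain, commit) are then determined by identical inputs. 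By the coupling, the next environment segment also agrees.

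\textbf{Conclusion.} The induction gives pathwise equality of the full trajectories. Certification time, readiness time, issued counts, service matchings, and therefore $\mathcal R_T$ and the stability losses are all equal pathwise. Their expectations and capped versions are then equal as well.

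The main obstacle, such as it is, lies in the restriction on keys. I would show by a short remark why coarser keys break the argument. A width-, count-, or rounded-box key can map two distinct inputs $\xi\ne\xi'$ to one entry. Because soundness need not be monotone under interval intersection (as noted in the T-Monitor proof), $O(\xi)$ may differ from $O(\xi')$. A skipped query, in turn, removes a logical acceptance event. In either case the inductive step's equality of answers fails. The proof should therefore state explicitly that the key must be the complete input, including the immutable configuration (graph, priorities, tie rule). This guarantees that key equality implies input equality, and hence answer equality.
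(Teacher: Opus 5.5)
Your proposal is correct and follows essentially the same route as the paper: induct over logical query events, noting that a cache hit returns the same pure-function value as recomputation, that a miss evaluates the identical input, and that the cache changes no archive or scheduler state, so all later events coincide. The drained-state and T-Monitor coupling you invoke are more than is needed; a common initial state and common environment randomness for the cached and uncached runs of the same policy suffice.
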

\begin{proof}
Induct over logical query events. A hit returns the same pure function value as
recomputation; a miss evaluates the identical input. The cache changes no
archive or scheduler state, so all subsequent events coincide.
\end{proof}

\subsection{Correct restricted costs and paired evaluation}\label{app:r5-restricted}
For an algorithm $A$, let $T_A$ equal its first accepted-output time when
that output is correct and $+\infty$ otherwise, and define
$X_A(B)=\min\{T_A,B\}$. Pathwise,
\[
 X_A(B)=\sum_{t=0}^{B-1}\mathbf 1\{T_A>t\}.
\]
Hence, if $F_A(t)=\Pr(T_A\le t)$ denotes the correct-completion curve,
\[
 \mathbb E[X_A(B)]=\sum_{t=0}^{B-1}\bigl(1-F_A(t)\bigr).
\]
Thus the restricted mean is the area under the non-completion curve,
equivalently the complement of the correct-completion curve, rather than the
area under $F_A$ itself. A fully served capped path therefore contributes $B$
rather than a missing value. A process failure before $B$ is incomplete and is
retained with its identification interval; no value is constructed from
printed PDF means. Fixed-budget regret is identified whenever the service-block
ledger reaches $B$ and is otherwise reported with its observed prefix and
deterministic loss bound.

\section{Earlier public-gap fallback benchmark}\label{app:known-gap}
This section retains the earlier result as a separate benchmark. Give each requester a private provider of known utility $1/2$ and an arbitrary graph of candidate providers. Candidate $e$ has a public magnitude $0<\Delta_e\le1/4$. The hypothesis class contains the all-negative reference mean $1/2-\Delta_e$ on every candidate and one alternative per edge in which only that edge has mean $1/2+\Delta_e$. Thus at most one candidate is positive in any hypothesis. Initial observations are independent current-distribution samples of deterministic common counts $m_e(\delta)$, no historical report is pending, and all hypotheses use a common conditional reporting channel.

Write $\kappa_e=\operatorname{kl}(1/2-\Delta_e,1/2+\Delta_e)$ and $\ell=\log(1/\delta)$. On the reference market, change of measure gives
\begin{equation}
 \E\tau\ge\mathsf L\left(\left[
 \frac{\operatorname{kl}(1-\delta,\delta)}{\kappa_e}-m_e(\delta)
 \right]_+\right).\label{eq:kl-fallback}
\end{equation}

\begin{theorem}[Public-gap first-order fallback repair]\label{thm:likelihood}
Fix a finite candidate graph and its public gaps. If $m_e(\delta)/\ell\to c_e\in[0,\infty)$ and $d(\delta)=o(\ell)$, there is a uniformly $\delta$-correct procedure on this hypothesis class whose reference-market time satisfies
\begin{equation}
 \lim_{\delta\downarrow0}\frac{\E\tau}{\ell}
 =\mathsf L\bigl(( [\kappa_e^{-1}-c_e]_+ )_e\bigr).
 \label{eq:exact-load-limit}
\end{equation}
The right side lower-bounds every uniformly correct procedure's limit inferior. If it is zero, the conclusion is $\E\tau=o(\ell)$.
\end{theorem}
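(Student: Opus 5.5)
The plan has two halves: a change-of-measure lower bound, and a matching achievability procedure built from per-edge likelihood-ratio tests scheduled by Lemma~\ref{lem:quota}.

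\textbf{Lower bound.} Work on the reference market $\mu_e=1/2-\Delta_e$. For each edge $e$, take the alternative $\lambda^{(e)}$ that flips only $e$ to $1/2+\Delta_e$. This changes the stable outcome, since requester $i$ now prefers $j$ to its private provider and no other candidate is positive. Under the common reporting channel and with no pending history, the data-processing/transportation inequality (as in the general information lower bound) gives
\[
\E[N_e]\,\kappa_e+m_e(\delta)\,\kappa_e\ge \operatorname{kl}(1-\delta,\delta),
\]
where $N_e$ is the number of new issues on $e$. The history term is exact because the histories are independent prefixes of deterministic common size. Each endpoint serves at most one issue per round, so $\E\tau\ge\max_v\sum_{e\ni v}\E N_e$, which is Eq.~\eqref{eq:kl-fallback}. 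Dividing by $\ell$ and using $\operatorname{kl}(1-\delta,\delta)/\ell\to1$ gives the $\liminf$ bound, with $\mathsf L$ continuous and positively homogeneous.

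\textbf{Achievability.} Use a one-shot quota plan rather than dyadic doubling, so the constant stays at $1$ and not $c_b$. Fix $\eta>0$ and set targets $x_e=\lceil(1+\eta)\ell/\kappa_e\rceil$. Issue the deficits $[x_e-m_e]_+$ in $\mathsf L([x-m]_+)$ rounds by Lemma~\ref{lem:quota}, then wait $d(\delta)=o(\ell)$ rounds for reports. For each edge, run the Neyman--Pearson log-likelihood-ratio test between $1/2\mp\Delta_e$ on its complete prefix. Declare positive if the LLR exceeds $\log(2|E|/\delta)$, declare negative if it falls below $-\log(2|E|/\delta)$, and output the private-fallback matching if all edges are declared negative. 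If some edge is undecided, or more than one is declared positive, fall back to a slow but uniformly correct sequential procedure, for example repeated doubling with $\delta/2$.

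Uniform correctness follows from a union bound over the two-point tests. Each error has probability at most $\delta/(2|E|)$ by Ville's/Wald's inequality applied to the likelihood-ratio martingale, which is valid under reward-dependent delays because only issue-order prefixes are used, as in T-Prefix.

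\textbf{Expected time.} On the reference market the LLR of an edge with $x_e$ total samples has drift $x_e\kappa_e\ge(1+\eta)\ell$. Chernoff bounds therefore make the probability of failing the lower threshold at most $\delta^{c\eta^2}$-type small, say $\exp(-c\eta^2\ell)$. The fallback costs $O(\ell\cdot\mathrm{poly}\log)$ in expectation, so its contribution is $o(\ell)$ once multiplied by this probability. Hence
\[
\limsup\E\tau/\ell\le \mathsf L\bigl(((1+\eta)\kappa_e^{-1}-c_e)_+\bigr),
\]
and letting $\eta\downarrow0$ (or choosing $\eta=\eta(\delta)\to0$ slowly) closes the gap.

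\textbf{Main obstacle.} The delicate step is controlling the fallback branch so that its expected cost, times its small probability, is $o(\ell)$ uniformly. This must also handle alternatives where the fallback must still be $\delta$-correct. I would make the fallback a doubling scheme whose error budget is $\delta/2$, bounding its expected length via a geometric tail on the LLR test at each doubling. A second subtlety is the case $c_e\ge\kappa_e^{-1}$ for every edge, where the limit is $0$. There the quota issues only $O(\eta\ell)$ new samples, and the $\eta\downarrow0$ diagonalization yields $\E\tau=o(\ell)$.
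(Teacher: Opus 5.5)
Your proposal is correct and follows essentially the paper's route: the same single-edge change-of-measure bound summed over endpoints for the $\liminf$, and for achievability a one-shot first phase (realized by Lemma~\ref{lem:quota}) to a target slightly above $\ell/\kappa_e$, likelihood-ratio sign tests at threshold $\log(2|E|/\delta)$, and a doubling tail entered with vanishing probability, so that it costs only $o(\ell)$. The one point to tighten is that a fixed $\eta$ followed by $\eta\downarrow0$ gives a family of procedures rather than a single procedure attaining the limit, so the diagonal choice $\eta(\delta)\to0$ with $\eta(\delta)^2\ell\to\infty$ (e.g.\ $\eta\approx\ell^{-1/4}$, which is what the paper's target $\lceil(A+A^{3/4})/\kappa_e\rceil$ amounts to) should be the construction itself rather than a parenthetical; the paper also does without your separate $\delta/2$ fallback by continuing the same sequential test through the doubling phases under one time-uniform maximal inequality.
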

This result assumes public two-point means and analyzes the all-negative reference instance. It is neither the unknown-gap Priority-Search theorem nor an optimality result for RLR-P.

\input{sections/load_finite_benchmark}

\subsection{Full finite-confidence fallback proof}
Candidate providers are distinct from the private providers, each of which accepts only its associated requester. On the reference market every requester strictly prefers its fallback, so the private allocation is uniquely stable. In a single-positive alternative, the corresponding requester and candidate provider must match and every other requester keeps its fallback. Thus each alternative changes the target outcome regardless of candidate priorities.

For alternative $e$, initial KL equals $m_e\kappa_e$ and all other observations have identical laws. The shared-channel change-of-measure inequality gives
$(m_e+\E N_e^{\rm new})\kappa_e\ge\operatorname{kl}(1-\delta,\delta)$.
Summing the resulting necessary counts over every endpoint gives Eq.~\eqref{eq:fallback-lower}. This calculation uses the full initial-data distribution and does not condition on a favorable estimate.

For the upper bound, fix $u_e$ from Theorem~\ref{thm:fallback} before observing new samples. If $m_e>u_e$, use only the first $u_e$ initial observations; otherwise add $u_e-m_e$ observations. Hoeffding's inequality gives error at most $\exp(-2u_e\Delta_e^2)\le\delta/(2|E_c|)$ per edge. A union bound makes every sign correct with probability at least $1-\delta$, and Lemma~\ref{lem:quota} schedules exactly the deficits. Finally,
$\kappa_e=2\Delta_e\log((1+2\Delta_e)/(1-2\Delta_e))\le(32/3)\Delta_e^2$
and $\operatorname{kl}(1-\delta,\delta)\ge\tfrac12\log(1/\delta)$ for $\delta\le.1$. These inequalities yield the stated cold-start factor. If $m_e\le a_e/2$, then $a_e-m_e\ge a_e/2$ gives the same order. For larger histories the lower residual can vanish while the sufficient residual stays positive, so no multiplicative guarantee follows.

\subsection{Proof of Theorem~\ref{thm:likelihood}}
Let $p_e^\pm=1/2\pm\Delta_e$, $v_e=\log(p_e^+/p_e^-)$, and
\[
 S_{e,n}=\sum_{s=1}^n\log\frac{f_{p_e^+}(Y_{e,s})}{f_{p_e^-}(Y_{e,s})}
 =(2\textstyle\sum_{s=1}^nY_{e,s}-n)v_e.
\]
Take $A=\log(2|E_c|/\delta)$ and $z_e=\lceil(A+A^{3/4})/\kappa_e\rceil$. At the initial check and after each completed phase, freeze an unresolved edge as positive if $S_{e,n}\ge A$ and negative if $S_{e,n}\le-A$. Schedule unresolved deficits to $z_e,2z_e,4z_e,\ldots$ with one barrier per phase. Once all signs are frozen, output the hypothesis allocation, using a fixed feasible output for an inconsistent sign pattern.

Under $p_e^-$, $\exp(S_{e,n})$ is a mean-one likelihood-ratio martingale; under $p_e^+$ use $\exp(-S_{e,n})$. The maximal inequality bounds a wrong sign by $e^{-A}$, including a historical crossing, and a union bound gives error at most $\delta/2$. The strong law gives almost-sure eventual termination.

On the reference market, an unresolved edge after phase $r$ has been checked at the deterministic potential prefix length $n_{e,r}=\max\{m_e,2^rz_e\}$. Although a previously resolved edge need not actually be sampled to that length, noncompletion is contained in the corresponding union of potential-sequence events. Since $S_{e,n}$ has drift $-n\kappa_e$ and increments in $[-v_e,v_e]$, Hoeffding's inequality gives, for $n\kappa_e>A$,
\[
 \Prb(S_{e,n}>-A)\le
 \exp\left[-\frac{(n\kappa_e-A)^2}{2nv_e^2}\right].
\]
For fixed graph and gaps there are $C,c>0$ such that the probability $U_r$ of noncompletion after phase $r$ obeys
$\Prb(U_0)\le Ce^{-c\sqrt A}$ and
$\Prb(U_r)\le Ce^{-c2^rA}$ for $r\ge1$. Consequently
\[
 \E\tau\le\mathsf L([z-m]_+)+d
 +\sum_{r\ge1}(2^r\mathsf L(z)+d)\Prb(U_{r-1})
 =\mathsf L([z-m]_+)+o(A).
\]
This summable tail proves an expectation bound. Since $A/\ell\to1$, $z_e/\ell\to\kappa_e^{-1}$, $m_e/\ell\to c_e$, and $d/\ell\to0$, continuity of the finite-dimensional load map gives the upper limit in Eq.~\eqref{eq:exact-load-limit}. Equation~\eqref{eq:kl-fallback} and $\operatorname{kl}(1-\delta,\delta)/\ell\to1$ give the reverse limit inferior, including critical and oversupplied histories.

\input{sections/priority_star_appendix}

%% file: sections/load_finite_benchmark.tex
\subsection{Finite-confidence fallback benchmark}
\begin{theorem}[Information and load in fallback markets]\label{thm:fallback}
Suppose $m_e$ independent completed observations from the current candidate distribution are available initially, with deterministic counts. Invalidated edges have $m_e=0$, and no historical reward is pending. Let $E_c$ be the candidate edges and
\begin{equation}
 a_e=\frac{\operatorname{kl}(1-\delta,\delta)}
 {\operatorname{kl}(1/2-\Delta_e,1/2+\Delta_e)},\qquad
 u_e=\left\lceil\frac{\log(2|E_c|/\delta)}{2\Delta_e^2}\right\rceil.
\end{equation}
On the reference market, every uniformly $\delta$-correct algorithm satisfies
\begin{equation}
 \E\tau\geq\mathsf L(([a_e-m_e]_+)_e).\label{eq:fallback-lower}
\end{equation}
A one-phase procedure, using the public magnitudes and the first $u_e$ current observations per candidate to test which side of $1/2$ its mean lies, is uniformly $\delta$-correct and uses at most
\begin{equation}
 \mathsf L(([u_e-m_e]_+)_e)+d\,\ind\{\exists e:m_e<u_e\}\label{eq:fallback-upper}
\end{equation}
rounds. For $m=0$, $\delta\leq.1$, and immediate feedback, the upper/lower ratio is $O(\log(|E_c|/\delta)/\log(1/\delta))$. The same order holds if $m_e\leq a_e/2$ for every edge. No multiplicative statement is made for arbitrary warm starts.
\end{theorem}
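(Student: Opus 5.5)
The statement has three parts: a lower bound for every uniformly $\delta$-correct algorithm, a uniform-correctness claim for the one-phase test, and an upper bound on its rounds together with the ratio claim. I would prove them in that order.

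\emph{Lower bound.} I would first verify the outcome structure. On the reference market every candidate mean is $1/2-\Delta_e<1/2$, so each requester strictly prefers its private fallback, and the all-fallback allocation is the unique stable outcome. In alternative $e=(i,j)$, requester $i$ strictly prefers $j$. Provider $j$ faces no competing proposer, since every other requester still prefers its own fallback. Hence every alternative has a different target outcome, whatever the priorities. A $\delta$-correct algorithm must therefore output the event $\{\text{output}=\text{reference outcome}\}$ with probability at least $1-\delta$ under the reference and at most $\delta$ under alternative $e$.

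Next I apply the change-of-measure (transportation) inequality with the shared reporting channel. Because no historical report is pending and the reporting kernel is common given issued rewards, the channel contributes zero KL. Only coordinate $e$ differs between the reference and alternative $e$, so the KL equals $(m_e+\E N^{\rm new}_e)\operatorname{kl}(1/2-\Delta_e,1/2+\Delta_e)$. The data-processing inequality then gives
\[
(m_e+\E N^{\rm new}_e)\,\operatorname{kl}(1/2-\Delta_e,1/2+\Delta_e)\ge\operatorname{kl}(1-\delta,\delta),
\]
that is, $\E N^{\rm new}_e\ge[a_e-m_e]_+$. Finally, at each endpoint $v$ at most one new observation is issued per round, so $\E\tau\ge\sum_{e\ni v}\E N^{\rm new}_e$. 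Maximizing over $v$ gives Eq.~\eqref{eq:fallback-lower}.

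\emph{Upper bound.} The one-phase procedure fixes $u_e$ before sampling. If $m_e\ge u_e$ it uses the first $u_e$ historical observations; otherwise it issues the $[u_e-m_e]_+$ deficits using the exact decomposition of Lemma~\ref{lem:quota}. It then waits at most $d$ rounds, and only if some deficit is positive. It declares each edge's sign by comparing the empirical mean with $1/2$. Under either hypothesis the mean sits at distance $\Delta_e$ from $1/2$, so Hoeffding's inequality bounds each sign error by
\[
\exp(-2u_e\Delta_e^2)\le\delta/(2|E_c|).
\]
A union bound makes all signs correct with probability at least $1-\delta$. Correct signs determine the hypothesis and hence the stable allocation. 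This yields Eq.~\eqref{eq:fallback-upper}.

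\emph{Ratio.} I would use $\kappa_e=2\Delta_e\log\frac{1+2\Delta_e}{1-2\Delta_e}\le\frac{32}{3}\Delta_e^2$ for $\Delta_e\le1/4$, and $\operatorname{kl}(1-\delta,\delta)\ge\frac12\log(1/\delta)$ for $\delta\le 0.1$. Together these give $a_e\gtrsim\log(1/\delta)/\Delta_e^2$, while $u_e\lesssim\log(|E_c|/\delta)/\Delta_e^2$. Hence $u_e\le C\,a_e\log(|E_c|/\delta)/\log(1/\delta)$ coordinatewise. Positive homogeneity and monitonicity of $\mathsf L$ transfer this to the loads. For warm starts with $m_e\le a_e/2$, we have $a_e-m_e\ge a_e/2$, which changes only the constant.

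The main obstacle is the change-of-measure step under delays. One must check that reward-dependent delays and the inherited data add no KL beyond the rewards. This relies on the common conditional channel, on the absence of pending history, and on the deterministic historical counts, which make the history's KL exactly $m_e\kappa_e$.
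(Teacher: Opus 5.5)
Your proposal is correct and follows the paper's proof essentially step for step. It verifies the all-fallback and single-positive outcomes, applies the shared-channel change of measure with initial KL $m_e\kappa_e$ and sums the resulting counts at each endpoint for the lower bound, uses Hoeffding plus a union bound at the fixed targets $u_e$ together with Lemma~\ref{lem:quota} for the upper bound, and obtains the ratio from $\kappa_e\le\frac{32}{3}\Delta_e^2$ and $\operatorname{kl}(1-\delta,\delta)\ge\frac12\log(1/\delta)$, with the ceiling in $u_e$ and the $\log 2$ absorbed into the constant since $\Delta_e\le1/4$ and $\delta\le.1$.
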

\begin{proof}[Proof sketch]
Changing the sign of candidate $e$ changes the unique allocation. Its initial KL is exactly $m_e\kappa_e$, where $\kappa_e$ is the denominator of $a_e$. Adaptive change of measure requires $(m_e+\E N_e)\kappa_e\geq\operatorname{kl}(1-\delta,\delta)$. Summing necessary new counts at each endpoint gives the lower bound. For the upper bound, Hoeffding's inequality and a union bound make all candidate signs correct at their fixed targets $u_e$; Lemma~\ref{lem:quota} schedules precisely the remaining observations. Since $\kappa_e=\Theta(\Delta_e^2)$, the cold-start ratio follows. For moderately warm histories, $a_e-m_e\geq a_e/2$ preserves it. The full proof is in Appendix~\ref{app:load}.
\end{proof}
The class embeds a threshold-identification problem \citep{locatelli2016thresholding}; it is a controlled achievability result, not a characterization of arbitrary stable matching. It proves that both endpoints and the location of valid history matter, and that reset can be arbitrarily wasteful when only a dispersed subset needs new evidence. The statistical constants differ between $a_e$ and $u_e$: a zero lower bound does not imply that the sufficient quota is zero. Public gap magnitudes are inputs only to this benchmark procedure, not to RLR.

%% file: sections/priority_star_appendix.tex
\section{Priority-star theorem: full proof}\label{app:priority-star}

\subsection{Market structure}
Index the requesters in group $j$ by the shared provider's strict priority, $1,\ldots,r_j$. Each requester has exactly two feasible providers: its private fallback, which accepts only that requester and gives known utility $\theta$, and the group's shared provider. Groups share neither requesters nor providers.

\begin{lemma}[Priority-star outcome]\label{lem:priority-star-structure}
Every parameter vector with $p_i\ne\theta$ has a unique stable matching. In group $j$, its shared provider is assigned to the first requester with $p_i>\theta$, if one exists; every other requester receives its private fallback.
\end{lemma}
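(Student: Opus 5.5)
Since the groups share neither requesters nor providers, and a private fallback accepts only its own requester, every blocking pair lies inside a single group. Stability therefore decomposes group by group, and it suffices to fix one group $j$ with requesters $1,\ldots,r_j$ in the shared provider's priority order. The preferences within the group are fully pinned down. Requester $i$ prefers $j$ to its fallback exactly when $p_i>\theta$, and $p_i\ne\theta$ makes this strict. Requester $i$ strictly prefers its fallback (utility $\theta>0$) to being unmatched. The private provider accepts only $i$, and the shared provider ranks the group by its strict priority. I will first show that the claimed matching $M$ is stable and then that no other matching is.

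\emph{Existence.} Let $a=a_j$ be the first index with $p_a>\theta$, if it exists. Under $M$, every requester is matched, since each receives either $j$ or its fallback, so no individual-rationality violation occurs: every utility is positive. A private provider can only form a blocking pair with its own requester, and that requester holds either the fallback itself or $j$. In the latter case the requester is $a$, who strictly prefers $j$ to its fallback, so no blocking pair arises. A pair $(i,j)$ blocks only if $i$ prefers $j$, meaning $p_i>\theta$, and $j$ prefers $i$ to its occupant. If $a$ exists, $j$'s occupant is $a$. Preferring $i$ to $a$ means $i<a$, and then $p_i<\theta$ by the minimality of $a$, so $i$ does not want to switch. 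If $a$ does not exist, $j$ is unmatched and would accept anyone, but no requester has $p_i>\theta$. Either way no blocking pair exists.

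\emph{Uniqueness.} Take any stable matching $M'$. Every requester must be matched, since otherwise the requester and its fallback would block; fallbacks accept only their own requester. Consequently every requester not holding $j$ holds its fallback. Suppose $j$ is held by some requester $i$. If $p_i<\theta$, then $i$ prefers its free fallback, and $(i,\text{fallback})$ blocks. So $p_i>\theta$. If $a$ exists and $i\ne a$, then $i>a$ by the minimality of $a$, and $(a,j)$ blocks, because $a$ prefers $j$ to its fallback and $j$ ranks $a$ above $i$. Hence $i=a$, and if no $a$ exists, no requester can hold $j$. Finally, when $a$ exists, $j$ cannot be left unmatched, since $(a,j)$ would then block. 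Thus $M'=M$.

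I expect no real obstacle in this argument. The one point needing care is the completeness of the blocking-pair check: I must confirm that fallbacks cannot block with other requesters (they accept only their own) and that the strictness $p_i\ne\theta$ rules out ties, which is what makes the matching unique rather than merely weakly stable.
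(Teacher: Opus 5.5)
Your proposal is correct and follows essentially the same route as the paper's proof. Both reduce to a single group, force every requester not holding the shared provider onto its fallback, and use the first positive requester $a_j$ as the blocking witness against an empty or lower-priority occupant; a higher-priority occupant is ruled out by its own fallback. Both then verify that the resulting assignment is stable. Your version just makes the blocking-pair bookkeeping, including the private-provider cases, more explicit.
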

\begin{proof}
Any requester not assigned the shared provider must receive its private fallback, since otherwise its positive-utility private edge blocks. If every candidate mean is below $\theta$, no requester is willing to take the shared provider. Otherwise let $a_j$ be the highest-priority requester with $p_i>\theta$. It cannot stably remain at its fallback while the shared provider is empty or assigned to a lower-priority requester. A higher-priority requester cannot stably hold the shared provider because that requester prefers its fallback. The stated assignment is therefore necessary. It is stable because higher-priority requesters do not want the shared provider and lower-priority requesters cannot displace $a_j$. Applying this argument independently to every group proves existence and uniqueness.
\end{proof}

The outcome-relevant set is exactly the prefix $\mathcal B_j$. If $i<a_j$, changing only $p_i$ to any $\lambda_i>\theta$ makes $i$ the new shared-provider assignee. If $i=a_j$, changing it to any $\lambda_i<\theta$ removes $i$ and exposes a later positive requester or leaves the provider empty. When $a_j=\infty$, changing any candidate above $\theta$ creates an assignee. Thus every $i\in\mathcal B_j$ has a one-coordinate, different-outcome alternative. Coordinates after a finite $a_j$ need not be identified.

\subsection{Information-load lower bound}
Fix a true instance $p$, a group $j$, and $i\in\mathcal B_j$. Choose the preceding alternative $\lambda^{(i)}$ on the opposite side of $\theta$. The deterministic history design is identical under the two hypotheses, so its KL divergence is
$m_i(\delta)\operatorname{kl}(p_i,\lambda_i)$.

For completeness, enlarge the delayed transcript by revealing each new reward at issue and then simulating its reported value and report time through the common conditional channel. The learner's actual transcript is a measurable degradation of this augmented experiment. If $N_i$ is the number of new observations issued on $i$ by certification, the chain rule and data processing give
\begin{equation}
 \bigl(m_i(\delta)+\E_pN_i\bigr)
 \operatorname{kl}(p_i,\lambda_i)
 \ge \operatorname{kl}(1-\delta,\delta). \label{eq:star-change-measure}
\end{equation}
Indeed, the event that the learner outputs the true $p$-matching has probability at least $1-\delta$ under $p$ and at most $\delta$ under the alternative, whose unique matching differs. This argument counts pending issued rewards on the more informative side of data processing; such rewards cannot make the lower bound invalid. It also shows why a parameter-dependent reporting channel or history count could leak information and is excluded.

Letting $\lambda_i$ approach $\theta$ from the other side and using continuity of Bernoulli KL yields
\begin{equation}
 \E_pN_i\ge
 \left[\frac{\operatorname{kl}(1-\delta,\delta)}
 {\operatorname{kl}(p_i,\theta)}-m_i(\delta)\right]_+ . \label{eq:star-count-lb}
\end{equation}
The common shared provider can issue at most one candidate observation per round, hence pathwise $\sum_{i\in\mathcal B_j}N_i\le\tau_\delta^{\rm cert}$. Sum Eq.~\eqref{eq:star-count-lb}, take the maximum over groups, divide by $\ell$, and use $m_i(\delta)/\ell\to c_i$ and $\operatorname{kl}(1-\delta,\delta)/\ell\to1$. This proves the liminf in Theorem~\ref{thm:priority-star}. The quantifiers are pointwise in every fixed nonthreshold instance, while correctness is uniform over the entire priority-star class.

\subsection{A time-uniform, mean-blind sign test}
Let the full potential sequence on candidate $i$ be $Y_{i,1},Y_{i,2},\ldots$, including its deterministic-length historical prefix. Define
\begin{equation}
 T_{i,\delta}=\inf\left\{n\ge1:
 n\operatorname{kl}(\widehat p_{i,n},\theta)
 \ge b_\delta(n)\right\},\qquad
 b_\delta(n)=\log\frac{2Mn^2}{\delta}. \label{eq:star-stopping}
\end{equation}
At a crossing, declare positive if $\widehat p_{i,n}>\theta$ and negative if it is below $\theta$; equality has zero statistic. Bernoulli KL uses its continuous values at empirical means zero and one. The procedure scans every historical prefix in order. With out-of-order reports, it caches returns and scans every newly completed prefix when a hole closes, rather than inspecting only the final count.

Suppose $p_i<\theta$. At a fixed $n$, a wrong positive crossing is an upper-tail event with an empirical mean $x>\theta$. Monotonicity gives
$\operatorname{kl}(x,p_i)\ge\operatorname{kl}(x,\theta)$, so the Bernoulli Chernoff bound makes its probability at most $e^{-b_\delta(n)}$. The lower-tail argument is symmetric when $p_i>\theta$. Therefore
\begin{equation}
 \Prb_p(\text{candidate $i$ ever crosses on the wrong side})
 \le\sum_{n\ge1}e^{-b_\delta(n)}
 =\frac{\pi^2\delta}{12M}<\frac\delta M. \label{eq:star-error}
\end{equation}
A union bound over all $M$ potential tests, including tests never reached by the policy, is below $\delta$. This time-uniform event covers adaptive entry into later priority positions and reward-dependent arrival order.

\subsection{First-passage asymptotics with arbitrary linear history}
Fix one candidate and abbreviate $p=p_i$, $k=\operatorname{kl}(p,\theta)>0$, and $T_\delta=T_{i,\delta}$. Couple all confidence levels on one infinite reward sequence. By the strong law and continuity,
$\operatorname{kl}(\widehat p_n,\theta)\to k$ almost surely. For any $0<\varepsilon<k/2$, all sufficiently large prefixes have KL in $[k-\varepsilon,k+\varepsilon]$. Also, as $\delta\downarrow0$, no fixed finite prefix crosses the diverging boundary. A crossing requires $n(k+\varepsilon)\ge\ell$, while at
$n=\lceil\ell/(k-2\varepsilon)\rceil$ the inequality
$n(k-\varepsilon)\ge\ell+\log(2M)+2\log n$
holds for all sufficiently large $\ell$. Letting $\varepsilon\downarrow0$ proves
\begin{equation}
 T_\delta/\ell\longrightarrow k^{-1}\quad\text{almost surely}. \label{eq:star-as}
\end{equation}

We also need convergence of expectations. Fix $\eta>0$. Continuity provides $a>0$ such that $|x-p|\le a$ keeps $x$ on the true side of $\theta$ and makes $\operatorname{kl}(x,\theta)>k/(1+\eta/2)$. For all sufficiently large $\ell$ and all
$n\ge n_0=\lceil(1+\eta)\ell/k\rceil$, one has
$b_\delta(n)/n<k/(1+\eta/2)$; the ratio is decreasing on this range. If $T_\delta>n$, the $n$th prefix has not crossed, so $|\widehat p_n-p|>a$. Hoeffding's inequality gives
\[
 \Prb(T_\delta>n)\le2e^{-2a^2n},\qquad n\ge n_0.
\]
The discrete tail identity then gives $\E T_\delta^2=O(\ell^2)$. Thus $T_\delta/\ell$ is uniformly integrable and Eq.~\eqref{eq:star-as} also holds in $L^1$. Since the positive-part map is 1-Lipschitz and $m_i(\delta)/\ell\to c_i$,
\begin{equation}
 \frac{(T_{i,\delta}-m_i(\delta))_+}{\ell}
 \longrightarrow
 \left(\operatorname{kl}(p_i,\theta)^{-1}-c_i\right)_+
 \quad\text{in }L^1. \label{eq:star-residual-l1}
\end{equation}
This includes undersupplied, critical, and oversupplied histories.

\subsection{Priority-Search execution and upper bound}
Within each group, Priority-Search activates candidates in public priority order. It first scans an edge's complete history; absent a crossing, it issues that edge once per access round until a complete prefix crosses. A negative decision activates the next candidate and the first positive decision ends the group. Different groups issue their active candidates simultaneously. Their requester and provider endpoints are disjoint, so each round is a physical matching. No $p_i$, gap, sign, or true prefix $\mathcal B_j$ is an algorithm input.

On the complement of the error event in Eq.~\eqref{eq:star-error}, a group visits exactly $\mathcal B_j$ and Lemma~\ref{lem:priority-star-structure} makes its output correct. Hence the procedure is uniformly $\delta$-correct. Every fixed nonthreshold test stops almost surely by Eq.~\eqref{eq:star-as}, so the finite graph finishes almost surely.

If $T_{i,\delta}>m_i$, continuously issuing the missing prefix and using the delay bound makes it available within
\begin{equation}
 Z_i=(T_{i,\delta}-m_i)_+
 +d(\delta)\ind\{T_{i,\delta}>m_i\}. \label{eq:star-stage}
\end{equation}
rounds. Up to $d$ later-issued observations may remain pending at the crossing. They are charged as issues, but by assumption they do not occupy the shared provider's later access slots. Thus the next candidate may start immediately. A final drain, if needed to export a reusable checkpoint, adds at most $d=o(\ell)$ but is not part of certification time.

The Figure~\ref{fig:priority-star} delay audit uses the same coupled potential streams and the two deterministic regimes implemented by Priority-Search: $d=0$ and $d=\lfloor\sqrt{\ell}\rfloor$. Delay changes report availability and therefore the first-passage round, but never the reward sequence or the ordinate by a coordinate shift. The plotted per-seed differences are retained in the R6 raw map; they are a diagnostic of this finite-run schedule and do not change the asymptotic constant because $d=o(\ell)$.

It remains to control rare wrong decisions in expectation. Let $\mathcal E_\delta$ be the event that any potential test ever crosses on the wrong side; then $\Prb(\mathcal E_\delta)<\delta$. Pathwise,
\begin{equation}
 \tau_\delta^{\rm cert}\le
 \max_j\sum_{i\in\mathcal B_j}Z_i
 +\ind\{\mathcal E_\delta\}
 \sum_{i\notin\cup_j\mathcal B_j}Z_i. \label{eq:star-path-upper}
\end{equation}
Equation~\eqref{eq:star-residual-l1}, the second-moment bound above, and $d=o(\ell)$ give $\E Z_i^2=O(\ell^2)$. Since the graph is fixed, Cauchy--Schwarz bounds the expectation of the second term in Eq.~\eqref{eq:star-path-upper} by
$\sqrt\delta\sum_i\sqrt{\E Z_i^2}=o(\ell)$, without any independence assumption between errors and stopping times. For the first term, finite sums and maxima are Lipschitz, so Eq.~\eqref{eq:star-residual-l1} implies
\[
 \frac1\ell\E_p\left[\max_j\sum_{i\in\mathcal B_j}Z_i\right]
 \longrightarrow\max_j\sum_{i\in\mathcal B_j}
 \left(\operatorname{kl}(p_i,\theta)^{-1}-c_i\right)_+.
\]
Together with the lower bound, this proves the equality in Theorem~\ref{thm:priority-star}.

The proof relies on a fixed finite graph, i.i.d. Bernoulli candidate streams, a known fallback threshold, deterministic hypothesis-independent history sizes, a shared reporting channel, nonoccupying pending work, and $d=o(\log(1/\delta))$. It does not cover growing graphs, threshold means, adaptively encoded histories, delay of logarithmic order, or general matching markets. General combinatorial pure exploration and batch complexity address broader statistical allocation questions \citep{jourdan2021combinatorial,tuynman2025batch}; T-Star identifies the constant only for the stated matching structure.

%% file: sections/load_reference.tex
\section{Reference certificates and width-greedy repair}\label{sec:outcome}
A learning rule need not determine a unique DA execution to determine its outcome. We first expose this distinction, then separate the certificate from the sampling and reuse mechanism.

\begin{proposition}[An irrelevant winning comparison]\label{prop:irrelevant-winner}
Let both providers rank requester 2 above 1 and let
\[
\mu(\eta)=\begin{pmatrix}0.6+\eta&0.6\\0.9&0.1\end{pmatrix},\qquad |\eta|\leq0.1.
\]
Every market has the unique stable matching $(1b,2a)$. The smallest-ID proposal trace nevertheless has margin $|\eta|$ when $\eta\ne0$, and its first strict query cannot be certified at $\eta=0$ on a sound interval box.
\end{proposition}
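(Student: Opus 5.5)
The plan is to verify the two claims separately. First I identify the stable matching directly from the priority structure, uniformly in $\eta$. Then I isolate the first comparison made by the smallest-ID DA trace and show that its sign is not constant on any sound box around $\eta=0$.

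\emph{Unique stable outcome.} Label the providers $a,b$ (columns) and the requesters $1,2$ (rows). For $|\eta|\le0.1$ every entry of $\mu(\eta)$ is positive, so every edge is acceptable to its requester, and provider rationality holds by assumption.
\begin{itemize}
\item Requester $2$ strictly prefers $a$, since $0.9>0.1$. Both providers rank $2$ first.
\item In any stable matching, $2$ must therefore hold $a$. Otherwise $(2,a)$ blocks: $2$ strictly gains, and $a$ either is empty or holds the lower-ranked requester $1$.
\item Given $2$ holds $a$, requester $1$ must hold $b$. Otherwise $1$ is unmatched with outside utility $0<0.6$, and $b$ is empty, so $(1,b)$ blocks.
\item Conversely, $(1b,2a)$ is stable. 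Requester $2$ is at its top choice. Requester $1$ may envy $a$ when $\eta>0$, but $a$ prefers its occupant $2$. Individual rationality holds because all utilities are positive.
\end{itemize}
Hence the stable matching is unique and equal to $(1b,2a)$ for every $|\eta|\le0.1$. In particular $\mathrm{DA}(\nu)=(1b,2a)$ on the whole family.

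\emph{The irrelevant comparison.} In the smallest-ID proposal trace, requester $1$ proposes first. Its first move requires resolving the strict comparison between $\mu_{1a}=0.6+\eta$ and $\mu_{1b}=0.6$. The margin of this query is $|\eta|$: the sign is $\operatorname{sign}(\eta)$, with the provider-ID tie rule sending $1$ to $a$ at $\eta=0$. The trace does depend on $\eta$. For $\eta>0$, requester $1$ proposes to $a$, is later displaced by $2$, and then proposes to $b$. For $\eta<0$, requester $1$ proposes to $b$ directly. Both branches nevertheless end at the same outcome.

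For the certification claim, take $\eta=0$ and any sound interval box $\mathcal B$ containing $\mu(0)$. Every box produced from finitely many sub-Gaussian observations with the radius $r(n)>0$ of Section~\ref{sec:load} has a nondegenerate interval $[\ell_{1a},u_{1a}]\ni0.6$ on coordinate $(1,a)$. This interval contains points with $\nu_{1a}>\nu_{1b}$ and with $\nu_{1a}<\nu_{1b}$, where $\nu_{1b}$ is fixed at $0.6$ or varies within its own interval. Therefore no sound procedure can certify the strict comparison. By the previous paragraph, however, every $\nu$ in a box with widths below roughly $0.1$ has $\mathrm{DA}(\nu)=(1b,2a)$. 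Any such box may still certify the outcome, so the winning comparison is irrelevant to it.

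\emph{Main obstacle.} The only delicate step is making precise that the ``first strict query'' is trace-level, namely ordering requester $1$'s two providers. One must also note that a degenerate (zero-width) box is excluded, since every confidence box in the paper has positive radius. Everything else is routine verification of blocking pairs.
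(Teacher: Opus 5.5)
Your verification of the unique stable matching ($(2,a)$ blocks any outcome omitting it, and then $(1,b)$ blocks requester 1 staying unmatched) is correct and matches the paper's argument. Your identification of the margin $|\eta|$ of requester 1's first query is also correct. You could add that every other winner--competitor gap on the trace is at least $0.6$, so $|\eta|$ is indeed the trace minimum.

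The certification step, however, rests on a false claim. You assert that a nondegenerate interval $[\ell_{1a},u_{1a}]\ni0.6$ contains points with $\nu_{1a}>\nu_{1b}$ and points with $\nu_{1a}<\nu_{1b}$. This fails whenever $0.6$ is an endpoint. Take $[\ell_{1a},u_{1a}]=[0.6,0.7]$ and $[\ell_{1b},u_{1b}]=[0.5,0.6]$, with any intervals containing the true means on the other two edges. This box is sound at $\eta=0$ and nondegenerate, yet $\nu_{1a}\ge\nu_{1b}$ at every point. Your restriction to nondegenerate boxes is also backwards. The statement quantifies over all sound boxes, including the point box $\{\mu(0)\}$, and nothing goes wrong there.

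What you need is only soundness together with the strictness of the query test. The box contains $\mu(0)$, whose two relevant entries are tied at $0.6$, so $\ell_{1a}\le0.6\le u_{1b}$ and $\ell_{1b}\le0.6\le u_{1a}$. Hence neither $a_q=a$ nor $a_q=b$ satisfies the strict inequality in Eq.~\eqref{eq:querycert}. The outside option also fails, because $\ell_{1,0}=0<0.6\le u_{1a}$. Equivalently, no strict order between $\nu_{1a}$ and $\nu_{1b}$ holds at the point $\mu(0)$, so none can hold throughout the box. This argument needs no points on either side of the tie, and it covers degenerate and nondegenerate sound boxes alike.
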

Indeed, $(2,a)$ blocks any outcome not containing it, and requester 1 then takes the remaining acceptable provider. This separation concerns our trace certificate, not every previous partial-preference algorithm. A diverging trace-based upper bound alone is not a sample-complexity lower bound for all algorithms.

\paragraph{A statistical separation between objectives.}
The two markets $\mu(+\eta)$ and $\mu(-\eta)$ have different first canonical proposals but the same outcome. With independent Bernoulli rewards, no initial information and a common reporting kernel, any procedure identifying that proposal with error at most $\delta\leq.1$ in both markets needs $\E N_{1a}\geq\log(1/\delta)/(64\eta^2)$ for $0<\eta\leq.1$. Only edge $(1,a)$ differs: transcript data processing requires binary relative entropy $\operatorname{kl}(1-\delta,\delta)$, while its per-pull KL is at most $32\eta^2$. In contrast, requester 2's strict preference and requester 1's acceptability of $b$ certify the allocation at fixed precision. Appendix~\ref{app:exclusion} gives the full argument. This separates trace identification from outcome certification; existing outcome-based methods can share the latter advantage.

\paragraph{A polynomial outcome certificate.}
A \emph{sound outcome oracle} returns either unresolved or a matching that equals requester-optimal DA for every utility matrix in the supplied interval box. Our implemented oracle first checks the original trace certificate (Appendix~\ref{sec:trace}). If unresolved, it repeatedly removes a \emph{certified mutual first-choice pair}: requester $i$ has a strictly certified best acceptable provider $j$ in the residual market, and $j$ ranks $i$ above every remaining feasible requester. It may also remove a requester whose remaining upper bounds are all negative. It then checks the residual trace and reinserts the removed pairs. Each pass uses public priorities and intervals only; lowest requester/provider IDs break choices between eligible removals.

\begin{theorem}[Sound forced-pair reduction]\label{thm:outcome}
The resulting oracle is sound, costs $O((N+1)(|\cE|+N)(K+1))$ comparisons, and can certify Proposition~\ref{prop:irrelevant-winner} without resolving requester 1's initial ordering. In that family, every sound box with maximum edge width at most $0.1$ is certified, uniformly over $|\eta|\leq0.1$.
\end{theorem}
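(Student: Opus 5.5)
The plan is to establish the three claims separately: soundness by induction over the removal steps, the cost bound by counting passes, and certification of the family by direct computation on interval boxes. Throughout I take as given that the trace certificate of Appendix~\ref{sec:trace} is sound on any residual market.

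\textbf{Soundness.} I would prove the following by induction on the number of removals. Fix any $\nu$ in the box and let $(i,j)$ be a certified mutual first-choice pair of the current residual market. Then $\DA(\nu)$ restricted to that residual market contains $(i,j)$, and deleting $i$ and $j$ leaves a market whose requester-optimal DA outcome is the restriction of the original one.

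For the first part, strict certification means the lower bound of $(i,j)$ exceeds the upper bound of every other residual edge of $i$, and also exceeds $0$. Hence in every $\nu$ in the box, $j$ is $i$'s strict favourite and is acceptable, so the provider-ID tie refinement never acts on $i$'s top choice. In requester-proposing DA, $i$ proposes to $j$ first. Provider $j$ ranks $i$ above every remaining feasible requester, so $j$ never rejects $i$. Removing the pair therefore does not alter any other proposal sequence; equivalently, $(i,j)$ would block any matching that omits it, so the requester-optimal stable matching contains it.

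A requester whose remaining upper bounds are all negative is unmatched in every individually rational matching. Removing it changes no provider's acceptance set. The residual trace check is then sound by the earlier appendix result. Reinserting the removed pairs reproduces $\DA(\nu)$ for every $\nu$ in the box, with the reduction chosen by deterministic lowest-ID rules that do not depend on $\nu$.

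The main obstacle I expect is this composition step: showing that deleting a forced pair commutes with DA under the fixed tie refinement and the outside option. I would handle it by the proposal-order independence of DA. The pair is settled at $i$'s first proposal and is never revisited, so any execution order can start with it.

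\textbf{Cost.} Each pass removes at least one requester, so there are at most $N+1$ passes, counting the final unproductive one. A pass examines each residual requester's edges to test strict certification, for $O(|\cE|+N)$ interval comparisons including the outside option. It then checks provider rank against up to $K+1$ candidates. The residual trace costs no more than this per pass, which gives $O((N+1)(|\cE|+N)(K+1))$.

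\textbf{The example family.} Take any sound box, so it contains $\mu(\eta)$ for some $|\eta|\le0.1$, with widths at most $0.1$. Requester 2's interval on $a$ has lower bound at least $0.8$, while its interval on $b$ has upper bound at most $0.2$. Requester 2 therefore strictly prefers $a$, acceptably, and both providers rank 2 first, so $(2,a)$ is a certified mutual first-choice pair.

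In the residual market, requester 1 has only $b$. Its interval lies within $[0.5,0.7]$ and hence is strictly positive, and $b$ has no other remaining requester. So $(1,b)$ is removed next, the residual market is empty, and the oracle returns $(1b,2a)$. At no point is requester 1's comparison between $a$ and $b$ queried, whose margin is $|\eta|$. This gives certification uniformly over $|\eta|\le0.1$.
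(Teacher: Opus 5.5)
Your proof is correct and has the same skeleton as the paper's. It inducts over contractions and ends with a residual trace check. It counts at most $N$ removals plus one final check. On the family it uses the same interval facts, $\ell_{2a}\ge0.8>0.2\ge u_{2b}$ and then $\ell_{1b}\ge0.5>0$.

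The genuine difference is the lemma behind the contraction step. The paper argues about stable sets. A certified mutual first-choice pair blocks every matching that omits it. Conversely, extending any stable matching of the residual market by $(i,j)$ creates no blocking pair, because $i$ and $j$ each prefer their fixed partner. So contraction and extension are a bijection of stable matchings that preserves every remaining requester's partner ordering, and requester optimality transfers under each fixed tie refinement.

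You argue through the DA run instead. By proposal-order independence you let $i$ propose first; $j$ never releases $i$, and every later proposal to $j$ is rejected. The remainder is therefore DA on the residual market with the restricted, still provider-ID-refined, preferences. Order independence is available: the argument in Appendix~\ref{app:da} that no edge of a stable matching is ever rejected uses no particular proposal order. Your route gives equality of the refined DA outputs directly. The paper's route shows more, namely that the whole stable set contracts, and does not depend on how DA is executed. Your remark that the removal sequence depends only on the box, not on $\nu$, is a point the paper leaves implicit.

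There are three small imprecisions.
\begin{itemize}
\item The oracle first runs the original trace check, and canonical DA begins with requester 1. So requester 1's $a$-versus-$b$ comparison is queried, and found unresolved near $\eta=0$. The correct claim is that it never needs to be resolved. When the initial trace check does succeed, soundness returns the same matching.
\item You should note that requester 1 is never eligible in the first pass, since both providers rank 2 above 1. Hence the lowest-ID tie rule cannot reorder the removals.
\item The provider-side test compares $i$ with at most $N$ remaining requesters of $j$, not $K+1$ candidates. This does not affect the stated bound.
\end{itemize}
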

The contraction proof is in Appendix~\ref{app:modular}.
Mutual first-choice reduction is a classical matching argument, not claimed as a new solution concept. Our oracle remains sufficient rather than complete; stronger partial-preference certificates already exist \citep{athanasopoulos2026probably,basu2025superstable}.

\paragraph{An explicit exclusion margin.}
Forced pairs do not exhaust outcome information. For example, let
\begin{equation}
\mu=\begin{pmatrix}.8&.5&.5\\.2&.5&.8\\.2&.8&.8\end{pmatrix},\qquad
\succ_a:3>1>2,\quad\succ_b:3>2>1,\quad\succ_c:1>2>3.\label{eq:cyclic}
\end{equation}
Its sole weakly stable matching is $(1a,2c,3b)$, yet no mutual first pair exists and the residual trace requests the tied comparison $3b$ versus $3c$, even at exact means. We call a feasible matching weakly stable when it is individually rational and has no pair with a strict requester improvement and a willing provider, as in $U_0$.

For any feasible $M$, let $A(M)=\{(i,j)\in\cE:j\ne M(i),\ i\succ_jM^{-1}(j)\}$. Set $\ell_{i0}=u_{i0}=\mu_{i0}=0$. Define its exclusion score and the market margin by
\begin{align}
v_\mu(M)&=\max\bigl(\{0\}\cup\{-\mu_{i,M(i)}:i\}\cup
 \{\mu_{ij}-\mu_{i,M(i)}:(i,j)\in A(M)\}\bigr),\nonumber\\
\beta(\mu)&=\min_{M\ne M^\star}v_\mu(M).\label{eq:exclusion-gap}
\end{align}
The minimum is over feasible matchings; use $\beta=1$ if there is no alternative.

\begin{theorem}[Possible stability and finite resolution]\label{thm:exclusion}
A feasible matching $M$ is weakly stable for some completion of a box if and only if
\begin{equation}
u_{i,M(i)}\geq0\quad\forall i,\qquad
\ell_{ij}\leq u_{i,M(i)}\quad\forall(i,j)\in A(M).\label{eq:possible-stable}
\end{equation}
If exactly one matching passes, it is the refined DA outcome throughout the box. Moreover, $\beta(\mu)>0$ if and only if $M^\star$ is the unique weakly stable matching. In that case every sound box of maximum width at most $\beta(\mu)/3$ has exactly one passing matching.
\end{theorem}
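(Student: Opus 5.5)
We prove Theorem~\ref{thm:exclusion} in three parts: the characterization in Eq.~\eqref{eq:possible-stable}, the passage from a unique passing matching to the refined DA outcome, and the margin statement.

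\emph{Characterization.} Weak stability of $M$ under a completion $\nu$ requires $\nu_{i,M(i)}\ge0$ for every $i$, and $\nu_{ij}\le\nu_{i,M(i)}$ for every $(i,j)\in A(M)$. Only pairs in $A(M)$ matter, because the provider side is fixed and public. Necessity is immediate from $\nu\in\mathcal B$.

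For sufficiency, we build the completion edge by edge. Set $\nu_{i,M(i)}=u_{i,M(i)}$ and $\nu_{ij}=\ell_{ij}$ on every other edge. These constraints decouple across requesters, since each involves only row $i$. Each edge is assigned exactly once, because $M(i)$ is a single edge and the other edges of row $i$ are distinct. The box is nonempty, so every coordinate lies in its interval, and Eq.~\eqref{eq:possible-stable} is then exactly the weak-stability condition. Unmatched requesters use $u_{i0}=0$, and their blocking condition reads $\ell_{ij}\le0$. Strictness is the delicate point: weak stability forbids only \emph{strict} improvement, so ties are allowed, and this is why $\le$ and not $<$ appears.

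\emph{Unique passing matching.} Fix any $\nu\in\mathcal B$. The refined $\DA(\nu)$ is weakly stable for $\nu$: DA with a tie refinement yields a stable matching for the refined strict preferences, and any strict blocking pair under $\nu$ would also block the refinement. Hence $\DA(\nu)$ passes the test. If only one matching $M$ passes, then $\DA(\nu)=M$ for every $\nu\in\mathcal B$.

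\emph{Margin equivalence.} Apply the characterization to the degenerate box $\{\mu\}$. For $M\ne M^\star$, we have $v_\mu(M)=0$ exactly when $M$ is weakly stable at $\mu$. Since the matching set is finite, $\beta>0$ if and only if no $M\ne M^\star$ is weakly stable. Because $M^\star$ itself is weakly stable, this holds if and only if $M^\star$ is the unique weakly stable matching.

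\emph{Width bound.} Let $\mathcal B$ be a sound box of width at most $w=\beta/3$ containing $\mu$. Then $u_e\le\mu_e+w$ and $\ell_e\ge\mu_e-w$. $M^\star$ passes by the argument above. Take $M\ne M^\star$ with $v_\mu(M)\ge\beta$ and split into two cases.

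1. Suppose an individual-rationality term attains the score, so $\mu_{i,M(i)}\le-\beta$. Then $u_{i,M(i)}\le-\beta+w<0$, and $M$ fails.
2. Suppose a pair $(i,j)\in A(M)$ attains it, so $\mu_{ij}-\mu_{i,M(i)}\ge\beta$. Then $\ell_{ij}-u_{i,M(i)}\ge\beta-2w>0$, and $M$ fails.

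The outside option has zero width, which only helps. Case 2 only needs $w<\beta/2$; the factor $3$ leaves slack. The main obstacle is the exact bookkeeping of $M(i)=0$ and the $i\succ_j 0$ convention in $A(M)$. I will check these conventions against $U_0$ before relying on the characterization in the degenerate-box step.
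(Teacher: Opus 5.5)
Your proposal is correct and follows essentially the same route as the paper's proof. Both use the witness completion that puts upper endpoints on matched edges and lower endpoints elsewhere, weak stability of refined DA at every completion for the uniqueness claim, finiteness of the set of matchings for the $\beta$ equivalence, and the slack computation $\beta-2w$ for blocking witnesses and $\beta-w$ for individual-rationality witnesses; your side remark that any width strictly below $\beta/2$ already suffices is also correct, and the paper's $\beta/3$ simply leaves extra slack.
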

\begin{proof}
Necessity follows from individual rationality and the absence of blocking pairs. For sufficiency, assign each matched utility its upper endpoint and each other utility its lower endpoint. All inequalities then hold simultaneously, including those involving the outside option. DA at every completion is weakly stable, so a sole possible matching must equal it. A wrong $M$ has a strictly positive witness in $v_\mu(M)$ precisely when it is not weakly stable. Finiteness gives the equivalence for $\beta$. Shrinking a sound interval by width at most $\beta/3$ preserves an exclusion witness with positive slack: a blocking difference loses at most $2\beta/3$, and an individual-rationality witness at most $\beta/3$.
\end{proof}
Testing Eq.~\eqref{eq:possible-stable} over all feasible matchings costs $O(|\mathcal M|(|\cE|+N))$, where $|\mathcal M|\leq\sum_r\binom Nr\binom Kr r!$. This is exponential, not a scalable replacement for pervasive inference. Our \emph{Hybrid} oracle checks forced pairs first, then this exclusion test. It inherits the positive radius above; multiple weakly stable matchings can still prevent exclusion, even when a refined optimal outcome is identifiable. Equation~\eqref{eq:cyclic} has $\beta=.3$; perturbing $\mu_{3b}$ by $|\eta|\leq.01$ preserves its unique outcome and $\beta\geq.29$. Thus finite resolution does not require resolving that tie. The published pervasive oracle also resolves this family; our contribution here is the explicit margin and its connection to reusable-evidence accounting, not priority over that oracle.

\paragraph{Outcome-guided parallel repair (O-CGR).}
Maintain chronological counts and sound intervals on each edge, resetting only the announced edges. If the oracle is unresolved, let $w_{\max}$ be the largest interval width. Sort edges by decreasing width, then IDs, and greedily select a matching from edges with width at least $w_{\max}/2$. Double each selected edge's count (the first batch has one pull). Execute these independent edge batches concurrently; shorter batches finish earlier. Wait until every issued reward in the selected group has arrived before updating any interval. No next group is launched while feedback is pending. This is a feasibility-based parallel schedule, not an optimal allocation of experiments.

At each exploratory round, insert the active selected edges into the old matching, removing incident old pairs. At most $2|S_t|$ requester assignments change. Execute the old matching while waiting and the certified new matching after completion. Nonselected rewards are ignored. The serial ablation selects only the widest edge. Unlike the original query-directed CGR, O-CGR may sample edges irrelevant to its eventual outcome certificate; this tradeoff allows a certificate-independent guarantee.

\begin{algorithm}[t]
\caption{Outcome-guided parallel repair (O-CGR)}\label{alg:outcome}
\KwIn{public graph/priorities, $T,\sigma,\delta$; notifications $\mathcal C_h$; sound oracle $\mathsf O$}
At notification, invalidate only $\mathcal C_h$; retain other statistics\;
\While{$\mathsf O$ is unresolved on the current interval box}{
 Compute the maximum edge width $w_{\max}$\;
 \If{$w_{\max}=0$}{Stop this repair as unresolved; retain $M^{\rm old}$ until the boundary\;}
 Greedily select disjoint edges with width at least $w_{\max}/2$\;
 Fix all selected batch sizes at $\max\{1,n_e\}$ additional observations\;
 Concurrently issue these batches, inserting active edges into $M^{\rm old}$\;
 Execute $M^{\rm old}$ until all selected rewards arrive\;
 Update each selected edge using its entire chronological sample prefix\;
}
If certified, execute the outcome until the next notification; otherwise use the boundary fallback\;
\end{algorithm}

\section{Reference lifetime and information results}\label{sec:modular}
For $0<\delta<1$, use the common record horizon $T_{\rm conf}$ (equal to $T$ in the earlier cold-start experiments)
\begin{equation}
\Lambda=\log\frac{2\max\{1,|\cE|\}T_{\rm conf}^2}{\delta},\qquad r(n)=\sigma\sqrt{2\Lambda/n},\label{eq:cs}
\end{equation}
intersecting $[\widehat\mu_e-r(n),\widehat\mu_e+r(n)]\cap[-1,1]$ with previous intervals in the same lifetime. Start with $[-1,1]$ or a valid prior. An empty intersection off the confidence event falls back to the latest clipped interval. The complete-batch concentration argument applies to simultaneous selected edges: each edge's estimator still uses an initial chronological prefix, and its issuance decision is predictable. It permits bounded reward-dependent delays without assuming that the early arrivals are representative.

\paragraph{Resolution parameters.}
For a sound oracle, a number $\rho_h>0$ is a \emph{resolution radius} if the oracle certifies every box containing $\mu_h$ whose maximum edge width is at most $\rho_h$. It is an analysis parameter, not an input. It is oracle-specific and need not equal an information-theoretic gap. A positive trace margin gives a positive radius for an oracle that first checks the trace; Theorem~\ref{thm:exclusion} supplies the explicit choice $\rho_h=\beta(\mu_h)/3$ for Hybrid on every unique-weakly-stable market, including zero trace margins. Define
\begin{equation}
 q_h=\max\left\{1,\left\lceil64\sigma^2\Lambda/\rho_h^2\right\rceil\right\},\quad
 q_{e,g}=\max_{h\in g}q_h,\quad b_{e,g}=1+\lceil\log_2(2q_{e,g})\rceil.\label{eq:oracle-budget}
\end{equation}
The uniform radius is conservative; unlike the original trace-specific bound it does not identify a minimal set of informative edges.

\begin{theorem}[Certificate-independent lifetime accounting]\label{thm:modular}
Assume positive resolution radii and a nonfinal epoch duration at least
$F_h^{\mathsf O}=\sum_{e\in\cE}\{2q_h+d_e[1+\lceil\log_2(2q_h)\rceil]\}$.
With probability at least $1-\delta$, every completed O-CGR repair is requester-optimal. If $P$ counts exploratory \emph{edge pulls}, $Q$ exploratory \emph{rounds}, and $W$ waiting rounds, then
\begin{equation}
 Q\leq P\leq2\sum_{e,g}q_{e,g},\qquad
 W\leq\sum_{e,g}d_e b_{e,g}.\label{eq:parallelbudget}
\end{equation}
Let $a_h=|\{i:M_{h-1}^\star(i)\ne M_h^\star(i)\}|$, with $M_0^\star$ empty. The realized regret and violation counts satisfy
\begin{align}
 \mathcal R_T&\leq\sum_h\{2a_h(Q_h+W_h)+4P_h\},\label{eq:parallelregret}\\
 \mathcal S_T^0&\leq(K+2N+1)\sum_h\{a_h(Q_h+W_h)+2P_h\}.
\end{align}
Expectation bounds add $2NT\delta$ and $N(K+1)T\delta$. The sums may include only actually sampled lifetimes for the pathwise bounds; all lifetimes give deterministic bounds. The final epoch may be truncated.
\end{theorem}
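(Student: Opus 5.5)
We prove the three parts of Theorem~\ref{thm:modular} in turn. Correctness comes from the confidence event, the budgets from the doubling schedule together with the width-halving selection rule, and the service bounds from the insertion argument already used in Appendix~\ref{Sec:A1-5}.

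\emph{Step 1: correctness.} First, we place the analysis on the simultaneous event of Theorem~\ref{thm:prefix}. Each O-CGR interval uses a complete chronological batch prefix, and issuance is predictable. That theorem therefore applies verbatim, and every box contains $\mu_h$ with probability at least $1-\delta$. Soundness of $\mathsf O$ then makes every certified output equal to $\DA(\mu_h)$.

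\emph{Step 2: the sampling budget.} Fix an edge lifetime $(e,g)$. An edge is selected only if its width is at least $w_{\max}/2$. While the oracle is unresolved and $\mu_h$ lies in the box, the resolution radius forces $w_{\max}>\rho_h$, so a selected edge has width above $\rho_h/2$. The clipped interval has width at most $2r(n)$. Hence $n<32\sigma^2\Lambda\cdot 4/\rho_h^2\cdot\tfrac14$, which gives $n<q_h$ with the constant $64$. So an edge is never selected once its count reaches $q_{e,g}$.

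Each selection doubles the count, so the final count is below $2q_{e,g}$. Since pulls only add to the count within a lifetime, $P\le 2\sum_{e,g}q_{e,g}$. Moreover, each exploratory round issues at least one pull, so $Q\le P$. The number of selections of $(e,g)$ is at most $1+\lceil\log_2(2q_{e,g})\rceil=b_{e,g}$. Each group waits at most $\max_{e\in S}d_e$ after its longest batch, and we charge that wait to a selected edge in the group. This gives $W\le\sum_{e,g}d_eb_{e,g}$.

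\emph{Step 3: dwell condition and service bounds.} The per-epoch sum of these bounds is at most $F_h^{\mathsf O}$. The dwell hypothesis therefore ensures that every nonfinal repair completes before the next notification. For regret and violations, we repeat the argument of Appendix~\ref{Sec:A1-5} with pre-certification duration $Q_h+W_h$. The incumbent $M^\star_{h-1}$ is correct by Step~1 and differs from $M^\star_h$ on $a_h$ requesters. Insertion changes at most $2m_t$ further requesters per round. This yields per-round charges of $2a_h+4m_t$ for regret and $(K+2N+1)(a_h+2m_t)$ for violations, where $\sum_t m_t=P_h$. Off the confidence event we use the universal caps $2NT$ and $N(K+1)T$, which give the stated expectation terms.

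\emph{Main obstacle.} The delicate point is the charging in Step~2 across updates. A lifetime can span several repairs with different $q_h$, which is why the bound uses $q_{e,g}=\max_h q_h$. Retained counts telescope, so doubling never restarts within a lifetime. The waiting charge also needs care: concurrent batches of unequal lengths must be charged to edges that were actually selected in that group, so that the count of groups per edge, $b_{e,g}$, is not exceeded.
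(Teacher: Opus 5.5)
Your proposal is correct and follows essentially the same route as the paper: the confidence event plus oracle soundness gives correctness, and the width-halving argument ($w_{\max}>\rho_h$ forces selected widths above $\rho_h/2$, hence counts below $q_h$) combined with doubling and retained counts yields the lifetime caps. The remaining steps also match: each group's wait is bounded by its largest selected $d_e$ and charged to edge batches, and the insertion argument gives the regret and violation charges, with only cosmetic differences — you charge each group's wait to its maximizing edge rather than bounding the maximum by the sum, and you make the dwell-implies-completion step explicit (the cases $n=0$ and $\sigma=0$, where the $2r(n)$ width bound is unavailable or zero, go through directly since $q_h\ge1$).
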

\begin{proof}
If unresolved, $w_{\max}>\rho_h$. Every selected edge has width $>\rho_h/2$, whereas $n\geq q_h$ implies width at most $2r(n)<\rho_h/2$ when $\sigma>0$. For $\sigma=0$ one completed sample has zero width. Thus a selected edge starts below its threshold. Doubling and retained counts give a maximum threshold per lifetime, rather than paying it again at each epoch. Each group waits at most the largest delay bound among its selected edges, which is bounded by their sum; charge this to edge batch counts. During an exploratory round, the old allocation contributes at most $2a_h$ regret and inserting $|S_t|$ edges contributes at most $4|S_t|$. Sum $|S_t|$ to obtain $P_h$. The stability argument charges the corresponding number of changed assignments. Details and truncation handling appear in Appendix~\ref{app:modular}.
\end{proof}
This theorem applies to other sound outcome oracles with positive resolution radii; selective retention is not exclusive to O-CGR. It provides no universal factor-$N$ speedup and no pointwise dominance over reset policies. For $T\geq2$ and $\delta=T^{-2}$, the regret failure charge is $2N/T$. For fixed $N$, a sublinear deterministic right-hand side in Eq.~\eqref{eq:parallelregret} yields $R_T=o(T)$; frequent updates or vanishing resolution radii can prevent this.

\paragraph{Variance-sensitive evidence for i.i.d. streams.}
The sub-Gaussian radius protects against general martingale rewards but can be conservative on low-variance training gains. Under the additional assumption of an i.i.d. reward stream within each edge lifetime, let $V_{e,n}$ be the unbiased sample variance and $\Lambda_{\rm EB}=\log(4\max\{1,|\cE|\}T_{\rm conf}^2/\delta)$. For $n\geq2$, an alternative is
\begin{equation}
r_{\rm EB}(e,n)=\sqrt{2V_{e,n}\Lambda_{\rm EB}/n}
+\frac{14\Lambda_{\rm EB}}{3(n-1)}.\label{eq:eb-main}
\end{equation}
For $n<2$, keep the prior interval. The two-sided empirical Bernstein inequality \citep{maurer2009empirical}, applied to chronological prefixes and union-bounded over edge lifetimes, gives simultaneous coverage $1-\delta$. Reward-dependent delays are still permitted because a completed batch includes every issued reward. The lifetime proof transfers with a conservative threshold $q_h^{\rm EB}=\max\{2,\lceil512\Lambda_{\rm EB}/\rho_h^2+1+112\Lambda_{\rm EB}/(3\rho_h)\rceil\}$: $n\geq q_h^{\rm EB}$ forces width below $\rho_h/2$. Appendix~\ref{app:exclusion} gives the rescaling and threshold derivation. This optional extension does not replace the martingale assumption of the main theorem or introduce a new concentration inequality.

\begin{theorem}[Information still needed after an update]\label{thm:warm-lower}
Consider two post-update markets with independent Bernoulli edge rewards, different target outcomes, and means differing only on edge $c$, with values $p_+,p_-\in(0,1)$. The two markets share a conditional reporting kernel given issued rewards and the past. No pre-update observation remains pending, or the historical pending-report process has a common conditional kernel given the initial information and the evolving augmented transcript. Let $Z$ contain all information available immediately before the first post-update pull, with laws $Q_+,Q_-$ and $D_0=\operatorname{KL}(Q_+,Q_-)$. Any algorithm correct with error at most $\delta<1/2$ in both markets satisfies
\begin{equation}
 \E_+N_c^{\rm new}\geq
 \frac{[\operatorname{kl}(1-\delta,\delta)-D_0]_+}
 {\operatorname{kl}(p_+,p_-)}.\label{eq:warm-lower}
\end{equation}
Here $N_c^{\rm new}$ counts newly issued pulls before output; other feedback has the same conditional law. In the pivotal construction of Eq.~\eqref{eq:pivotal}, an identical pre-update history gives $D_0=0$ regardless of its length, and the fresh-pull lower bound is $\log(1/\delta)/(64\eta^2)$ for $\delta\leq .1$.
\end{theorem}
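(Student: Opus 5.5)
The plan is the standard transcript change-of-measure argument, arranged so that historical information is counted through $D_0$ and new information through the post-update pulls on edge $c$. First I will augment the learner's transcript. Let $\mathcal T$ consist of the initial information $Z$, followed by every newly issued reward revealed at its issue time, every simulated report (value and report date), and any historical pending report, each generated through its common conditional kernel. The learner's actual transcript, including the output, is a measurable function of $\mathcal T$ and of the learner's internal randomness, which has the same law under both markets. By data processing, for the correctness event $A=\{\text{output}=M_+\}$ we get $\operatorname{KL}(\mathbb P_+^{\mathcal T},\mathbb P_-^{\mathcal T})\ge\operatorname{kl}(\mathbb P_+(A),\mathbb P_-(A))\ge\operatorname{kl}(1-\delta,\delta)$. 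The last step uses monotonicity of binary KL together with $\mathbb P_+(A)\ge1-\delta>1/2>\delta\ge\mathbb P_-(A)$, which holds because the target outcomes differ and $\delta<1/2$.

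Next I apply the chain rule to $\operatorname{KL}$ of the augmented transcript, processed in event order. The first factor is $Z$, which contributes exactly $D_0$. Each subsequent factor is a conditional law given the past. Learner actions are kernels that do not depend on the market, so they contribute $0$. Reports of issued rewards and historical pending reports use common kernels, so they also contribute $0$. A newly issued reward on an edge $e\ne c$ has the same Bernoulli law under both markets, so it contributes $0$. Each newly issued reward on $c$ contributes $\operatorname{kl}(p_+,p_-)$; here independence of Bernoulli rewards from the past is what makes the conditional law exactly $\mathrm{Ber}(p_\pm)$. Summing with the optional-stopping or Wald form of the chain rule, truncated at output time, gives $\operatorname{KL}=D_0+\E_+N_c^{\rm new}\operatorname{kl}(p_+,p_-)$. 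Rearranging and taking the positive part, since the count is nonnegative, yields Eq.~\eqref{eq:warm-lower}.

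For the final sentence, the pivotal construction supplies an identical pre-update history law under both hypotheses, so $Q_+=Q_-$ and $D_0=0$ whatever its length. I then use $\operatorname{kl}(1-\delta,\delta)\ge\tfrac12\log(1/\delta)$ for $\delta\le.1$, together with the per-pull bound $\operatorname{kl}(p_+,p_-)\le32\eta^2$. The latter comes from the $\chi^2$ bound $(p_+-p_-)^2/(p_-(1-p_-))$ applied to means $0.6\pm\eta$ with $|\eta|\le.1$. Combining these gives $\log(1/\delta)/(64\eta^2)$.

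I expect the main obstacle to be the rigorous handling of the chain rule when the stopping time is random and the output time is possibly infinite or unbounded. I would first truncate at a horizon $n$, apply the finite chain rule, and then let $n\to\infty$ using lower semicontinuity of KL and monotone convergence of $\E_+N_c^{\rm new}$. I would also need to justify that historical pending reports, which may carry reward information, add no KL; this is exactly what the assumption of a common kernel given the initial information and the augmented transcript provides.
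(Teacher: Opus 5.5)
Your argument for the general inequality is correct and is essentially the paper's proof. You reveal each new reward at issue and simulate reports through the common kernel, let $Z$ contribute $D_0$ in the chain rule, apply data processing to the correct-output event, and handle unbounded stopping by truncation.

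The one error is in the specialization. Eq.~\eqref{eq:pivotal} is the market with $\mu^\pm_{1a}=1/2\pm\eta$ and $0<\eta\le 1/4$. It is not the $0.6\pm\eta$ family of Proposition~\ref{prop:irrelevant-winner}. That family has the same outcome in both markets, so it does not satisfy the theorem's hypotheses at all.

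Redo your $\chi^2$ step on the correct instance. Since $p_-=1/2-\eta\in[1/4,1/2)$, you get $p_-(1-p_-)\ge 3/16$, hence $\operatorname{kl}(p_+,p_-)\le 4\eta^2\cdot 16/3\le 32\eta^2$. Combined with $\operatorname{kl}(1-\delta,\delta)\ge\tfrac12\log(1/\delta)$ for $\delta\le .1$, this gives the stated $\log(1/\delta)/(64\eta^2)$.
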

\begin{proof}
Reveal each newly issued reward immediately and generate its delayed report through the shared kernel. This augmented experiment simulates the actual transcript. The adaptive change-of-measure argument \citep{kaufmann2016complexity} bounds its KL by $D_0+\E_+N_c^{\rm new}\operatorname{kl}(p_+,p_-)$. The assumption on historical pending reports excludes uncharged information. The event of outputting the plus outcome gives KL at least $\operatorname{kl}(1-\delta,\delta)$ by data processing. Rearrange. The Bernoulli estimate for the pivotal instance is in Appendix~\ref{app:separations}.
\end{proof}
For $L$ announced pivotal updates with identical histories under each current plus/minus alternative, uniform per-repair correctness gives an $L\log(1/\delta)/(64\eta^2)$ fresh-pull lower bound by summation. Arbitrarily much data about an old mean cannot identify an unrestricted new mean. Conversely, unchanged edges need not repay their entire sampling budget. This explains the role of invalidation lifetimes without claiming that our uniform-radius upper bound, or its repeated delay charges, is tight.

\paragraph{Information subject to matching capacity.}
The same argument couples historical information to feasible exploration. Fix Bernoulli means $\mu\in(0,1)^{\cE}$ and a collection $\mathcal A\subset(0,1)^{\cE}$ of different-outcome alternatives satisfying the shared-channel and historical-feedback conditions above. Let $D_0(\lambda)$ be their initial-information KL. Any uniformly $\delta$-correct procedure has expected elapsed time at least
\begin{equation}
\begin{aligned}
\inf_{x\geq0,\,\tau\geq0}\;&\tau\quad\text{subject to}\quad
\sum_jx_{ij}\leq\tau,\quad\sum_ix_{ij}\leq\tau,\\
&\sum_{e\in\cE}x_e\operatorname{kl}(\mu_e,\lambda_e)
\geq[\operatorname{kl}(1-\delta,\delta)-D_0(\lambda)]_+
\quad\forall\lambda\in\mathcal A.
\end{aligned}\label{eq:history-load}
\end{equation}
Indeed, take $x_e=\E_\mu N_e^{\rm new}$. Each alternative gives an information constraint; one assignment per endpoint per round gives the load constraints. Shared endpoints can therefore bottleneck new information even when total pulls are small. This is a necessary allocation bound, not an achievable characterization of delayed stopping time; Eq.~\eqref{eq:parallelbudget} need not attain it.

%% file: sections/outcome_appendix.tex
\section{Details of outcome certification and modular repair}\label{app:modular}
\paragraph{Stable-match correspondence under contraction.}
Consider a certified mutual first-choice pair $(i,j)$ at some residual stage. Both comparisons are strict for every completion of the current interval box, and $i$ regards $j$ as acceptable. Any stable matching omitting $(i,j)$ is blocked by it. Conversely, extend a stable matching of the residual market by $(i,j)$. A putative new blocking pair must involve $i$ or $j$, but each prefers its fixed partner to every remaining feasible counterpart. It cannot block. Previously removed pairs obey the same argument at their earlier removal stage. Therefore contraction and extension give a bijection of stable matchings, preserving the partner ordering of every remaining requester. Applying the same reasoning to each fixed tie refinement establishes requester optimality, not just stability. A requester with all remaining upper bounds negative is necessarily unmatched and cannot form a blocking pair. Its removal is also valid.

The oracle first checks the original trace, so all boxes accepted by that check remain accepted. This is a dominance statement about the acceptance region of the \emph{certificate}, not about the sampler's realized costs. It does not imply that O-CGR's global-width scheduler dominates query-directed CGR. A brute-force enumeration of completions is unnecessary. At each of at most $N$ contractions, scanning potential pairs and their competitors is polynomial; a final residual DA check adds $O((|\cE|+N)(K+1))$ comparisons.

\paragraph{A resolution radius is not a minimal sufficient statistic.}
The definition quantifies over every sound box of sufficiently small maximum width, including asymmetric boxes produced by interval intersection. On a strict canonical trace with minimum comparison gap $\gamma>0$, any width $\rho<\gamma/2$ suffices, because a sound interval's lower endpoint is at least its true mean minus its width and its upper endpoint at most its true mean plus its width. The O-CGR oracle inherits this guarantee through its initial trace check. In Proposition~\ref{prop:irrelevant-winner}, however, $\rho=0.1$ works even at the trace tie: $\ell_{2a}\geq0.8>0.2\geq u_{2b}$, and after fixing $(2,a)$, $\ell_{1b}\geq0.5>0$. The other utility may remain unresolved. No claim identifies this radius with the optimal identification complexity.

\paragraph{Parallel sampling and lifetime sums.}
At the start of a group, every selected edge has width at least $w_{\max}/2$. If the oracle is unresolved, its resolution property implies $w_{\max}>\rho_h$. For a positive-noise edge with $n\geq q_h$, interval width is at most
$2\sigma\sqrt{2\Lambda/n}\leq\rho_h/\sqrt8<\rho_h/2$,
which rules out selection. When $\sigma=0$, its interval becomes a singleton after the first complete sample. An edge therefore starts a doubling batch only with $n<q_h$. Across any evidence lifetime, the last count, including a truncated issued batch, is at most twice the largest threshold in that lifetime. There are at most $1+\lceil\log_2(2q_{e,g})\rceil$ batches. Simultaneous selections use disjoint endpoints; no two chosen edges conflict in a reward-generating round.

An edge's final pull in a group precedes the final exploratory round of that group. Hence all observations arrive within the largest selected $d_e$ additional rounds, regardless of unequal batch lengths or reward-dependent arrival order. Bounding this maximum by the sum over selected edges and then grouping by lifetimes proves Eq.~\eqref{eq:parallelbudget}. It is deliberately an upper bound: simultaneous waiting need not incur the sum. An interrupted final group contributes issued pulls and actual waiting loss but does not update any incomplete estimator. If an earlier notification interrupts repair, a defined fallback remains unmatched until $T$; the stated sufficient dwell condition excludes this on the confidence event. Experimental runs also record whether each nonfinal repair actually completes.

With $n^0_{e,h}$ completed observations at an epoch boundary, an unchanged edge with $n^0_{e,h}\geq q_h$ cannot be selected. Otherwise the additional issued samples are at most $2q_h-n^0_{e,h}$. This warm-start upper bound is compatible with worse realized regret under retention: sampling decisions, batch overlap, and the old allocation's loss can differ between policies. It is not a coupling theorem establishing monotone performance.

\paragraph{Concentration and regret.}
For each selected edge the issuance indicator is predictable with respect to the latent past, even when other edges are sampled concurrently. The exponential-supermartingale argument in Appendix~\ref{app:newproofs} applies edge by edge without requiring contemporaneous independence. The estimator never uses only early arrivals. Intersections preserve truth on the common confidence event. During exploration, inserting a matching of $m$ selected edges changes at most $2m$ old assignments. Relative to the current optimal outcome, at most $a_h+2m$ requester assignments differ. The positive-part regret charge is $2a_h+4m$, and the blocking/IR charge is $(K+2N+1)(a_h+2m)$. Waiting has $m=0$. Sum over rounds, use $\sum_tm_t=P_h$, and charge confidence failure by the universal caps.

\paragraph{Historical information and repeated invalidations.}
In Theorem~\ref{thm:warm-lower}, first condition on the initial information $Z$. Couple the learner's randomization identically in the two markets. The enlarged experiment reveals a reward at issuance, then simulates its report through the common conditional kernel. Choices in the actual delayed experiment remain measurable in this enlarged experiment. Thus each newly issued pivotal reward contributes its Bernoulli KL; all other new rewards contribute zero. Any remaining historical report has identical conditional law given $Z$ and the evolving augmented transcript by assumption and contributes zero as well. The chain rule gives the stated upper bound. Data processing transfers it to the delayed transcript, and applying data processing again to the correct-output event gives the lower bound. For an unbounded stopping time with finite expected sample count, apply the chain rule to truncated transcripts and pass to the stopped experiment; infinite expected counts satisfy the bound trivially. This is an application of standard adaptive change of measure \citep{kaufmann2016complexity}.

The channel and historical-feedback conditions are essential. If a pivotal reward reports with delay zero in one market and one in the other, a single query can identify the market through arrival timing, regardless of its reward KL. Likewise, information in previously issued but still unobserved rewards must not be omitted from the information budget. Restricting the comparison to a common fixed delay and no historical pending feedback is a valid subclass of our model and suffices for the stated lower-bound constructions.

For repeated updates, fix any vector of post-update signs in the pivotal family. For each epoch compare it to a vector differing only in that epoch. The history before the epoch has identical law; unrestricted fresh means are not inferred from previous samples of the same edge. Uniform $\delta$-correctness of each repair over all sign vectors yields the one-epoch lower bound under the fixed vector. Repairs stop before the next notification, so summing their disjoint new counts gives the $L$-update statement. It does not imply an extra $\log L$ without a stronger simultaneous-error argument.

\paragraph{What the lower bound leaves open.}
For an identical pre-update history and a fixed pivotal reporting delay $d_c$, the same indistinguishability argument as Appendix~\ref{app:separations} gives expected identification time at least $(1-2\delta)(d_c+1)$. This does not show that waiting after every doubling group is necessary. The scalar resolution-radius upper bound and the one-edge information lower bound are not instance-optimal counterparts.

%% file: sections/exclusion_appendix.tex
\section{Exclusion certificates, scope, and information accounting}\label{app:exclusion}
\paragraph{Independent interval completions.}
The box in Theorem~\ref{thm:exclusion} is a Cartesian product over feasible edges. For a proposed allocation $M$, define the witness completion by $\nu_{i,M(i)}=u_{i,M(i)}$ for assigned edges and $\nu_{ij}=\ell_{ij}$ for every other edge. The outside option remains exactly zero. Different requesters' inequalities involve disjoint reward coordinates, so these choices are simultaneous. Equation~\eqref{eq:possible-stable} makes every assigned edge individually rational and every provider-willing deviation non-improving. Hence $M$ is weakly stable in this completion. Conversely, any violation of the first inequality forces individual irrationality throughout the box, and a violation of the second forces a blocking pair throughout the box. This establishes necessity and sufficiency, not merely a corner-sampling heuristic.

Let $\mathcal P(B)$ denote the set passing the test. For every $\nu\in B$, the DA allocation with the specified tie refinement is weakly stable at $\nu$ and thus lies in $\mathcal P(B)$. Consequently $|\mathcal P(B)|=1$ certifies the refined outcome. If several weakly stable allocations exist at the exact mean, all of them pass the point-box test; the exhaustive oracle need not identify the refined optimum. At zero utility, weak stability may permit an assignment whereas the specified refinement prefers the outside option. The soundness argument still holds because it only uses inclusion of the refined DA outcome in the possible set. It never equates weak stability with refined optimality.

\paragraph{Margin, robustness, and complexity.}
For any sound interval of width at most $\rho$, $\ell_e\geq\mu_e-\rho$ and $u_e\leq\mu_e+\rho$. A positive blocking witness of size at least $\beta$ stays positive by at least $\beta-2\rho$. An individual-rationality witness stays positive by at least $\beta-\rho$. Taking $\rho=\beta/3$ excludes every wrong feasible allocation while retaining the true one. The minimum in Eq.~\eqref{eq:exclusion-gap} is over a finite set; it is positive exactly when all alternatives have a positive witness. This characterizes the exhaustive weak-stability oracle's positive resolution property, not the minimal statistical complexity of refined outcome identification.

The implementable test uses only interval endpoints, feasibility and priorities. The latent margin $\beta$ is computed only in offline diagnostics. Enumeration is an explicit computational limitation: on a complete $3\times3$ market there are 34 feasible matchings including partial ones, and on a complete $5\times5$ market there are 1,546. Precomputing provider-willing pairs amortizes repeated oracle calls but does not remove exponential growth. Hybrid first calls the polynomial forced-pair oracle; if that is unresolved, it enumerates until it finds two possible allocations (unresolved) or completes the enumeration with a sole survivor. No partial enumeration can produce an unjustified certificate.

\paragraph{Checking the cyclic example.}
In Eq.~\eqref{eq:cyclic}, requester 3 cannot receive $a$ or remain unmatched: provider $b$, which ranks 3 first, would form a blocking pair. If 3 received $c$, requester 2 could not receive $c$ and would block with that provider, which prefers 2 over 3. Thus 3 receives $b$. The remaining assignment must be $1a,2c$, since $1c,2a$ is blocked by $1a$. Complete acceptability and positive utilities rule out stable partial matchings in this balanced market. Direct enumeration gives $\beta=.3$. Under $\mu_{3b}=.8+\eta$, $|\eta|\leq.01$, the same argument applies and each wrong matching has a witness at least $.29$. Forced reduction lacks a mutual first pair at the tie: the top choices of 1 and 2 are vetoed by their providers' priorities, while 3 has an unresolved choice between $b,c$.

\paragraph{Relationship to pervasive inference.}
The possibility test excludes wrong weakly stable matchings across utility completions. Pervasive inference instead asks whether the requester-optimal matching is the same for every strict refinement of partial preferences \citep{rastegari2014reasoning,athanasopoulos2026probably}. These are different questions: multiple stable allocations need not prevent a pervasive optimal allocation. We make no superiority or novelty claim for enumerating possible stable allocations. The margin supplies an explicit sufficient instance condition for the modular lifetime theorem; a stronger sound oracle can be substituted without changing that accounting argument. In our positive complete-market adapter, interval separation creates requester preference DAGs and the fixed public priority lists create provider DAGs. The published oracle then certifies a common optimum. Requiring positive lower endpoints ensures that omitting the outside option is valid. We do not apply this adapter to signed Jester or training gains.

\paragraph{A capacity-constrained information bound.}
For each alternative $\lambda$ in Eq.~\eqref{eq:history-load}, let $A$ be the event of returning $M^\star(\mu)$. Correctness gives $\Prb_\mu(A)\geq1-\delta$ and $\Prb_\lambda(A)\leq\delta$. The augmented-transcript chain rule, with a common reporting kernel and no informative unaccounted historical arrivals, gives
\[
\operatorname{kl}(1-\delta,\delta)
\leq D_0(\lambda)+\sum_e\E_\mu N_e^{\rm new}
\operatorname{kl}(\mu_e,\lambda_e).
\]
This is the usual bandit change-of-measure inequality with initial information \citep{kaufmann2016complexity}. Conditional independence of the Bernoulli edge rewards makes the simultaneous-round KL additive. For every sample path with elapsed stopping time $\tau_{\rm stop}$, feasibility implies $\sum_jN_{ij}^{\rm new}\leq\tau_{\rm stop}$ and $\sum_iN_{ij}^{\rm new}\leq\tau_{\rm stop}$. Taking expectations proves that $x_e=\E_\mu N_e^{\rm new}$ and $\tau=\E_\mu\tau_{\rm stop}$ are feasible for the optimization in Eq.~\eqref{eq:history-load}. The infimum is therefore a lower bound. Restricting to a finite subset of alternatives gives a finite linear program and preserves validity. The number of constraints or their tightness is not controlled here.

For illustration, if two distinct pivotal alternatives demand at least $b_1,b_2$ new pulls on edges sharing one requester, elapsed time is at least $b_1+b_2$. If they have disjoint endpoints, the load restriction alone requires only $\max\{b_1,b_2\}$. Historical evidence can reduce these demands differently through $D_0(\lambda)$. Delayed arrival can create additional waiting beyond the load bound; no claim makes this lower bound sufficient or matches it to the global-width upper bound.

\paragraph{Variance-sensitive evidence under an additional assumption.}
The training-replay extension uses i.i.d. reward streams within each edge lifetime. This is stronger than Assumption~\ref{ass:reward}; it is satisfied by sampling with replacement from a fixed empirical pool. Let $V_{e,n}$ be the unbiased sample variance and $\Lambda_{\rm EB}=\log(4\max\{1,|\cE|\}T_{\rm conf}^2/\delta)$. For $n\geq2$, use the empirical Bernstein radius
\begin{equation}
r_{\rm EB}(e,n)=\sqrt{2V_{e,n}\Lambda_{\rm EB}/n}
 +\frac{14\Lambda_{\rm EB}}{3(n-1)},\label{eq:eb}
\end{equation}
and retain the prior box at $n<2$. The factor 14 is the range-two rescaling of the two-sided bound of \citet{maurer2009empirical}. Applying that bound to each chronological prefix in each lifetime, and union bounding over at most $|\cE|T_{\rm conf}^2$ such pairs, gives simultaneous coverage $1-\delta$. Complete prefixes ensure that selection and reward-dependent reporting do not substitute a censored subset. Intersecting intervals within a lifetime preserves coverage. Neither this argument nor the i.i.d. assumption asserts independence between a reward and its reporting delay.

The modular lifetime proof also applies to this extension. Since $V_{e,n}\leq n/(n-1)\leq2$ for $[-1,1]$ data, replace $q_h$ by
\[
q_h^{\rm EB}=\max\left\{2,\left\lceil
512\Lambda_{\rm EB}/\rho_h^2+1+112\Lambda_{\rm EB}/(3\rho_h)
\right\rceil\right\}.
\]
For $n\geq q_h^{\rm EB}$, the square-root term is strictly below $\rho_h/8$, and the linear term is at most $\rho_h/8$. Thus width is below $\rho_h/2$, which is exactly the property used to charge doubling batches. Replace the thresholds and sufficient dwell accordingly. This worst-case bound is conservative; the empirical radius can shrink much sooner when observed variance is small. We attribute the concentration inequality to its source and do not claim a new variance estimator or a guarantee for arbitrary martingale rewards.

\paragraph{The statistical price of a prescribed trace.}
The obstruction above is not just a loose upper bound for an otherwise adequate trace learner. For $0<\eta\leq .1$, the two instances $\mu(+\eta)$ and $\mu(-\eta)$ have different first canonical proposals but the same optimal outcome. Any procedure identifying that first proposal with error at most $\delta\leq .1$ in both worlds needs
\begin{equation}
\E N_{1a}\geq\log(1/\delta)/(64\eta^2)\label{eq:trace-penalty}
\end{equation}
in either world, with no initial evidence and a shared reporting kernel: only that Bernoulli edge differs, transcript data processing requires $\operatorname{kl}(1-\delta,\delta)$ information, and its per-pull KL is at most $32\eta^2$. In contrast, sound certification of requester 2's strict preference for $a$ and requester 1's positive utility for $b$ already determines the outcome; these margins are bounded away from zero uniformly in $\eta$. Thus the oracle has a positive uniform resolution radius while the imposed trace has divergent identification cost. This compares information required by two certification objectives; it does not establish dominance over prior outcome-based methods or match expected stopping lower bounds to a finite-horizon high-probability upper bound.

\paragraph{Warm starts and capacity accounting.}
If an unchanged edge starts epoch $h$ with $n^0_{e,h}\geq q_h$ observations, its confidence width is already below the selection threshold and it needs no new pulls. If $n^0_{e,h}<q_h$, at most $2q_h-n^0_{e,h}$ new pulls can be issued. This makes the amortization operational: an update replaces the prior count by zero only where validity was lost. For a fixed realized vector of exploratory counts $P_{ij}$, any feasible matching schedule must also satisfy
\begin{equation}
 Q\geq\max\left\{\max_i\sum_jP_{ij},\ \max_j\sum_iP_{ij}\right\}\geq P/N.\label{eq:load}
\end{equation}
The first inequality follows from one assignment per participant per round, and the second from averaging requester loads. Highly concentrated information demands therefore limit parallel speedup even if many participants are available. These capacity constraints are distinct from the information needed to certify an outcome. Our greedy schedule is feasible but is not proved to attain this load bound under adaptive evidence reuse.

\section{Revision experiments and reproducibility}\label{app:revision-experiments}
\paragraph{Design and accounting.}
The exclusion suite has 270 paired tasks: 90 cyclic (three perturbations), 90 joint dynamic (three delay placements), 60 native stationary (two market sizes), and 30 signed Jester. The published PSM certificate adds 240 tasks on the same positive complete configurations. Every synthetic setting has 30 seeds. Each dataset replay adds 60 paired tasks (ten data splits, three reward seeds, two delay conditions). The binary, ten-class, EB, fresh-offer revision and annotation-revision suites are reported separately. Together with the synthetic/native suites, the revision contains 810 paired tasks, 2,250 method trajectories and 4,350 epoch records. There are 9,000 newly executed training/evaluation jobs, excluding reused empirical draws. CPU processes execute all new runs locally; connection to the previously authorized A40 server timed out, and none of these results is described as a GPU run.

Every capped run is retained. Stopping-round summaries use $\min\{\tau,T\}$ for identification policies, not a successful-run-only mean. A nonfinal failure causes unmatched fallback through the remaining horizon, with all regret and stability losses counted. Fallback epochs are never reported as completed repairs. In the cyclic joint suite, $H=4$, $T=4\times10^5$, and only edges $(1,a),(1,c)$ are invalidated. Changed-edge delay attaches to these two edges; unchanged-edge delay attaches to $(3,b),(3,c)$. A positive Bernoulli reward reports immediately; a zero reward incurs 100 rounds on a designated slow edge. Both allocations in the cycle are unique. Observed completion is empirical evidence, separate from the conservative sufficient dwell assumption.

\input{results/exclusion/joint_table}

\paragraph{Native baseline fidelity.}
PCOS-A4 implements Algorithm 4 of \citet{athanasopoulos2025pcos}, using empirical means, radii $\sqrt{\log(4NK n_{ij}^2/\delta)/(2n_{ij})}$, adaptive overlap-based active sets, and one new observation per active edge per sweep. A bipartite minimum edge coloring gives exactly the active graph's maximum degree in feasible matching rounds. It is implemented by regularizing a bipartite multigraph and repeatedly extracting perfect matchings. Native PCOS assumes complete acceptability and excludes the outside option. Our native comparison uses positive Bernoulli means and immediate feedback, within that domain. ID order resolves empirical ties. We verified the active-set and confidence update rules against the authors' public AAMAS implementation (commit \texttt{5822240f57a8701cb1d1e2b69529841dc4f1d79b}). The implementation is independent, not a claim of bitwise reproduction of the authors' experiments or random streams.

C-UCB uses empirical reward plus $\sqrt{3\log(\max\{2,t\})/(2n_{ij})}$, with unsampled edges optimistically infinite, followed by DA. C-ETC explores round-robin for $K\lceil T^{2/3}/K\rceil$ rounds, then commits to empirical DA; this budget is fixed before evaluation and not tuned on latent gaps. These two controls do not provide a $\delta$-correct stopping certificate. All native methods share seeded per-edge Bernoulli streams. Different confidence rules in full-algorithm comparisons are intentional; common-radius oracle comparisons isolate a different question.

PSM-P calls the public UAI 2026 certificate implementation at commit \texttt{f63880bedd2604839ef33f4a62e23c8791c1531b}, through \texttt{scripts/pervasive\_adapter.py}. Its source is \url{https://github.com/a-athanasopoulos/Probably-Correct-Optimal-Stable-Matching}. The wrapper constructs preference DAGs and calls the authors' super-stability/pervasiveness checks; the sampling protocol remains ours. Thus it neither reproduces full E-PSM nor attributes our delayed-retention extension to the source paper. All source hashes and configurations are recorded. Published PSM and Hybrid are both successful on the cyclic example, which precludes claiming that this family separates our outcome certificate from all earlier methods.

\paragraph{Statistical reporting.}
Synthetic/native summaries report mean and SE over 30 paired seeds or markets. The joint-regret intervals use 10,000 paired bootstrap resamples, seed 20270914, with no multiplicity correction; they are descriptive. We report the changed-edge Hybrid interval even though it includes zero. For training replay, three policy seeds share each data split: uncertainty is computed from ten split-level averages, not from treating 30 seeds as independent datasets. Exact completion counts are also reported. Data splits reuse observations from the same finite public dataset, so split variation does not imply independent population samples.

\paragraph{Real training-feedback distributions.}
Both protocols use \texttt{sklearn.datasets.load\_digits}, 1,797 images, features divided by 16, stratified 70/30 splits with seeds $31000+s$, $s=0,\ldots,9$. Candidate offers are nested prefixes of one randomized training pool of sizes 20/40/80/120. Thirty independently seeded bootstrap trials per buyer--offer pair produce 3,600 actual training/evaluation jobs per protocol. No held-out scores select offers or model parameters. Every job starts from zero and uses 150 full-batch gradient steps, learning rate .2 and weight penalty .001. Split indices, features, anchor examples, offers, job rewards, and wall times are retained.

The binary protocol uses tasks $(0,1),(3,8),(4,9)$, with five anchor examples per class. It uses utility $\operatorname{clip}(\text{accuracy gain}/.1,-1,1)$. The follow-up ten-class protocol uses three fixed feature views (all 64 pixels, the central $6\times6$ pixels, and every other flattened pixel), two anchor examples per class, and multinomial logistic regression. Its utility is $\operatorname{clip}(\text{accuracy gain}/.5,-1,1)$. The constants define the reported utility scale and are not chosen per market. Binary failure motivated the second protocol; both complete result sets remain reported, without claiming a preregistered study.

Replay draws a stored reward--duration pair uniformly with replacement whenever the policy issues an edge pull. It therefore preserves dependence between training gain and duration within each empirical pool. Its target mean is exactly that pool's average, which is used only for offline matching and loss evaluation. A 1 ms decision slot converts a measured job duration $s$ into delay $\lceil s/.001\rceil-1$, matching the simulator's convention that zero-delay feedback is available at the next round. An immediate-feedback control uses the same reward sequence. Capacity and provider priorities (seed $6161+s$ per split) are simulated. The empirical maximum delay is a bound for this replay only, not for future training jobs. Rewards are in $[-1,1]$, and the initial replay uses the conservative $\sigma=1$ radius and $T=10^5$. A separate ten-class EB ablation uses Eq.~\eqref{eq:eb}, the same feedback pools and seeds, and the same $\delta,T$; it performs no new training jobs. The variance ablation was added after diagnosing conservative-radius failures. We retain both versions and do not attribute their difference solely to the outcome oracle.

\input{results/exclusion/replay_table}

\paragraph{Notified revisions with actual training feedback.}
We extend the ten-class empirical market to two epochs of $300{,}000$ rounds. In the first revision condition, provider 1's 20-example offer is replaced by a fresh training subset, excluding its original offer and the anchor/validation examples. Features, the anchor, and the validation split remain fixed. Three buyers, ten splits, and 30 trials require 900 additional actual jobs; all other feedback pools are identical across epochs. The notification invalidates precisely this provider's three edges. All policies use EB intervals and are compared under retaining unchanged-edge statistics versus resetting every edge. Initial failures trigger the full-horizon fallback. The fresh-offer condition changes the target allocation in only one of ten splits, so its main interpretation is outcome revalidation.

\input{results/exclusion/training_repair_table}

A second, separately reported condition changes the annotations of the fixed 120-example offer of provider 4. Exactly 24 training labels are replaced by a label uniformly selected from the other nine classes, with corruption held fixed across bootstrap trials. This controlled 20\% symmetric-noise stress follows the type of label corruption used by \citet{han2018coteaching}; our digits/softmax market is an adaptation, not their benchmark or algorithm. Anchor and validation labels remain clean, and the other providers' pools are unchanged. This adds another 900 actual jobs and changes the target allocation in five of ten data splits. Provider identity, noise rate, seeds, and all cases are fixed before running these policies; neither condition selects markets by certification success. The experiment tests a specified empirical quality shift, not naturally occurring commercial data deterioration.

Table~\ref{tab:training-quality} reports both delay conditions, including all failures.

\paragraph{Verification and limits.}
Tests check possible-stability equivalence against all endpoint completions on small boxes, all 512 three-by-three graphs for minimum edge coloring, signed/outside edge cases, cyclic perturbations, and capped baseline runs. Existing complete-batch and feasible-schedule tests remain in use. All 36 implementation tests pass, including a separate check of the published PSM adapter. A trajectory audit checks 157,839 batch groups, observes no incorrect output among 2,583 completed epochs, and verifies the pathwise loss bounds in all 2,565 completed batch-protocol epochs. The other 18 completed epochs belong to PCOS-A4. The audit retains 604 fallback epoch records. These tests establish finite implementation checks; the proofs above remain necessary. In particular, neither a finite set of correctly completed runs nor a passed test suite establishes statistical coverage for every instance.

%% file: results/exclusion/joint_table.tex
\begin{table}[htbp]
\centering\small
\caption{Four-epoch cyclic repairs. Paired total-regret difference (retain minus reset), with descriptive 95\% bootstrap intervals over 30 seeds. All Hybrid and PSM repairs complete.}
\label{tab:joint-new}
\begin{tabular}{lrr}\toprule
Delayed edges&Hybrid&Published PSM oracle\\\midrule
None & -787 [-1,412, -178] & -1,108 [-1,742, -494]\\
Changed & -559 [-1,156, 24] & -880 [-1,495, -284]\\
Unchanged & -1,897 [-2,509, -1,293] & -2,226 [-2,853, -1,612]\\
\bottomrule\end{tabular}
\end{table}

%% file: results/exclusion/replay_table.tex
\begin{table}[htbp]
\centering\small
\caption{Empirical training-feedback replay: certified trials out of 30 and mean total regret $\pm$ SE across ten split-level averages (three reward seeds each). The two task protocols use different utility scales; regret magnitudes should not be compared across protocols.}
\label{tab:replay-new}
\begin{tabular}{llrrrr}\toprule
Protocol&Method&\multicolumn{2}{c}{Immediate}&\multicolumn{2}{c}{Measured delay}\\
&&Cert.&$\mathcal R_T$&Cert.&$\mathcal R_T$\\\midrule
Binary & Forced & 0/30 & 25,321 $\pm$ 3,432 & 0/30 & 25,430 $\pm$ 3,464\\
Binary & Hybrid & 0/30 & 25,321 $\pm$ 3,432 & 0/30 & 25,430 $\pm$ 3,464\\
Binary & Prefix & 0/30 & 25,321 $\pm$ 3,432 & 0/30 & 25,430 $\pm$ 3,464\\
Ten-class & Forced & 0/30 & 26,680 $\pm$ 1,780 & 0/30 & 26,704 $\pm$ 1,782\\
Ten-class & Hybrid & 2/30 & 26,537 $\pm$ 1,807 & 2/30 & 26,564 $\pm$ 1,809\\
Ten-class & Prefix & 0/30 & 26,680 $\pm$ 1,780 & 0/30 & 26,704 $\pm$ 1,782\\
Ten-class EB & Forced & 21/30 & 15,627 $\pm$ 1,850 & 21/30 & 15,691 $\pm$ 1,849\\
Ten-class EB & Hybrid & 21/30 & 14,064 $\pm$ 2,264 & 21/30 & 14,127 $\pm$ 2,262\\
Ten-class EB & Prefix & 12/30 & 20,072 $\pm$ 2,269 & 12/30 & 20,120 $\pm$ 2,269\\
\bottomrule\end{tabular}
\end{table}

%% file: results/exclusion/training_repair_table.tex
\begin{table}[htbp]
\centering\small
\caption{Empirical provider revision with EB intervals, two epochs of $300{,}000$ rounds. Each difference is retain minus reset, averaged within each of ten splits before constructing a paired 95\% bootstrap interval. Both-epoch completion counts include every run and failure.}
\label{tab:training-repair}
\begin{tabular}{llrrr}\toprule
Delay&Oracle&Retain complete&Reset complete&Regret difference [95\% CI]\\\midrule
Immediate & Forced & 23/30 & 23/30 & -8,990 [-13,944, -3,988]\\
Immediate & Hybrid & 23/30 & 23/30 & -7,605 [-12,845, -2,670]\\
Measured & Forced & 23/30 & 23/30 & -8,989 [-13,944, -3,981]\\
Measured & Hybrid & 23/30 & 23/30 & -7,604 [-12,845, -2,665]\\
\bottomrule\end{tabular}
\end{table}

%% file: sections/load_experiment_details.tex
\section{Residual-load experiment protocol}\label{app:loadexperiments}
All new policy runs use $\delta=.01$, chronological complete-phase evidence, and the same finite horizon in each paired comparison. The implementation uses a compressed exact multigraph decomposition. Every matching block launches its prescribed jobs before the next block; it does not wait for prior blocks within that phase. Feedback is committed only after every phase reward arrives. Comparison names distinguish RLR (exact packing, one phase barrier), greedy packing (same phase quotas and barrier), block barriers (same exact packing, a wait after every compressed block), and the preceding width-greedy O-CGR. The first three have identical completed-phase prefixes for a fixed number of completed phases; horizon truncation can produce different issued counts. Their allocations within a phase can still have different regret even with identical completion time.

\paragraph{Larger markets.}
For each size $N=K\in\{8,16,32\}$ and each of ten seeds, draw provider priorities independently uniformly over permutations. Unfiltered means are independently uniform on $[.1,.9]$. A separate, labeled separated condition sets diagonal means to $.85$ and off-diagonal means independently uniform on $[.1,.45]$. The separated condition diagnoses execution cost without small optimal-assignment gaps; it is not a replacement for the unfiltered condition. Use independent Bernoulli streams per edge, a $30{,}000$-round horizon, and constant delay zero or 100. Each seed is paired across all four scheduling rules with Forced-P fixed. Market seeds are $277001+s$; feedback streams use the same edge/epoch seeding scheme as the preceding suite. All failures are charged through the horizon. No result is filtered by realized gaps or certification success.

\paragraph{Published stopping rule.}
Thirty additional random markets each at sizes 3 and 5 compare RLR and width-greedy scheduling with the same pinned author PSM oracle, radius, and streams. The means and priorities use the same generation rule as the larger-market suite. These are new paired seeds relative to the preceding PCOS table, so cross-table completion differences are not causal method comparisons. PSM-P remains a certificate adaptation, not a full E-PSM implementation. The original complete PCOS-A4 and regret-policy comparison is retained in Appendix~\ref{app:preload}.

\paragraph{Empirical training revisions.}
Reuse exactly the previously measured ten-class quality-shift pools: ten splits, three policy seeds per split, 20\% fixed corruption of the designated large offer, two epochs of $300{,}000$ rounds, and measured or immediate delays. No model is retrained in this revision. Cross two stopping rules (Forced, Hybrid), two schedules (RLR, width-greedy), and retention/reset. Means are available only to the evaluator; the policy receives selected reward--duration pairs sampled with replacement. The original empirical Bernstein intervals and all data preprocessing stay fixed. A failed first epoch triggers the documented unmatched fallback for the entire second epoch. Paired bootstrap intervals average the three policy seeds within each split before resampling the ten splits, avoiding treating seeds as independent datasets.

\paragraph{Frozen-quota diagnosis.}
At sizes 8, 16, 32, and 64, thirty seeded integer target matrices draw each entry uniformly from 100 through 1000. A row update invalidates all edges of one requester; a dispersed update invalidates the diagonal. Both invalidate $N$ edges. Unchanged entries retain their full target count; reset discards all entries. We measure the exact decomposition length, endpoint lower bound, total pulls, and CPU decomposition time. These are prescribed-demand scheduling measurements, not stochastic identification or new model-training trials. The public quota input is not a latent gap supplied to RLR.

\paragraph{Audit.}
Raw outputs include every task, method, failure, phase quota, and code snapshot. Checks recompute endpoint loads, phase issue totals, horizon accounting, correctness against evaluator DA on completed runs, and identical-prefix scheduling ablations where both finish. Small random multigraph tests independently verify exact quota realization and matching feasibility, including unequal counts and dummy-edge removal. A separate residual-count test checks the factor-two dyadic inequality over heterogeneous initial counts. These implementation checks are not independent mathematical peer review.

\subsection{Width-adaptive follow-up and limits of the guarantee}
After observing the dyadic policy's negative empirical replay result, we evaluated a declared follow-up, RLR-W, without changing the data pools, horizon, confidence level, or stopping rules. At each phase, retain every edge with width at least half the maximum width and request $\max(1,n_e)$ new samples on each. Use the exact residual-demand decomposition and commit after a single complete-phase barrier. This is the previous width eligibility and doubling rule applied to a whole feasible sequence of matchings, rather than a single matching group. It is not the dyadic algorithm of Theorem~\ref{thm:load}.

For phase demand $x^{(s)}$, the exploratory time is exactly $\mathsf L(x^{(s)})$ and waiting is at most $d$. Thus $Q=\sum_s\mathsf L(x^{(s)})$ and $W\leq dS$ for $S$ nonempty phases, with the same insertion-loss bound. If unresolved on the confidence event, every selected edge has width greater than $\rho_h/2$. The argument of Theorem~\ref{thm:modular} bounds its start count below $q_h$, hence $P\leq2\sum_{e,g}q_{e,g}$. Charging each phase to a selected edge yields the conservative $S\leq\sum_{e,g}b_{e,g}$. We do not transfer the dyadic factor-two residual-load bound or its logarithmic per-epoch phase count to this variant.

In measured-delay Hybrid replay, RLR-W retain/reset regret is 45,826/53,928, with 20/30 complete runs for each. Its retain-minus-reset difference is $-8,102$, split-bootstrap 95\% interval $[-13,957,-3,040]$. However, RLR-W retain minus Width retain is $+5,280$ with interval $[2,686,8,264]$. Dyadic RLR retain minus Width retain is also positive: $+3,999$ with interval $[1,412,7,220]$. The dyadic retain-minus-reset difference is $-9,384$ with interval $[-14,824,-4,589]$. Thus history reuse helps within each protocol, but neither new phase policy improves the measured-delay replay relative to Width. The intervals are descriptive and not corrected for multiple comparisons. This is an exploratory reuse of the same ten splits, not independent validation after method selection.

\subsection{Known-hypothesis likelihood benchmark}
The separate simulator \texttt{scripts/fallback\_identification.py} supplies public candidate gaps $\Delta_{ij}=.1+.01((i+2j)\bmod5)$ on a $4\times4$ candidate graph, with private known fallback providers. Only the observation generator knows the actual sign. For the reference-market plots, all candidate means are $1/2-\Delta_{ij}$. The policy uses likelihood boundaries $A=\log(2|E_c|/\delta)$ and targets $\lceil(A+A^{3/4})/\kappa_e\rceil$, doubled on subsequent unresolved phases. Initial sample counts are deterministic $\lfloor\alpha\log(1/\delta)/\kappa_e\rfloor$. Initial sums and subsequent phase sums use independent binomial draws, exactly equivalent to summing independent Bernoulli observations. Since the likelihood depends only on their sum, aggregation changes neither the policy nor its distribution.

The seven log-inverse-error values and four history fractions produce 28 configurations with 100 seeds each. The experiment uses immediate feedback and reports expected-time estimates normalized by $\log(1/\delta)$, together with the analytic first-order load, not a success-conditioned time. Very small $\delta$ values are represented by their logarithm to avoid numerical underflow; the likelihood boundary does not require materializing $\delta$. Numerical evaluations of the finite lower bound use its limiting binary-divergence expression only when the correction is below floating-point precision. All 2,800 reference runs finish without an observed incorrect sign; these simulations cannot verify nominal error probabilities such as $\exp(-3000)$. Separate implementation tests cover each single-positive alternative and compare its unique stable outcome by exhaustive enumeration on small graphs. Finite-confidence overhead is visible and is not removed from the plot.

%% file: sections/pre_load_experiments.tex
\section{Previous certificate and reuse experiments}\label{app:preload}
We compare certificate acceptance, end-to-end learning, and reuse after notification. Forced-P is the original polynomial outcome oracle; Hybrid-P adds Theorem~\ref{thm:exclusion}. PSM-P calls the UAI 2026 authors' pervasive-matching implementation with public provider priorities, restricting it to certified-positive complete markets. These three use the same horizon-dependent confidence radius, $\delta=.01$, parallel width schedule, and paired per-edge reward streams. PSM-P is a certificate comparison, not full E-PSM. Separately, PCOS-A4 implements the complete adaptive sampler of \citet{athanasopoulos2025pcos}, including its native per-edge radius, active-pair updates, and minimum edge-coloring sweeps. C-UCB and C-ETC provide regret controls \citep{liu2020competing}. Appendix~\ref{app:revision-experiments} gives source commits, parameter choices, uncertainty estimates, and all failures.

\paragraph{A structural failure beyond forced pairs.}
For Eq.~\eqref{eq:cyclic} with $\eta\in\{0,.001,.01\}$, Bernoulli feedback, and $T=10^5$, each configuration uses 30 seeds. At the exact tie, Forced-P completes 0/30 trials; Hybrid-P and PSM-P both complete 30/30. Hybrid-P uses about 3,680 rounds. The outcome remains unchanged under the perturbations. Enumeration of all 34 feasible allocations independently verifies uniqueness; this is a diagnostic beyond observing correct outputs. The benefit is shared with the existing pervasive oracle, as the theory comparison predicts.

\begin{figure}[htbp]
\centering\includegraphics[width=.96\linewidth]{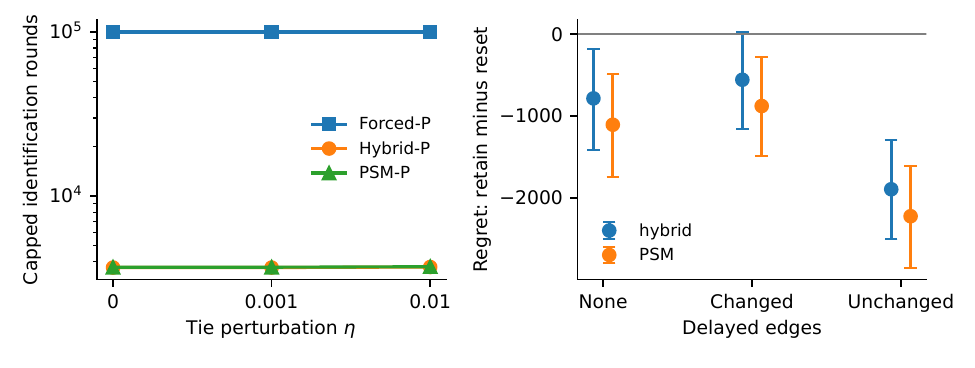}
\caption{Left: the cyclic family, with unsuccessful runs capped at $10^5$ rounds. Right: paired total-regret differences across four epochs; bars are descriptive 95\% bootstrap intervals over 30 seeds. Negative values favor retention. The published PSM oracle benefits from the same reuse protocol.}\label{fig:revision}
\end{figure}

\paragraph{Joint certification and reuse.}
We alternate Eq.~\eqref{eq:cyclic} with a profile changing only requester 1's utilities for $a,c$ from $(.8,.5)$ to $(.5,.8)$. The unique outcome changes to $(1c,2a,3b)$. Four epochs each last $10^5$ rounds; notifications name precisely these two edges. We compare retain/reset under immediate feedback and under reward-dependent delays up to 100 on changed or unchanged edges. Every Hybrid/PSM repair completes; Forced-P fails in the initial tied epoch and then follows the specified unmatched fallback, whose full-horizon loss is counted. These finite-budget trials demonstrate empirical completion; their dwell is below the conservative sufficient theorem bound. Figure~\ref{fig:revision} isolates reuse within each successful certificate; Appendix~\ref{app:revision-experiments} gives the numerical intervals. In contrast, the earlier ring suite gives identical retained trajectories for all three original certificates (Appendix~\ref{app:newexperiments}), so its reuse benefit alone could not establish a new certificate advantage.

\paragraph{Native full-algorithm comparison.}
For $N=K\in\{3,5\}$, independently draw utilities uniformly from $[.1,.9]$ and provider priorities uniformly over permutations, with 30 markets per size and $T=30{,}000$. No gap filtering or favorable-market selection is applied. Table~\ref{tab:native-new} reports all trials. At size three, Hybrid-P certifies 20/30, PSM-P 21/30, and PCOS-A4 17/30; size five remains difficult for all certification policies. Low regret from C-UCB/C-ETC does not imply certified identification, but the large regret gap shows that these repair protocols are not competitive stationary regret policies. We report that distinction explicitly rather than infer general superiority from the cyclic construction.

\input{results/exclusion/native_table}

\paragraph{Real-data failures and training-feedback replay.}
On the unchanged signed-Jester protocol, Hybrid-P completes 5/30 trials versus Forced-P's 4/30. At exact-mean boxes, both oracles certify 28/30 markets: most observed failures are therefore finite-budget failures, while two remain structural for these oracles. Training replay uses ten stratified data splits, nested offers of 20/40/80/120 examples, and 30 actual training trials per buyer--offer pair. Policies sample the empirical joint distribution of validation gain and measured duration with replacement; durations define delays in 1 ms simulation slots. Means are known only to the evaluator. This is an empirical replay with simulated market rules. Binary tasks yield no completed certifications; ten-class feature views give Hybrid 2/30 with the conservative radius. An i.i.d.-stream extension using empirical Bernstein intervals \citep{maurer2009empirical} raises ten-class completion to 21/30 for both Hybrid and Forced, versus 12/30 for Prefix. This improvement is mainly due to variance adaptation, not a unique certificate advantage. Appendix~\ref{app:revision-experiments} reports split-level uncertainty, both failed protocols, and the additional assumption and proof.

\paragraph{Revalidation after an actual offer revision.}
Two additional empirical replay conditions use newly trained feedback: replacing a small offer, and corrupting 20\% of the labels in a fixed large offer, following the symmetric-noise stress used in noisy-label learning \citep{han2018coteaching}. The latter changes the target allocation in 5/10 splits. With measured delays, two epochs of $300{,}000$ rounds, and EB intervals, Hybrid-retain has mean regret 40,545 versus 46,820 after reset; the paired difference is $-6{,}275$ with split-level 95\% bootstrap interval $[-9{,}374,-3{,}463]$. Both complete 20/30 double-epoch runs. All failures and the less disruptive fresh-offer condition remain in Appendix~\ref{app:revision-experiments}; these are controlled empirical revisions, not commercial deployment results.

\input{results/exclusion/training_repair_quality_table}

\FloatBarrier

%% file: results/exclusion/native_table.tex
\begin{table}[htbp]
\centering\small
\caption{Native stationary comparison, $T=30{,}000$. Mean $\pm$ SE over 30 independent markets per size; failed identification is capped at $T$. C-UCB/C-ETC do not certify. PSM-P is an oracle ablation; PCOS-A4 is a full sampling baseline.}
\label{tab:native-new}
\begin{tabular}{llrrr}\toprule
Size&Method&Certified&Capped rounds&$\mathcal R_T$\\\midrule
$3\times3$ & Forced-P & 19/30 & 19,607 $\pm$ 1,881 & 16,423 $\pm$ 1,902\\
 & Hybrid-P & 20/30 & 18,838 $\pm$ 1,893 & 15,918 $\pm$ 1,936\\
 & PSM-P & 21/30 & 18,590 $\pm$ 1,764 & 15,658 $\pm$ 1,773\\
 & PCOS-A4 & 17/30 & 19,585 $\pm$ 1,926 & 19,031 $\pm$ 2,348\\
 & C-UCB & -- & -- & 585 $\pm$ 141\\
 & C-ETC & -- & -- & 801 $\pm$ 256\\
\midrule
$5\times5$ & Forced-P & 0/30 & 30,000 $\pm$ 0 & 57,848 $\pm$ 2,213\\
 & Hybrid-P & 1/30 & 29,762 $\pm$ 238 & 57,560 $\pm$ 2,332\\
 & PSM-P & 3/30 & 29,126 $\pm$ 518 & 56,171 $\pm$ 2,381\\
 & PCOS-A4 & 1/30 & 29,884 $\pm$ 116 & 50,779 $\pm$ 2,419\\
 & C-UCB & -- & -- & 2,037 $\pm$ 371\\
 & C-ETC & -- & -- & 3,209 $\pm$ 1,035\\
\bottomrule\end{tabular}
\end{table}

%% file: results/exclusion/training_repair_quality_table.tex
\begin{table}[htbp]
\centering\small
\caption{Controlled label-quality revision with EB intervals, two epochs of $300{,}000$ rounds. Each difference is retain minus reset, averaged within each of ten splits before constructing a paired 95\% bootstrap interval. Both-epoch completion counts include every run and failure.}
\label{tab:training-quality}
\begin{tabular}{llrrr}\toprule
Delay&Oracle&Retain complete&Reset complete&Regret difference [95\% CI]\\\midrule
Immediate & Forced & 20/30 & 20/30 & -6,190 [-9,061, -3,538]\\
Immediate & Hybrid & 20/30 & 20/30 & -6,283 [-9,386, -3,462]\\
Measured & Forced & 20/30 & 20/30 & -6,188 [-9,055, -3,541]\\
Measured & Hybrid & 20/30 & 20/30 & -6,275 [-9,374, -3,463]\\
\bottomrule\end{tabular}
\end{table}

%% file: sections/trace_reference.tex
\section{A certificate for the proposal trace}\label{sec:trace}
Fix canonical DA: always process the smallest-ID unmatched requester that has not exhausted its options. At a query $q$, requester $i_q$ chooses its best element from its untried providers $A_q\setminus\{0\}$ and outside option $0$. A proposal deletes that provider from its untried set. Providers retain their preferred requester among new proposals and their current tentative partner. If the outside option wins, the requester becomes exhausted. There are at most $|\cE|+N$ queries. Each real proposal is made at most once.

Suppose each edge has an interval $[\ell_{i,j},u_{i,j}]$, and $\ell_{i,0}=u_{i,0}=0$. A query is certified when an alternative $a_q$ satisfies
\begin{equation}
 \ell_{i_q,a_q}>\max_{b\in A_q\setminus\{a_q\}}u_{i_q,b}.\label{eq:querycert}
\end{equation}
A sole outside option is certified without a comparison. If no alternative satisfies Eq.~\eqref{eq:querycert}, the procedure returns the first unresolved query. It never fills an unresolved preference arbitrarily.

\begin{theorem}[Trace certificate]\label{thm:trace}
If every query in a completed canonical trace is certified, every utility matrix inside the interval box has the same canonical trace and requester-optimal DA outcome. A full check costs $O((|\cE|+N)(K+1))$ comparisons. No ordering between losing alternatives is required.
\end{theorem}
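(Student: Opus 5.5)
The plan is induction over the canonical proposal trace, run simultaneously for every utility matrix $\nu$ in the box. I will show that every query $q$ is posed in the same state for all such $\nu$: the same requester $i_q$, the same untried set $A_q$, the same tentative assignments, and the same exhausted set.

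\textbf{Base case.} At the start the state is fixed and independent of utilities. All requesters are free, all providers are unmatched, and every requester's untried set equals its feasible providers.

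\textbf{Inductive step.} Suppose query $q$ is reached in a common state. Certification, Eq.~\eqref{eq:querycert}, gives an alternative $a_q$ with $\ell_{i_q,a_q}>u_{i_q,b}$ for every other $b\in A_q$. Any $\nu$ in the box satisfies $\nu_{i_q,a_q}\ge\ell_{i_q,a_q}$ and $\nu_{i_q,b}\le u_{i_q,b}$. Hence $a_q$ is the strict argmax for every completion, and the fixed tie rule is never invoked. The outside option fits the same inequality through the convention $\ell_{i,0}=u_{i,0}=0$. A sole outside option is trivially the choice.

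The next state is then a common function of the current state. It depends on the chosen alternative and on the provider's response, and that response is determined by the public priorities $\succ_j$ alone. The deleted provider, the tentative partner, the rejected requester, the exhaustion flag, and the smallest-ID selection rule therefore all evolve identically.

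Only the relative order between $a_q$ and the losing alternatives is used. The ordering among the losing alternatives themselves never enters, which gives the last sentence of the theorem.

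\textbf{Termination.} The trace halts when no unmatched, unexhausted requester remains, and this halting condition is again a function of the shared state. Every $\nu$ in the box therefore produces the same finite canonical trace and the same final matching. By \citet{gale1962college}, this final matching is the requester-optimal stable outcome $\DA(\nu)$ under the stated refinement. The one point to check is that the canonical processing order is a valid DA execution, so order-independence of the outcome applies. This holds because each requester proposes in decreasing preference order and the outside option is treated as its final acceptable choice.

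\textbf{Cost.} Each real proposal deletes an edge from its requester's untried set, so there are at most $|\cE|$ real proposals. Each requester is exhausted at most once, adding at most $N$ outside-option queries. A single query computes a max over at most $K+1$ alternatives and one comparison per alternative, which is $O(K+1)$ work. Provider updates cost $O(1)$ given precomputed priority ranks. The total is $O((|\cE|+N)(K+1))$.

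The main subtlety is making the induction invariant precise enough to cover ties and the zero-utility refinement. The strict inequality in Eq.~\eqref{eq:querycert} must be used so that boundary completions cannot flip a choice. I will also need the separate convention that a requester picks the outside option whenever it strictly beats all remaining upper bounds.
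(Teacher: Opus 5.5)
Your proposal is correct and follows the paper's proof: induction over queries from a common initial state, with the strict separation in Eq.~\eqref{eq:querycert} fixing each choice for every completion, public priorities fixing acceptance and rejection, termination and requester optimality deferred to standard DA, and a cost bound of at most $|\cE|+N$ queries each checking at most $K+1$ alternatives. Your version adds detail the paper leaves implicit. In particular, you explicitly note that the sequential smallest-ID execution is a valid DA run and that strictness makes the zero-utility refinement irrelevant.
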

\begin{proof}
Induct over queries. The prior proposal state is identical for all matrices in the box. Equation~\eqref{eq:querycert} fixes the next choice, and public provider priorities fix its acceptance and rejection. Thus the next state is identical. Termination and requester optimality follow from DA (Appendix~\ref{app:da}). Each query checks at most $K+1$ alternatives.
\end{proof}
This is a sufficient certificate for optimality, not a characterization of all interval boxes with a common optimal outcome. A box may have a common outcome but different proposal traces. In particular, we do not solve minimum-certificate discovery.

For epoch $h$, let $\mathcal T_h$ be its true canonical trace. For the original trace-specific guarantees, we assume each nontrivial query has a unique best alternative, while permitting arbitrary ties among losers. For real edge $e=(i,j)$, define its \emph{trace margin}
\begin{equation}
 \Gamma_{e,h}=\min_{\substack{q\in\mathcal T_h,\ i_q=i,\ b\in A_q\setminus\{a_q\}\\j\in\{a_q,b\}}}
       (\mu_{i,a_q,h}-\mu_{i,b,h}),\label{eq:tracegap}
\end{equation}
with $\min\varnothing=+\infty$. Only winner--competitor comparisons enter. These are analysis parameters; the learner is not given them.

\begin{proposition}[Separation from the global ranking gap]\label{prop:gap-separation}
With one requester and means $(0.8,0.2,0.2+\eta)$, $0\leq\eta\leq0.1$, every finite trace margin is at least $0.5$, whereas the minimum pairwise gap is zero at $\eta=0$ and at most $\eta$ otherwise. Thus the bounds below remain finite as $\eta\downarrow0$.
\end{proposition}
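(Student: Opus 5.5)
The statement to prove is Proposition~\ref{prop:gap-separation}. The plan is to compute the canonical trace for the single-requester market directly and evaluate Eq.~\eqref{eq:tracegap} edge by edge.

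With one requester and three providers $a,b,c$ carrying means $0.8$, $0.2$ and $0.2+\eta$, canonical DA makes a single real query. Its untried set is $A_q=\{a,b,c,0\}$, and the outside option has utility $0$. Since $\eta\le0.1$, the best alternative is uniquely $a_q=a$ with mean $0.8$. The next best value is $0.2+\eta\le0.3$, and all other values are smaller. Provider $a$ has no competitor, so it accepts and the trace terminates after this query. No further query involves the requester; in particular, no later query compares $b$ with $c$.

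Next, evaluate $\Gamma_{e,h}$ for each real edge. For $e=(1,a)$, the minimum runs over all competitors $b'\in\{b,c,0\}$. This gives $\min\{0.8-0.2,\;0.8-0.2-\eta,\;0.8-0\}=0.6-\eta\ge0.5$. For $e=(1,b)$, the only admissible pair is the winner $a$ against $b$, giving $0.6$. For $e=(1,c)$, it gives $0.6-\eta\ge0.5$. Every finite margin is therefore at least $0.5$. In contrast, the minimum pairwise gap includes $|\mu_{1c}-\mu_{1b}|=\eta$, which is $0$ at $\eta=0$ and at most $\eta$ otherwise.

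The only point needing care is confirming that the loser–loser comparison $b$ versus $c$ never enters. This holds because Eq.~\eqref{eq:tracegap} ranges only over winner–competitor pairs of queries in $\mathcal T_h$, and the trace has exactly one query.

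The final sentence of the statement then follows by substituting these margins into the trace-specific bounds that follow. Those bounds depend on $\Gamma$ only through $\Gamma^{-2}$-type terms, which here stay bounded by $0.5^{-2}$ uniformly as $\eta\downarrow0$. The bounds themselves are stated after the proposition, so I would simply note that each is a finite function of the finite margins.
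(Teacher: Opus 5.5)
Your proposal is correct and follows the paper's own argument. The true trace has a single query in which $0.8$ beats $0.2$, $0.2+\eta$ and the outside option, so by Eq.~\eqref{eq:tracegap} the finite margins are $0.6$ and $0.6-\eta\ge 0.5$, while the loser--loser gap $\eta$ enters only the global gap $\Delta_t$. Your edge-by-edge evaluation and your remark that $n^\star_{e,h}$ and $b^\star_{e,h}$ depend on $\Gamma_{e,h}$ only through $\Gamma_{e,h}^{-2}$ spell out what the paper states in one line.
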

The only query selects provider 1 over the two losing providers and the outside option. This example separates our bound from global-gap analysis; adaptive partial-ranking predecessors may also avoid the losing comparison.

\section{Certificate-Guided Repair}\label{sec:cgr}
CGR stores a sample count, sum, and confidence interval for each edge. At a notification, it resets only edges in $\mathcal C_h$. It replays the canonical trace from the empty DA state, retaining all valid statistical evidence. If a prefix is still certified, its proposal states can be reused; the reference implementation rechecks the trace for simplicity. This is statistical reuse, not a claimed sublinear-time dynamic matching algorithm.

For an unresolved query with alternative set $A$, form its possible winners
\begin{equation}
 \mathcal P=\{a\in A:u_{i,a}\geq\max_{b\in A}\ell_{i,b}\}.\label{eq:possible}
\end{equation}
Choose a real alternative in $\mathcal P$ with the largest interval width, breaking ties by provider ID. If it has $n$ completed samples, issue enough pulls to reach $n'=\max\{1,2n\}$. Wait until \emph{all} rewards in this batch have arrived, update the interval, and recheck the certificate. No other batch is pending. This serialization is conservative but makes the delay and sampling arguments explicit. An interrupted nonfinal repair triggers the fallback specified in Appendix~\ref{app:newproofs}.

Let $M^{\rm old}$ be the preceding epoch's learned matching, initially empty. During a pull of $(i,j)$, remove $i$'s old assignment and the old occupant of $j$, assign $i$ to $j$, and leave every other old pair intact. During waiting, execute $M^{\rm old}$. Once certification completes, execute its outcome for the rest of the epoch. All exploration rewards and all waiting-round losses count. Rewards from other executed pairs are ignored by this estimator.

Use the confidence radius in Eq.~\eqref{eq:cs}.

After a completed batch, intersect the previous interval and the prior with $[\widehat\mu_e-r(n),\widehat\mu_e+r(n)]\cap[-1,1]$. With no samples use the prior. If the intersection is empty, a defined fallback uses the latest clipped interval; this occurs only off the confidence event. This rule does not condition the estimator on which rewards returned early.

\begin{algorithm}[t]
\caption{Certificate-Guided Repair (CGR)}\label{alg:cgr}
\KwIn{public graph/priorities, $T,\sigma,\delta$; notified edges $\mathcal C_h$}
Invalidate only $\mathcal C_h$; retain all other edge evidence\;
\While{canonical DA is not fully interval-certified}{
 Let $(i,A)$ be its first unresolved query\;
 Choose the widest real possible winner using Eq.~\eqref{eq:possible}\;
 Double its sample count (start at one), executing modified old matchings\;
 Wait for its whole batch, executing the old matching\;
 Update its confidence interval; replay certified DA\;
}
Execute the certified outcome until the next notification\;
\end{algorithm}

\section{Theoretical guarantees}\label{sec:newanalysis}
\subsection{Confidence and selective sampling}
\begin{lemma}[Uniform complete-batch confidence]\label{lem:adaptive-confidence}
Under Assumptions~\ref{ass:reward}--\ref{ass:announced}, with probability at least $1-\delta$, every interval used by CGR contains its edge's current mean. The claim holds for adaptive edge selection and reward-dependent bounded delays.
\end{lemma}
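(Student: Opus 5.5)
The plan is to run the same exponential-supermartingale argument used for T-Prefix, specialized to CGR's complete-batch rule. Index learning observations on each edge by issue order within their true evidence lifetime $g$. The indicator $X_t$ that round $t$ issues a CGR pull on edge $e$ in lifetime $g$ is predictable: CGR selects the widest possible winner from intervals built on completed batches, so the decision is measurable with respect to the latent history before round $t$. By Assumption~\ref{ass:reward}, the process
\[
Z_s(\lambda)=\exp\Bigl\{\lambda\sum_{t\le s}X_t(Y_{e,t}-\mu_{e,g})-\tfrac{\lambda^2\sigma^2}{2}\sum_{t\le s}X_t\Bigr\}
\]
is a nonnegative supermartingale. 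This holds even when rewards on other edges in the same round are dependent, because the noise is conditionally sub-Gaussian given the full latent past.

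Next I fix a prefix length $n$ and stop at the earlier of the $n$th issue and the lifetime end or record horizon. The two-sided Chernoff/optional-stopping bound then gives
\[
\Prb\bigl(\text{$n$ issues exist and } |\bar Y_{e,g,n}-\mu_{e,g}|>r(n)\bigr)\le 2e^{-\Lambda}.
\]
A union bound over at most $\max\{1,|\cE|\}T_{\rm conf}^2$ edge--lifetime--prefix triples, with $\Lambda$ as in Eq.~\eqref{eq:cs}, yields a single event of probability at least $1-\delta$. On this event every issue-ordered prefix interval contains its current mean.

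The remaining work is to show that every interval CGR actually uses is among those covered. CGR updates an edge only after its whole doubling batch has arrived, so the estimator is always the mean of the first $n$ issued observations in the current lifetime. It is never the mean of an early-arriving subset. Reward-dependent delays therefore affect only \emph{when} a prefix is completed, not \emph{which} observations it contains. Plugging a random, delay-dependent prefix length into an event that is simultaneous over all lengths preserves coverage.

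Announced invalidations (Assumption~\ref{ass:announced}) ensure that retained observations outside $\mathcal C_h$ share the current mean, and reset lifetimes start from the prior $[-1,1]$, which contains the mean because $\mu\in[-1,1]$. Intersecting intervals that each contain the mean, together with the prior and the clip to $[-1,1]$, still contains the mean. The empty-intersection fallback can occur only off the event, so it needs no separate treatment.

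I expect the main obstacle to be the predictability and lifetime bookkeeping. The argument needs issue decisions that do not peek at pending rewards or delays, and lifetime indices that are exogenous, so that the optional stopping at lifetime boundaries is legitimate. I would handle both by noting that CGR's waiting rule means no decision ever depends on pending data, and that notification times are fixed before learning.
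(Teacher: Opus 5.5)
Your proposal is correct and follows essentially the same route as the paper's proof. Both arguments use a per-edge, per-lifetime stopped exponential supermartingale built from predictable issue indicators, a two-sided Chernoff bound at each count, and a union bound over at most $\max\{1,|\cE|\}T_{\rm conf}^2$ edge--lifetime--count triples; both then observe that CGR's whole-batch waiting makes every estimator an issue-order prefix at a random count covered by the simultaneous event, and handle intersections, clipping, unchanged-edge lifetimes and the off-event fallback the same way. The only detail the paper adds is the degenerate case $\sigma=0$, where the noise vanishes almost surely and the intervals are trivially valid.
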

The proof uses chronological samples within an evidence lifetime, a stopped exponential supermartingale, and a union bound over at most $|\cE|T^2$ edge--lifetime--count triples. Each completed batch contributes \emph{all} its samples; arrival ordering cannot select a favorable subset. Full details are in Appendix~\ref{app:newproofs}.

Define, for each finite trace margin,
\begin{equation}
 n^\star_{e,h}=\max\left\{1,\left\lceil\frac{64\sigma^2\Lambda}{\Gamma_{e,h}^2}\right\rceil\right\},\qquad
 b^\star_{e,h}=1+\left\lceil\log_2(2n^\star_{e,h})\right\rceil.\label{eq:threshold}
\end{equation}
An edge with no trace comparison is omitted from all sums. Over evidence lifetime $g$ of edge $e$, let $n^\star_{e,g}$ be the maximum of $n^\star_{e,h}$ over its epochs, and define $b^\star_{e,g}$ analogously.

\begin{theorem}[Selective sample and waiting budgets]\label{thm:selective}
On the event of Lemma~\ref{lem:adaptive-confidence}, CGR never starts a batch on edge $e$ in epoch $h$ when its completed count is at least $n^\star_{e,h}$. If $P_{e,g}$ is the number of issued exploratory pulls in lifetime $(e,g)$ and $B_{e,g}$ its batch count, then
\begin{equation}
 P_{e,g}\leq2n^\star_{e,g},\qquad B_{e,g}\leq b^\star_{e,g}.\label{eq:lifetime}
\end{equation}
Let $\mathcal J$ be the set of edge lifetimes actually sampled. Total exploratory pulls $P$ and waiting rounds $W$ satisfy
\begin{equation}
 P\leq2\sum_{(e,g)\in\mathcal J}n^\star_{e,g},\qquad
 W\leq\sum_{(e,g)\in\mathcal J}d_e b^\star_{e,g}.\label{eq:selectivebudgets}
\end{equation}
Replacing $\mathcal J$ by all edge lifetimes with finite trace margins gives deterministic upper bounds. In epoch $h$, a sufficient repair duration is
\begin{equation}
 F_h=\sum_{e:\Gamma_{e,h}<\infty}(2n^\star_{e,h}+d_e b^\star_{e,h}).\label{eq:dwell}
\end{equation}
Thus repair completes before the next notification when the nonfinal duration is at least this $F_h$, on the confidence event. The final epoch may be truncated by $T$.
\end{theorem}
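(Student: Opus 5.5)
The plan is to prove the three parts of Theorem~\ref{thm:selective} in the order stated: a threshold claim, a per-lifetime count of pulls and batches, and an aggregation that also gives the dwell bound. All work takes place on the event $\mathcal G$ of Lemma~\ref{lem:adaptive-confidence}, where every interval used by CGR contains its current mean.

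\textbf{Threshold claim.} Fix epoch $h$ and suppose CGR selects edge $e=(i,j)$ at an unresolved query $q$ with alternative set $A_q$.

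First, $e$ must take part in some winner--competitor comparison of the true trace $\mathcal T_h$. I expect an induction over queries to give this. Every certified prefix coincides with $\mathcal T_h$ on $\mathcal G$, by Theorem~\ref{thm:trace} applied to a box containing $\mu_h$. So the first unresolved query is a query of $\mathcal T_h$, with winner $a_q$. The chosen alternative lies in the possible-winner set $\mathcal P$ of Eq.~\eqref{eq:possible}, and it is either $a_q$ or a competitor $b$. Either way $\Gamma_{e,h}<\infty$.

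Second, if $e$ had $n\ge n^\star_{e,h}$ completed samples, its interval width would be at most $2r(n)\le\Gamma_{e,h}/\sqrt8<\Gamma_{e,h}/2$. Let $\Gamma=\Gamma_{e,h}$.
\begin{itemize}
\item If $e=(i,b)$ is a competitor, then $u_{i,b}\le\mu_b+\Gamma/2$ and $\ell_{i,a_q}\ge\mu_{a_q}-w_{a_q}$. Possible-winner membership then forces $w_{a_q}>\Gamma/2$. So $a_q$ would be wider than $e$ and would be preferred by the widest-width rule, a contradiction.
\item If $e=(i,a_q)$ is the winner, the query is unresolved only because some competitor $b$ has $u_{i,b}\ge\ell_{i,a_q}$. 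The margin then forces $w_b+w_{a_q}\ge\Gamma$, so some competitor has width above $\Gamma/2$, again beating $e$.
\end{itemize}
Care is needed because possible winners must be real alternatives, and $b$ may be the outside option. The outside option has zero width, so its contribution is carried entirely by the real edge, and I expect this to be the main technical nuisance. The case $\sigma=0$ is immediate, since the interval collapses after one complete sample.

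\textbf{Per-lifetime counts.} Within lifetime $(e,g)$, counts only grow (retention), and a batch takes $n\mapsto\max\{1,2n\}$. Every batch in the lifetime starts with $n<n^\star_{e,h}\le n^\star_{e,g}$, so the final count satisfies $P_{e,g}\le 2n^\star_{e,g}$. The starting counts run $0,1,2,4,\dots$ and stay below $n^\star_{e,g}$, so there are at most $1+\lceil\log_2 n^\star_{e,g}\rceil+1\le b^\star_{e,g}$ batches. An off-by-one check against the exact definition in Eq.~\eqref{eq:threshold} is needed at this step.

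\textbf{Aggregation and dwell.} Batches are serialized, so each waiting period belongs to exactly one batch on edge $e$ and lasts at most $d_e$ rounds after the last issue. Hence $W\le\sum_{(e,g)} d_eB_{e,g}$. Summing $P_{e,g}$ over sampled lifetimes gives Eq.~\eqref{eq:selectivebudgets}. Enlarging $\mathcal J$ to all lifetimes with finite trace margins gives the deterministic version.

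For the dwell bound $F_h$, apply the same argument within a single epoch, where pulls and waits on $e$ are at most $2n^\star_{e,h}+d_eb^\star_{e,h}$. Each round is either an issue round or a waiting round, so the repair finishes within $F_h$ rounds. Termination needs every query eventually certified once all widths fall below half the relevant margins. This follows because an unresolved query always admits a selectable edge below threshold, and the threshold claim shows no edge can be sampled past $n^\star$. The remaining gap is ruling out a stall in which no real possible winner exists; that case implies the sole-outside-option certification, which closes the argument. The final epoch may be truncated by $T$, which only shortens these sums.
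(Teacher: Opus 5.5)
Your proposal is correct and follows essentially the paper's route. It uses the widest-real-possible-winner argument on the confidence event: you show by contradiction that a selected edge narrower than $\Gamma_{e,h}/2$ would be out-widthed by another real member of $\mathcal P$, where the paper derives $\Gamma_{e,h}\le 2s_c$ directly from the same inequalities. It then uses doubling within a lifetime, one wait of at most $d_e$ per serialized batch, and the per-epoch sum for $F_h$.

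The one step you leave implicit is that, in the winner case, the wide competitor $b$ actually lies in $\mathcal P$. This holds because $w_{a_q}<\Gamma_{e,h}/2$ gives $\ell_{i,a_q}>\mu_{a_q}-\Gamma_{e,h}/2>\mu_a\ge\ell_{i,a}$ for every competitor $a$, so $u_{i,b}\ge\ell_{i,a_q}=\max_a\ell_{i,a}$. The paper handles the same point by splitting on whether the winner has the largest lower endpoint.
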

\paragraph{Proof idea.}
On the confidence event, every completed query matches the true trace. Its true winner remains in $\mathcal P$. A sampled widest candidate has width at least half its applicable winner--competitor gap; otherwise the winner would already be certified. Since interval width is at most $2r(n)$, an edge with $n\geq n^\star_{e,h}$ cannot be selected. Doubling overshoots this threshold by at most a factor two. Counts persist across other edges' updates, so the bound is a maximum over a lifetime, not a sum over market epochs. Each complete batch waits at most $d_e$ additional rounds. Appendix~\ref{app:newproofs} supplies the candidate-width argument, including the outside option.

\subsection{Regret and stability during repair}
\begin{theorem}[Cost of announced repair]\label{thm:local-repair}
Let $a_h=|\{i:M_{h-1}^\star(i)\ne M_h^\star(i)\}|$, with $M_0^\star$ empty, and let $P_h,W_h$ be CGR's exploratory and waiting rounds in epoch $h$. On the confidence event, all completed repairs are requester-optimal, and the realized positive-part regret $\mathcal R_T$ and exact violation count $\mathcal S_T^0$ obey
\begin{align}
 \mathcal R_T&\leq\sum_h\{2a_h(P_h+W_h)+4P_h\},\label{eq:localregret}\\
 \mathcal S_T^0&\leq(K+2N+1)\sum_h\{a_h(P_h+W_h)+2P_h\}.\label{eq:localstability}
\end{align}
Taking expectations adds at most $2NT\delta$ and $N(K+1)T\delta$, respectively. For deterministic budgets $\overline P,\overline W$ from Theorem~\ref{thm:selective}, a direct consequence is
\begin{equation}
 R_T\leq \min\{2NT,(2a_{\max}+4)\overline P+2a_{\max}\overline W+2NT\delta\},\label{eq:detregret}
\end{equation}
where $a_{\max}=\max_h a_h$. None of $a_h,\Gamma_{e,h}$ or $F_h$ is an input to the sampler.
\end{theorem}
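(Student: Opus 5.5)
The plan is to work pathwise on the confidence event $\mathcal G$ of Lemma~\ref{lem:adaptive-confidence}, and to charge the complement at the end using the universal caps $\mathcal R_T\le2NT$ and $\mathcal S^0_T\le N(K+1)T$.

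\textbf{Step 1: correctness and the incumbent.} I will induct over epochs. On $\mathcal G$, every interval contains its current mean. Theorem~\ref{thm:trace} then says that a fully certified canonical trace yields $\DA(\mu_h)=M_h^\star$, so every completed repair is requester-optimal. For the next epoch I also need that a nonfinal repair actually completes, so that $M^{\rm old}$ entering epoch $h$ equals $M^\star_{h-1}$. This follows from Theorem~\ref{thm:selective}: the dwell $F_h$ bounds the repair duration. For $h=1$, $M^{\rm old}$ is empty, which matches the convention $M_0^\star=\varnothing$. A truncated final epoch only removes rounds from the sums, so it needs no separate treatment. Given this, in epoch $h$ the incumbent differs from the target on exactly $a_h$ requesters.

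\textbf{Step 2: per-round classification.} Every round of epoch $h$ is of one of three types, and I bound the loss of each.
\begin{enumerate}
\item An exploratory pull of a single edge $(i,j)$. Removing $i$'s old pair and $j$'s old occupant changes at most two further requesters, so at most $a_h+2$ requesters are misassigned. Each positive-part difference is at most $2$, since utilities and the outside option lie in $[-1,1]$. This round therefore costs at most $2a_h+4$.
\item A waiting round, which serves $M^{\rm old}$ and costs at most $2a_h$.
\item A post-certification round, which serves $M_h^\star$ and costs zero.
\end{enumerate}
CGR issues one pull per exploratory round, so there are exactly $P_h$ rounds of the first type and $W_h$ of the second. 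Summing gives Eq.~\eqref{eq:localregret}.

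\textbf{Step 3: stability count.} Let $M$ be the served matching, and let $C$ be the set of requesters whose assignment differs from $M_h^\star$, so $|C|\le a_h+2m$ with $m\in\{0,1\}$. Since $M_h^\star$ is stable, an individual-rationality violation must come from a requester in $C$. A blocking pair in $B_0(M,t)$ must involve either a requester of $C$ or a provider whose occupant changed, and there are at most $2|C|$ such providers. Each requester contributes at most $K$ blocking pairs plus one IR violation, and each provider at most $N$ blocking pairs. This gives $U_0\le(K+2N+1)|C|$ per round. Summing as in Step 2 yields Eq.~\eqref{eq:localstability}.

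I expect the main obstacle to be justifying the blocking-pair claim. It needs care with the outside option, and with a provider whose new occupant is worse while its requester did not change. The approach I will try first is to show that, if neither endpoint changed, the pair's comparison is identical to the one in $M_h^\star$. Such a pair therefore cannot block, by stability of $M_h^\star$.

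\textbf{Step 4: expectations and deterministic bounds.} Taking expectations with $\Prb(\mathcal G^c)\le\delta$ and the universal caps adds $2NT\delta$ to the regret bound and $N(K+1)T\delta$ to the violation bound. For Eq.~\eqref{eq:detregret}, I use $a_h\le a_{\max}$, $\sum_hP_h\le\overline P$ and $\sum_hW_h\le\overline W$ from Theorem~\ref{thm:selective}, and then take the minimum with the trivial cap $2NT$. Finally, the sampler's definition uses only intervals and public data, so none of $a_h$, $\Gamma_{e,h}$ or $F_h$ is an input to it.
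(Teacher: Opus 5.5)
Your proposal is correct and follows essentially the same route as the paper. Like the paper, you work pathwise on $\mathcal G$ and charge $2a_h$ for the incumbent plus $4$ per single-edge exploratory insertion. You then bound violations through the changed requesters and at most twice as many changed providers, giving $(K+2N+1)$ per changed requester, and add the failure term via the universal caps. Your explicit appeal to the dwell condition to get $M^{\rm old}=M^\star_{h-1}$ is the same epoch induction the paper uses in its appendix.
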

\begin{proof}
During waiting, the old matching differs from the current optimal one on $a_h$ requesters. During exploration at most two more assignments differ. The positive-part triangle inequality charges at most $2a_h$ for the old allocation and 4 for the exploratory modification. For stability, changing $q$ requester assignments changes at most $2q$ provider partners, creating at most $(K+2N)q$ blocking pairs and $q$ individual-rationality violations. Certified commitment has zero loss. Sum these bounds; charge the confidence failure by the universal caps.
\end{proof}

\paragraph{Interpretation of the bounds.}
Statistical locality is measured by edge lifetimes and the comparisons that require new samples. Allocation locality is measured by $a_h$. They can differ dramatically. Waiting is charged to queried edges, not automatically to the slowest edge in the market; nevertheless CGR can still wait on every selected batch and is not delay-optimal. With few invalidations, the lifetime budget can be much smaller than relearning all edges every epoch.

\section{Separations and lower bounds}\label{sec:separations}
\begin{proposition}[Sparse update, global reassignment]\label{prop:cascade}
For every $N\geq2$, a connected market with two feasible edges per requester admits a change in one requester's utilities that changes all $N$ partners in the requester-optimal outcome.
\end{proposition}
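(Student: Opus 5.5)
We construct the example explicitly: the market is a ring (cycle) in which the requester-optimal outcome is one of two perfect matchings around the cycle, and a local change makes it switch to the other. Take requesters $1,\dots,N$ and providers $1,\dots,N$. Requester $i$'s feasible edges are $(i,i)$ and $(i,i+1)$, with indices mod $N$ for the provider, so requester $N$ is adjacent to providers $N$ and $1$. The feasible graph is then a single $2N$-cycle, hence connected, and every requester has exactly two feasible edges.

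\textbf{Pre-update profile.} Let every requester strictly prefer the shifted provider: $\mu_{i,i+1}=0.8$ and $\mu_{i,i}=0.5$, all positive. Give each provider $j$ the priority ``requester $j$ above requester $j-1$.'' We show the requester-optimal stable matching is the shift $M^{\rm old}(i)=i+1$.
\begin{enumerate}
\item In requester-proposing DA, each requester proposes first to its preferred provider $i+1$.
\item Each provider $j$ receives exactly one proposal, from requester $j-1$, so it holds that proposal.
\item DA therefore terminates immediately with the shift matching, which is stable and requester-optimal by Gale--Shapley (Appendix~\ref{app:da}).
\end{enumerate}
As a direct check of stability: every requester already has its top choice, so no requester is part of a blocking pair, and individual rationality holds because all utilities are positive.

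\textbf{Post-update profile.} Change only requester $1$'s two utilities, to $\mu_{1,1}=0.8$ and $\mu_{1,2}=0.5$; requesters $2,\dots,N$ keep their old utilities. We claim the new requester-optimal outcome is the identity $M^{\rm new}(i)=i$.

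First, DA reaches the identity. Requester $1$ now proposes to provider $1$, whose current proposal pool also contains requester $N$. Provider $1$ ranks requester $1$ above requester $N$ (its priority is ``$1$ above $0\equiv N$''), so it rejects $N$. Requester $N$ then proposes to provider $N$. Provider $N$ already holds requester $N-1$ but prefers requester $N$, so it rejects $N-1$. The rejections cascade around the ring in the same way until requester $2$, rejected by provider $3$, proposes to provider $2$. Provider $2$ received no proposal from requester $1$, so it accepts requester $2$. Every requester ends matched to its own index, and the procedure terminates at the identity.

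Second, the identity is stable under the new profile, checked directly.
\begin{itemize}
\item Requester $1$ holds its top choice, so it blocks with no one.
\item For $i\ge 2$, requester $i$ would prefer provider $i+1$. That provider holds requester $i+1$, which it ranks above requester $i$, so the pair $(i,i+1)$ does not block.
\item For $i=N$, the relevant provider is provider $1$. It holds requester $1$, which it ranks above requester $N$, so $(N,1)$ does not block either.
\end{itemize}

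\textbf{Conclusion and main obstacle.} Every requester's partner changes from $i+1$ to $i$, so all $N$ partners change even though only one requester's utilities changed. The case $N=2$ uses the same construction with the two providers. The step to verify carefully is that the cascade closes consistently at requester $2$ and provider $2$, which depends on the priority orientation chosen above.
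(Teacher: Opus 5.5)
Your proof is correct and is the paper's construction up to relabeling: the same $2N$-cycle with edges $(i,i)$ and $(i,i+1)$, a swap of one requester's two utilities, and a DA rejection chain around the ring. You run it in the mirror direction, starting from the shift (priorities ``$j$ above $j-1$'') and cascading back to the identity, whereas the paper starts from the identity (provider $j$ ranks $j-1$ above $j$) and cascades to the shift; for $N=2$ your sentence ``provider $N$ already holds requester $N-1$'' is false, since requester $1$ is the updated one and proposes to provider $1$, but the chain then simply ends at the empty provider $2$ and your conclusion is unaffected.
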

Number participants modulo $N$. Requester $i$ has edges to $i$ and $i+1$, with means $0.8$ and $0.3$. Provider $j$ ranks requester $j-1$ above $j$. Initially DA matches every $i$ to $i$. Swap the two means of requester 0. DA now gives $i$ provider $i+1$ for every $i$, by a rejection chain. Only two edges changed. With sufficiently precise retained intervals, only those two edges need fresh evidence even though $a_h=N$. Appendix~\ref{app:separations} verifies both statements.

\begin{proposition}[Information and delay on a pivotal edge]\label{prop:pivotal}
Consider the two markets with means
\begin{equation}
 \mu^{\pm}=\begin{pmatrix}1/2\pm\eta&1/2\\0.9&0.1\end{pmatrix},\qquad
 1\succ_a2,\quad 2\succ_b1,\quad 0<\eta\leq1/4.\label{eq:pivotal}
\end{equation}
Rewards are independent Bernoulli, and only edge $(1,a)$ differs. Any algorithm identifying the requester-optimal outcome with error at most $\delta\leq0.1$ in both markets needs expected pulls of that edge at least $\log(1/\delta)/(64\eta^2)$ in either market. If that edge has fixed reporting delay $d_c$ and all others are immediate, expected reward rounds before identification are at least $(1-2\delta)(d_c+1)$.
\end{proposition}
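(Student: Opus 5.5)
The plan is to treat the two claims separately. Both rest on one structural fact: $\mu^+$ and $\mu^-$ have different requester-optimal outcomes. The information bound then follows from Theorem~\ref{thm:warm-lower}, and the delay bound from an indistinguishability argument on the transcript before any report from $(1,a)$ can arrive.

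\emph{Step 1 (different outcomes).} I would run requester-proposing DA in each market. Under $\mu^+$, requester 1 strictly prefers $a$ ($1/2+\eta>1/2$) and requester 2 prefers $a$ ($0.9>0.1$). Provider $a$ ranks $1\succ_a2$, so it keeps 1 and rejects 2, who then takes $b$; the outcome is $(1a,2b)$. Under $\mu^-$, requester 1 prefers $b$ and requester 2 proposes to $a$, so there is no conflict and the outcome is $(1b,2a)$. All utilities are positive, so the outside option plays no role and tie refinements are irrelevant. Hence the outcomes differ, and only edge $(1,a)$ has different means.

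\emph{Step 2 (pull lower bound).} I would apply Theorem~\ref{thm:warm-lower} with $c=(1,a)$, $p_\pm=1/2\pm\eta$, no initial information (so $D_0=0$), and a common reporting kernel. This gives $\E_\pm N_{1a}\ge \operatorname{kl}(1-\delta,\delta)/\operatorname{kl}(p_+,p_-)$; the inequality is symmetric in the two markets, so it holds in either one. The remaining work is two elementary estimates.
\begin{itemize}[leftmargin=*]
\item Upper bound on the per-pull KL. Here $\operatorname{kl}(1/2+\eta,1/2-\eta)=2\eta\log\frac{1+2\eta}{1-2\eta}$. Using $\log\frac{1+x}{1-x}\le \frac{2x}{1-x^2}\le\frac{8x}{3}$ for $x=2\eta\le1/2$, this is at most $32\eta^2/3\le32\eta^2$.
\item Lower bound on the binary divergence. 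We have $\operatorname{kl}(1-\delta,\delta)=(1-2\delta)\log\frac{1-\delta}{\delta}$. For $\delta\le0.1$ this is at least $0.8(\log(1/\delta)+\log0.9)$, which is at least $\tfrac12\log(1/\delta)$ because $0.3\log 10>0.8\log(10/9)$.
\end{itemize}
Dividing gives $\log(1/\delta)/(64\eta^2)$.

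\emph{Step 3 (delay bound).} Let $\tau$ be the identification round and let $E=\{\tau\le d_c\}$. By Assumption~\ref{ass:delay}, a reward from $(1,a)$ issued at round $s\ge1$ arrives at the end of round $s+d_c$. It can therefore influence decisions only at rounds $t>s+d_c\ge d_c+1$. All other edges have identical laws in the two markets. Coupling the learner's internal randomness, the transcript up to and including the decision made by round $d_c$ therefore has the same law under $\mu^+$ and $\mu^-$. This is the step I expect to need the most care: one must check the arrival convention ``$s+d_{e,s}<t$'' exactly, so that the first usable report is indeed excluded from all rounds $\le d_c$ and not merely from rounds $<d_c$. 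With that in hand, the event $E\cap\{\text{output}=(1a,2b)\}$ has equal probability under both markets. Hence
\[
\Prb_+(E)\le \Prb_+(E,\text{out}=(1b,2a))+\Prb_+(E,\text{out}=(1a,2b))\le \delta+\Prb_-(E,\text{out}=(1a,2b))\le2\delta .
\]
Markov's inequality applied to $\tau\ge (d_c+1)\ind\{E^c\}$ then gives $\E_+\tau\ge(1-2\delta)(d_c+1)$, and the argument for $\mu^-$ is symmetric.
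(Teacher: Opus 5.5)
Your proposal is correct and follows the paper's proof essentially step for step: the DA computation, the adaptive change-of-measure bound (you import it from Theorem~\ref{thm:warm-lower} with $D_0=0$, while the paper re-derives it via the augmented transcript) combined with per-pull KL at most $32\eta^2$ and $\operatorname{kl}(1-\delta,\delta)\ge\tfrac12\log(1/\delta)$, and the same early-output indistinguishability argument for the delay bound. One small repair in Step~3: an algorithm may output a matching other than the two DA outcomes, so write $\Prb_+(E)=\Prb_+(E,\text{out}\ne(1a,2b))+\Prb_+(E,\text{out}=(1a,2b))$ and bound the first term by $\delta$, instead of summing only over $\text{out}=(1b,2a)$ and $\text{out}=(1a,2b)$.
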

The statistical claim follows by change of measure; the delay claim by indistinguishability before any pivotal observation is available (Appendix~\ref{app:separations}). This is an identification lower bound. It is not a matching regret lower bound or a proof that CGR's logarithmic number of batch waits is necessary.

\paragraph{Scope of the notification assumption.}
Without a notification or a valid drift envelope, old confidence intervals can remain narrow immediately after an arbitrary change and exclude the new truth. Two markets can have identical observed histories but different current DA outcomes. A certificate built from that history cannot certify both. An unannounced-change extension must pay for monitoring and detection or impose additional drift conditions. CGR does neither; its theorem is explicitly about announced repair.

%% file: sections/trace_proofs.tex
\section{Proofs for certificate-guided sampling and repair}\label{app:newproofs}
\subsection{Adaptive chronological concentration}
Fix an edge and one evidence lifetime. Its mean $\mu_e$ is constant. Let $X_s$ indicate that a reward from this edge is issued for the estimator at round $s$ during this lifetime. This indicator is predictable with respect to the full latent past; CGR's decisions use a subset of that information. Even though the choice of the next batch is adaptive, all its pulls are chosen before their corresponding noises are generated. Hence
\begin{equation}
 Z_t(\lambda)=\exp\left(\lambda\sum_{s\leq t}X_s(Y_{e,s}-\mu_e)-\frac{\lambda^2\sigma^2}{2}\sum_{s\leq t}X_s\right)
\end{equation}
is a nonnegative supermartingale starting at 1. Stop it at the $n$th issued sample, the end of the lifetime, or $T$, whichever comes first. On the event that $n$ samples are issued and their centered sum exceeds $nx$, Markov's inequality and the stopping argument give
\begin{equation}
 \Prb\left(n\text{ samples exist and }\frac1n\sum_{r=1}^n(Y_{e,s_r}-\mu_e)>x\right)
 \leq\exp\left(-\frac{nx^2}{2\sigma^2}\right).
\end{equation}
Here $s_r$ is the action time of the $r$th issued sample. The lower tail is identical. If $\sigma=0$, the noise is zero almost surely. At most $T$ lifetimes and $T$ sample counts occur per edge. Substituting $x=r(n)$ and union bounding gives failure probability at most $\delta$.

CGR only updates after a whole selected batch is available, and all earlier batches in the lifetime were already complete. Its estimator therefore uses an initial prefix of the chronologically issued samples, at a possibly random count. The simultaneous count bound covers this randomness, including reward-dependent completion times. It does not assert concentration for arbitrary subsets of early arrivals. Intersecting sound intervals preserves the mean; the fallback for empty intersections is never used on this event. Unchanged edges keep the same mean across other edges' epoch boundaries, so their lifetime confidence remains valid.

\subsection{The widest-candidate argument}
At an unresolved query let $w$ be the unique true winner and $c$ the sampled real candidate. Let $s_a=u_a-\ell_a$ be interval width. Because $\mu_w\geq\mu_a\geq\ell_a$ for every $a$, and $u_w\geq\mu_w$, the winner belongs to $\mathcal P$. All statements below are on the confidence event.

If $c\ne w$, then $u_c\geq\max_a\ell_a\geq\ell_w$. Thus
\begin{equation}
 \mu_w-\mu_c\leq(\mu_w-\ell_w)+(u_c-\mu_c)\leq s_w+s_c\leq2s_c.
\end{equation}
The last inequality follows because $c$ has largest width among real possible winners, and the outside option has width zero.

If $c=w$, then $w$ is real. If some $a\ne w$ has $\ell_a>\ell_w$, then $\mu_w-\mu_a\leq\mu_w-\ell_w\leq s_w$. Otherwise $\ell_w=\max_a\ell_a$. Since the query is unresolved, some $a\ne w$ has $u_a\geq\ell_w$. This alternative belongs to $\mathcal P$, and
\begin{equation}
 \mu_w-\mu_a\leq s_w+s_a\leq2s_w.
\end{equation}
In all cases a comparison involving the selected edge has gap at most twice its width. That query is on the true trace by induction using Theorem~\ref{thm:trace}; hence $\Gamma_{e,h}\leq2s_c$.

For $n\geq n^\star_{e,h}$, $s_c\leq2r(n)\leq\Gamma_{e,h}/\sqrt{8}<\Gamma_{e,h}/2$, a contradiction. With $\sigma=0$ the same conclusion holds after one sample. An edge not involved in any nontrivial true-trace comparison cannot be selected.

\subsection{Doubling, lifetime reuse, and dwell time}
A batch starts only below its epoch threshold. Its completed count is at most twice that threshold, including the initial target 1. Counts increase through $1,2,4,\ldots$ within a lifetime, so even if another epoch needs a smaller margin, the final count is at most twice the maximum threshold over that lifetime. At most $1+\lceil\log_2(2n^\star_{e,g})\rceil$ batches are issued. The same cap holds for a final horizon-truncated batch. Issued samples and completed counts coincide between batches; there is no discarded-and-repeated batch in the completed nonfinal epochs covered here.

If a batch's last pull occurs in round $u$, every reward in it has arrived by the end of $u+d_e$. The next action after pulling is $u+1$, so at most $d_e$ further reward rounds are spent waiting. This gives Eq.~\eqref{eq:selectivebudgets}. Without retained evidence, the identical argument bounds each epoch's work by $F_h$; retained counts can only remove some of those batches, since an edge above its threshold cannot be sampled. This proves the sufficient duration statement. If a notification nevertheless interrupts an unfinished repair, define a fallback that stops learning and remains unmatched until $T$. Under the stated dwell condition this cannot happen on the confidence event. The selective lifetime theorem is not claimed for a protocol that keeps processing overlapping repairs. The final epoch may simply terminate unfinished at $T$.

A more explicit warm-start bound is available. If edge $e$ starts epoch $h$ with $n^0_{e,h}\geq n^\star_{e,h}$, it needs no new samples in that epoch. Otherwise it issues at most $2n^\star_{e,h}-n^0_{e,h}$ additional samples. Sound prior intervals may exclude an edge even earlier. This is why an edge with large $d_e$ can have no waiting charge after an update.

\subsection{Full stability and expectation accounting}
Let $M$ differ from a stable matching $M^\star$ on requester set $A$ of size $q$. Provider partners differ on a set $J$ of size at most $2q$: each changed provider is an old or a new partner of a requester in $A$. A blocking pair with neither endpoint in $A\cup J$ would also block $M^\star$. Thus at most $(K+2N)q$ blocking pairs arise. Only requesters in $A$ can acquire a negative-mean partner, giving at most $q$ additional individual-rationality violations. This proves the stability bound in Theorem~\ref{thm:local-repair}.

The public feasible graph already excludes provider-unacceptable edges. During an exploratory pull, at most the exploratory requester and the old occupant of its target provider change assignments relative to $M^{\rm old}$; all remaining old pairs remain feasible. Therefore the regret modification is at most 4. During waiting, the allocation is exactly $M^{\rm old}$. On the uniform confidence event, every previous completed matching is the true requester-optimal outcome of its epoch. These observations justify the induction over announced updates and Eqs.~\eqref{eq:localregret}--\eqref{eq:localstability}.

Write $\mathcal G$ for the confidence event. On $\mathcal G^c$, regret is bounded by $2NT$ and violations by $N(K+1)T$. Taking expectations of the pathwise inequalities on $\mathcal G$, upper bounding their nonnegative right-hand sides by unconditional expectations, and charging $\Prb(\mathcal G^c)\leq\delta$ gives the stated expectation guarantees. For Eq.~\eqref{eq:detregret}, replace $P,W$ by their deterministic caps on $\mathcal G$ and then apply the failure charge. No random sampled-edge set is silently treated as deterministic.

\section{Separation proofs and lower-bound details}\label{app:separations}
\subsection{A connected rejection cascade}
Use requesters and providers $0,\ldots,N-1$, with edges $(i,i)$ and $(i,i+1\bmod N)$. Provider $j$ ranks $j-1\bmod N$ above $j$, both acceptable. Initially each requester strictly prefers its own provider, so the identity matching is returned immediately by DA. After requester 0 swaps its two utilities, it first proposes to provider 1. Requester 1's proposal to provider 1 is rejected in favor of 0, so 1 proposes to 2. This repeats until requester $N-1$ proposes to provider 0. Every requester now has provider $i+1\bmod N$, and no one remains unmatched. Because these are actual DA runs, both stated outcomes are requester-optimal. The graph is connected and only requester 0 changed preferences, yet all partners change.

Suppose retained intervals on every unchanged edge have radius at most $0.05$ around the true mean. They certify both $0.8>0.3$ and $0.3>0$. Once requester 0's new choice is learned, every subsequent query in the new trace is already certified from those unchanged intervals: requester $i>0$ first chooses $i$, and after rejection chooses $i+1$ over the outside option. Only requester 0 can require fresh samples. This contrasts statistical evidence reuse with the number of changed assignments, without asserting that arbitrary sparse updates have local allocation effects.

\subsection{Testing a pivotal preference}
In Eq.~\eqref{eq:pivotal}, DA gives $(1a,2b)$ in the plus market and $(1b,2a)$ in the minus market. All observations except rewards from edge $(1,a)$ have the same conditional distribution in both worlds; provider acceptances and fixed delays contain no extra information. Let $N_c$ be its number of pulls by the algorithm's output time. Standard likelihood-ratio factorization for adaptive experiments gives
\begin{equation}
 \operatorname{KL}(\mathbb P_+,\mathbb P_-)\leq\E_+[N_c]\,
 \operatorname{kl}(1/2+\eta,1/2-\eta).
\end{equation}
For delayed data, this can be applied to an augmented record revealing all issued rewards at the stopping time: the actual observed record is a function of that record, so data processing gives the displayed upper bound as well. Stopping-time arguments follow by truncation if the expected count is finite; otherwise the lower bound is immediate.

Let the event be outputting the plus-market optimal matching. Correctness in both worlds and data processing imply transcript KL at least $\operatorname{kl}(1-\delta,\delta)\geq\tfrac12\log(1/\delta)$ for $\delta\leq0.1$. For $p,q\in[1/4,3/4]$, $\operatorname{kl}(p,q)\leq(p-q)^2/(q(1-q))$, so the pivotal KL is at most $32\eta^2$. Combining the two inequalities proves the sample lower bound. Interchanging the markets proves the other direction.

If pivotal feedback is delayed by $d_c$, it cannot be observed before $d_c+1$ reward rounds have completed, even if its first pull occurs in round 1. The distribution of an output made before that time is the same in both worlds. Since their correct outputs differ, the sum of their early-output error probabilities is at least the common probability of early output. Each total error is at most $\delta$, so early output has probability at most $2\delta$. Therefore expected reward rounds before output are at least $(1-2\delta)(d_c+1)$ in either world. This argument concerns correct identification, not positive-part regret: an unstable or differently stable allocation can hedge requester losses.

\subsection{A certificate need not learn every losing comparison}
In Proposition~\ref{prop:gap-separation}, the true trace contains one query, comparing $0.8$ to $0.2$, $0.2+\eta$, and zero. Thus finite margins are $0.6$, $0.6-\eta$, or their minimum. A confidence radius small enough for these comparisons need not separate $0.2$ from $0.2+\eta$. At $\eta=0$, the trace theorem still applies; a procedure insisting on a strictly certified order of those losing arms has no finite certificate. This is not an impossibility result for algorithms that explicitly model indifference.

%% file: sections/stability_reference.tex
\section{Baseline stability reduction and its limitation}
\label{sec:certificate}
Let $g(M,t)$ be the maximum of zero, the gains of provider-willing blocking pairs, and requester individual-rationality shortfalls. This measures deviation incentives, not optimal stable regret.
Let $z_{i,j}\in[-1,1]$ be scores, with $z_{i,0}=0$, and let $\widehat M=\DA(z)$. Requesters propose only to providers ranked strictly above the outside option; rejected providers are deleted. A provider compares incoming proposals with its current tentative partner, retaining its favorite. This last comparison is essential for DA stability.

\begin{theorem}[Stability certificate and exact recovery]\label{thm:certificate}
Fix a round $t$ and suppose $|z_{i,j}-\mu_{i,j,t}|\leq e_{i,j}$ for all feasible pairs. Set $e_{i,0}=0$ and $e=\max_{(i,j)\in\cE}e_{i,j}$. Then
\begin{equation}
 g(\widehat M,t)\leq2e.\label{eq:certificate}
\end{equation}
More precisely, each provider-willing pair's positive deviation gain is at most $e_{i,j}+e_{i,\widehat M(i)}$, and each requester's individual-rationality shortfall is at most $e_{i,\widehat M(i)}$. Define
\begin{equation}
 \Delta_t=\min_i\min_{\substack{j,j'\in F_i\cup\{0\}\\j\ne j'}}|\mu_{i,j,t}-\mu_{i,j',t}|,\label{eq:gap}
\end{equation}
where the minimum over an empty comparison set is $+\infty$. If $2e<\Delta_t$, then $\widehat M=M_t^\star$.
\end{theorem}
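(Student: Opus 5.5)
The plan is to split the argument into two parts: a pathwise approximation bound for $g(\widehat M,t)$, and exact recovery under the gap condition. Both use only the stability of $\widehat M=\DA(z)$ with respect to the scores $z$. Fix $t$ and suppress it from the notation. First check that the DA variant described (proposals only to providers ranked above the outside option, and providers comparing against their tentative partner) returns a matching that is stable for the score market. That is, the following two properties hold. (i) Every matched pair has $z_{i,\widehat M(i)}\ge0$; indeed it is $>0$, or $\ge0$ under the tie refinement. (ii) There is no provider-willing pair $(i,j)$, meaning $j\ne\widehat M(i)$ and $i\succ_j\widehat M^{-1}(j)$, with $z_{ij}>z_{i,\widehat M(i)}$. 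Property (ii) is the standard DA argument. If $i$ strictly prefers $j$ under $z$, then $i$ proposed to $j$ before reaching its final partner or the outside option. Provider $j$ then rejected $i$ in favor of someone it prefers, and a provider's tentative partner only improves over time. Hence $\widehat M^{-1}(j)\succ_j i$, a contradiction. This is exactly where the provider comparison with its current partner is essential.

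For the bound, take a provider-willing pair $(i,j)$. Its gain is $\mu_{ij}-\mu_{i,\widehat M(i)}$. This is at most $(z_{ij}+e_{ij})-(z_{i,\widehat M(i)}-e_{i,\widehat M(i)})$, which is at most $e_{ij}+e_{i,\widehat M(i)}$ by (ii). The convention $e_{i,0}=0$ handles an unmatched $i$, since $z_{i0}=\mu_{i0}=0$. For individual rationality, (i) gives $-\mu_{i,\widehat M(i)}\le -z_{i,\widehat M(i)}+e_{i,\widehat M(i)}\le e_{i,\widehat M(i)}$. Taking the maximum with zero yields $g(\widehat M,t)\le2e$.

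For exact recovery, suppose $2e<\Delta_t$. I will show that $z$ induces the same strict ordering on $F_i\cup\{0\}$ as $\mu$ for every $i$. For distinct $j,j'$ with $\mu_{ij}>\mu_{ij'}$, we have $z_{ij}-z_{ij'}\ge\mu_{ij}-\mu_{ij'}-e_{ij}-e_{ij'}\ge\Delta_t-2e>0$. The outside-option comparisons use $e_{i0}=0$. Since DA depends only on the ordinal requester preferences (including acceptability against $0$) and on the public priorities, the two runs coincide. Hence $\DA(z)=\DA(\mu)=M_t^\star$. Because $\Delta_t>0$ forces strict preferences, no tie refinement is invoked.

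The main obstacle is the first step: making sure the particular DA variant with the outside option and provider-ID tie rule yields property (ii) including equality cases. For ties in $z$ only a weak inequality is needed, and the subtlety is that a zero score may still be proposed to under the refinement; this affects only (i) with $\ge$, which suffices. I will check this through the monotonicity of each provider's tentative partner, following the canonical DA description in Appendix~\ref{sec:trace}.
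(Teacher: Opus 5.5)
Your proposal is correct and follows essentially the same route as the paper. Score stability and score individual rationality of $\DA(z)$, which the paper cites from Appendix~\ref{app:da} and you re-derive, combine with the two error terms to give the gain and shortfall bounds. The strict margin $\Delta_t-2e>0$ then preserves every requester's strict order, including against the outside option, so the two DA runs coincide. One small correction: under the paper's refinement, zero scores are placed below the outside option, so no requester ever proposes at score zero and matched pairs actually have $z_{i,\widehat M(i)}>0$; your weak inequality still suffices.
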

\begin{proof}
DA is stable under scores (Appendix~\ref{app:da}). If a provider prefers $i$ to its assigned partner, score stability implies $z_{i,j}\leq z_{i,\widehat M(i)}$. Adding the two score errors bounds the true gain. Score individual rationality implies $z_{i,\widehat M(i)}\geq0$, proving the shortfall bound. If $2e<\Delta_t$, no pair of alternatives, including the outside option, can reverse order. DA therefore receives identical strict preference lists and returns the same matching.
\end{proof}

\begin{proposition}[Stable-optimal regret is not Lipschitz in score error]\label{prop:discontinuity}
There are two-requester markets with arbitrarily small uniform score error $e>0$ for which the score-DA outcome has positive-part regret at least $4/5$ in one round, although its maximum deviation incentive is at most $e$.
\end{proposition}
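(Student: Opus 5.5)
The plan is to build an explicit two-requester, two-provider market in which requester~1 is almost indifferent between two providers. A score error of size $e$ flips that near-tie, and the flip forces requester~2 out of its only acceptable provider. The point of the construction is that the flipped requester loses only about $e$, so every deviation incentive stays small. The displaced requester, however, loses a constant amount of optimal stable utility, even though it has no provider-willing blocking pair.

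Concretely, take providers $a,b$ and feasible edges $(1,a),(1,b),(2,b)$. Provider $b$ ranks requester~1 above requester~2, and provider $a$ accepts only requester~1. Fix $e\in(0,1/10]$ and set the means
\[
\mu_{1a}=1/2+e,\qquad \mu_{1b}=1/2,\qquad \mu_{2b}=9/10.
\]
The argument then proceeds in four steps.

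\emph{True outcome.} Run requester-proposing DA on the true means, as in Appendix~\ref{app:da}. Requester~1 proposes to $a$ and is accepted, and requester~2 proposes to $b$ and is accepted. Hence $M^\star=(1a,2b)$, and both requesters are individually rational.

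\emph{Scores and score-DA outcome.} Take scores $z_{1a}=1/2$, $z_{1b}=1/2+e$, and $z_{2b}=9/10$, so every score error is at most $e$. Under these scores requester~1 strictly prefers $b$ and proposes there. Provider $b$ retains requester~1 over requester~2, so requester~2 is rejected and, having no other edge, ends unmatched. Thus $\widehat M=(1b,2\emptyset)$.

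\emph{Regret.} Using \eqref{eq:regret}, requester~1 loses $\mu_{1a}-\mu_{1b}=e$ and requester~2 loses $\mu_{2b}-0=9/10$. The one-round positive-part regret is therefore $9/10+e\ge 4/5$.

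\emph{Deviation incentive.} Check $g(\widehat M,t)$ pair by pair. The pair $(1,a)$ is provider-willing because $a$ is unmatched, and its gain is $\mu_{1a}-\mu_{1b}=e$. The pair $(2,b)$ is not provider-willing, since $b$ prefers its current partner~1. Requester~1 has positive utility and the unmatched requester~2 has utility $0$, so there are no individual-rationality shortfalls. Hence $g(\widehat M,t)=e$, which is consistent with Theorem~\ref{thm:certificate}.

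Since $e$ can be taken arbitrarily small, this establishes the non-Lipschitz claim.

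The only delicate point is that the regret benchmark counts loss against $M^\star$ for every requester, whereas $g$ counts only provider-willing blocking pairs. Requester~2's large loss therefore registers in the regret but does not appear in $g$. I will state this asymmetry explicitly at the end of the argument. I will also note that the flip needs a strict score preference, which requires $z_{1b}>z_{1a}$. This holds with error exactly $e$ on each of requester~1's edges.
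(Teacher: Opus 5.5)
Your proof is correct, and it takes a different route from the paper's. Under the true means DA gives $(1a,2b)$, while under the scores (uniform error exactly $e$) it gives $(1b,2\emptyset)$. The regret is $9/10+e$, and the only provider-willing improving pair is $(1,a)$ with gain $e$, so $g(\widehat M,t)=e$.

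The paper uses the same mechanism: flip requester~1's near-tie so that it displaces requester~2. Its instance, however, is structurally different. It is a complete $2\times2$ market with crossed priorities $1\succ_a2$ and $2\succ_b1$, true means $\mu_{1a}=1/2-\eta$, $\mu_{1b}=1/2$, $\mu_{2a}=9/10$, $\mu_{2b}=1/10$, and a single score error of $2\eta$ on $(1,a)$. That true market has two stable matchings. Score DA returns $(1a,2b)$, which is exactly stable under $\mu$, so $g=0$ while requester~2 loses $9/10-1/10=4/5$.

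The paper's instance therefore supports the stronger remark it makes right after the proposition: even exact stability does not imply proximity to the requester-optimal outcome. Your instance cannot give that, because its true stable matching is unique, so every other matching carries a positive witness (here $g=e$). In exchange, your instance shows something the paper's does not. The discontinuity is not merely a matter of selecting among several stable matchings. It persists when the requester-optimal matching is the unique stable one, with the loss coming from an $e$-unstable matching that leaves requester~2 unmatched on a sparse graph. The model permits this, since the feasible graph need not be complete.
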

\begin{proof}
Let providers be $a,b$, with $1\succ_a2$ and $2\succ_b1$, both above the outside option. For $0<\eta<1/4$, let
\begin{equation}
 \mu=\begin{pmatrix}1/2-\eta&1/2\\9/10&1/10\end{pmatrix},\qquad
 z=\begin{pmatrix}1/2+\eta&1/2\\9/10&1/10\end{pmatrix}.\label{eq:counterexample}
\end{equation}
True DA returns $(1b,2a)$; score DA returns $(1a,2b)$. The uniform error is $e=2\eta$, whereas requester 2 loses $4/5$. The latter matching has no blocking pair: requester 1 prefers $b$, but $b$ prefers its current partner 2; requester 2 prefers $a$, but $a$ prefers 1. It is actually exactly stable under $\mu$, so $g=0$.
\end{proof}
The example is stronger than approximate instability: even exact stability does not imply proximity to the requester-optimal outcome. It disproves a uniform bound $R_t\leq C\|z-\mu_t\|_\infty$ for the DA map with fixed $C$. It is not a minimax lower bound on every learning algorithm.

\section{Deferred acceptance and implementation details}\label{app:da}
Fix strict provider priorities and requester score lists, including the outside option. Initially everyone is unmatched. An unmatched requester with a remaining provider above the outside option proposes to its highest remaining provider. The provider keeps its preferred requester from the new proposals and its incumbent tentative match, rejecting all others. Rejected requesters delete that provider; tentative partners do not submit new proposals. Termination occurs when no unmatched requester has an acceptable untried provider. Scores tied at zero are placed below the outside option. Other score ties use a fixed provider-ID order.

Each feasible pair is proposed at most once. Thus at most $|\cE|$ messages are needed. Provider tentative partners only improve in the provider order. Suppose a pair blocks the final matching. If the requester never proposed, it ended with a more preferred match (or the provider was unacceptable), contradicting blocking. If it proposed and was rejected, the provider ended with someone it prefers to that requester. Thus no score-blocking pair exists. Requesters never propose below the outside option, establishing score individual rationality.

For completeness, requester optimality follows from the rejection argument. Suppose DA rejects an edge belonging to some stable matching, and consider the first such rejection in the execution. Provider $j$ rejects requester $i$ for $i'\succ_j i$. In that stable matching $i'$ must prefer its partner to $j$, or $(i',j)$ would block. But $i'$ proposed to $j$, so any preferred partner must have rejected $i'$ earlier. That would be an earlier rejection of an edge belonging to a stable matching, a contradiction. No edge in any stable matching is rejected. Therefore every requester obtains its best achievable stable partner. The same argument applies after the specified tie refinement.

%% file: sections/new_experiment_details.tex
\section{New experiment details and complete summaries}\label{app:newexperiments}
\paragraph{Reproducible implementation.}
The paired runner uses Python's standard library and four local CPU worker processes. The previously used remote server could not be reached during this revision; no new run is reported as an A40 run. The 240 tasks consist of 120 irrelevant-comparison tasks (four gaps, four methods), 90 dynamic tasks (three delay placements, six methods), and 30 signed-Jester tasks (three methods). These give 1,110 method trajectories and 2,730 epoch runs. All 2,160 dynamic epoch outputs are correct and complete, including 1,620 nonfinal repairs. With $\rho=.1$, the common sufficient dwell bound is 1,593,800 rounds, below the fixed 5,000,000-round epoch. This calculation is separate from observing completion in a finite set of seeds.

Per-group logs contain selected edges, intended and issued sample counts, exploration rounds, and actual waiting. No partial final group is used for an interval update. After an incomplete final repair, the old matching continues to the horizon and its loss is included. Regret/violation pathwise bounds and feasibility were checked in all dynamic epochs. Twenty-seven implementation tests include interval-box endpoints and interior completions, exact ties, signed outside options, three-requester oracle checks, parallel conflicts, and delayed truncation. These finite checks support the implementation but do not prove the theorems. Configurations and data/source hashes accompany the raw outputs.

\paragraph{Prefix comparison.}
We adapt the stopping condition of Algorithm 3 in \citet{athanasopoulos2025pcos}. Compute DA from interval midpoints, and require every alternative scored at least as high as the resulting partner to have a disjoint interval from every other alternative. The outside option is an alternative with interval $[0,0]$. The ordered prefix down to the proposed final partner is then fixed for every completion; running DA cannot depart from these prefixes, proving soundness. Our matched-width scheduling and common horizon-dependent radius replace the source algorithm's elimination schedule and radius. Thus Prefix-P isolates a literature-based stopping condition under common experimental machinery. It is not a claim to have implemented the best published PCOS or pervasive-matching algorithm.

\paragraph{Jester preprocessing.}
We use Jester's first file and its universal-query items 5, 7, 8, 13, 15, 16, 17, 18, 19, 20. Of 24,983 users, 24,953 have all these ratings observed. Shuffle eligible rows with seed 202709. Seed $s$ uses rows $10s,\ldots,10s+2$ and the first five query items, yielding disjoint users across seeds. Provider priorities are independent random permutations generated with seed $861+s$. Every rating is retained unchanged before scaling by $1/10$. Signed rewards take values $\pm1$, with mean equal to that scaled rating, and use $\sigma=1$. This benchmark uses a different generator from the controlled affine suite and is not pooled with it.

\paragraph{Actual data-training jobs.}
We use the 1,797 public optical-digit observations distributed by \texttt{sklearn.datasets.load\_digits} (64 pixel features scaled by $1/16$). A stratified 70/30 split with seed 202709 separates candidate training data from held-out evaluation. Buyer tasks distinguish digits $(0,1)$, $(3,8)$, and $(4,9)$. Each buyer owns a fixed balanced anchor of ten training observations, five per class. Four task-specific provider offers contain 20, 40, 80, or 120 additional training observations, chosen without using held-out performance. Offers may overlap; capacity and priority constraints are not inferred from this dataset.

For each buyer--offer pair, 30 independent seeded bootstrap resamples of that fixed offer are joined to the anchor. Every trial trains a new binary logistic model from zero weights: 150 full-batch gradient steps, learning rate $0.2$, and weight penalty $0.001$. The anchor-only model uses the same procedure. Utility is the difference in held-out accuracy, so negative gains are possible. All 360 trials run actual training and validation; the reported durations cover both computations, exclude import/data-loading time, and do not include queueing. CPU timing is hardware/load dependent and is not used to infer a deterministic delay bound or comparative algorithm speed. There is no hyperparameter selection on these outcomes. The fixed held-out set supports within-task resampling comparisons, not independent population-level generalization claims. Source-row indices, splits, offers, baseline scores, jobs, NumPy/scikit-learn versions, and source/data hashes are saved.

\input{results/outcome/summary_table}

%% file: results/outcome/summary_table.tex
\begin{longtable}{lllrrr}
\caption{All new matching configurations, 30 seeds each. Costs are mean learning rounds, capped at $10^5$ in stationary cases; dynamic costs sum four repairs. Total regret includes commitment and incomplete runs.}\label{tab:alloutcome}\\
\toprule Family & Setting & Method & Rounds & Regret & Complete \\
\midrule\endfirsthead
\toprule Family & Setting & Method & Rounds & Regret & Complete \\ \midrule\endhead
irrelevant & 0 & trace-P & 100000 & 52429 & 0/30 \\
irrelevant & 0 & outcome-P & 230 & 102 & 30/30 \\
irrelevant & 0 & outcome-S & 644 & 660 & 30/30 \\
irrelevant & 0 & prefix-P & 100000 & 52429 & 0/30 \\
irrelevant & 0.001 & trace-P & 100000 & 52429 & 0/30 \\
irrelevant & 0.001 & outcome-P & 230 & 102 & 30/30 \\
irrelevant & 0.001 & outcome-S & 644 & 660 & 30/30 \\
irrelevant & 0.001 & prefix-P & 100000 & 52429 & 0/30 \\
irrelevant & 0.01 & trace-P & 100000 & 52429 & 0/30 \\
irrelevant & 0.01 & outcome-P & 230 & 102 & 30/30 \\
irrelevant & 0.01 & outcome-S & 638 & 655 & 30/30 \\
irrelevant & 0.01 & prefix-P & 100000 & 52429 & 0/30 \\
irrelevant & 0.1 & trace-P & 14473 & 6663 & 30/30 \\
irrelevant & 0.1 & outcome-P & 230 & 102 & 30/30 \\
irrelevant & 0.1 & outcome-S & 576 & 599 & 30/30 \\
irrelevant & 0.1 & prefix-P & 14473 & 6663 & 30/30 \\
dynamic & none & trace-retain & 7234 & 10653 & 30/30 \\
dynamic & none & trace-reset & 7345 & 10008 & 30/30 \\
dynamic & none & outcome-retain & 7234 & 10653 & 30/30 \\
dynamic & none & outcome-reset & 6321 & 9437 & 30/30 \\
dynamic & none & prefix-retain & 7234 & 10653 & 30/30 \\
dynamic & none & prefix-reset & 7345 & 10008 & 30/30 \\
dynamic & changed & trace-retain & 12665 & 22812 & 30/30 \\
dynamic & changed & trace-reset & 12689 & 22007 & 30/30 \\
dynamic & changed & outcome-retain & 12665 & 22812 & 30/30 \\
dynamic & changed & outcome-reset & 11572 & 21389 & 30/30 \\
dynamic & changed & prefix-retain & 12665 & 22812 & 30/30 \\
dynamic & changed & prefix-reset & 12689 & 22007 & 30/30 \\
dynamic & unchanged & trace-retain & 9045 & 18043 & 30/30 \\
dynamic & unchanged & trace-reset & 12744 & 22341 & 30/30 \\
dynamic & unchanged & outcome-retain & 9045 & 18043 & 30/30 \\
dynamic & unchanged & outcome-reset & 11661 & 21740 & 30/30 \\
dynamic & unchanged & prefix-retain & 9045 & 18043 & 30/30 \\
dynamic & unchanged & prefix-reset & 12744 & 22341 & 30/30 \\
signed-jester & signed & trace-P & 95324 & 122907 & 2/30 \\
signed-jester & signed & outcome-P & 91973 & 117466 & 4/30 \\
signed-jester & signed & prefix-P & 95324 & 122907 & 2/30 \\
\bottomrule
\end{longtable}